\renewcommand{\vec}[1]{\mathbf{#1}}%
\newcommand{\notimplies}{%
  \mathrel{{\ooalign{\hidewidth$\not\phantom{=}$\hidewidth\cr$\implies$}}}}
\newcommand{\GNN}{\mathrm{GNN}}
\newcommand{\degree}{\mathit{deg}}
\newcommand{\card}{\mathit{card}}
\newcommand{\first}[2]{\textcolor{red}{$\mathbf{{#1} {\scriptstyle \pm {#2}}}$}}%
\newcommand{\second}[2]{\textcolor{blue}{$\mathbf{{#1} {\scriptstyle \pm {#2}}}$}}%
\newcommand{\third}[2]{\textcolor{violet}{$\mathbf{{#1} {\scriptstyle \pm {#2}}}$}}%
\newtheorem{theorem}{Theorem}[section]
\newtheorem{lemma}[theorem]{Lemma}
\newtheorem{proposition}[theorem]{Proposition}
\newtheorem{definition}[theorem]{Definition}
\title{Graph Neural Networks for Link Prediction with Subgraph Sketching}
\author{Benjamin P. Chamberlain \thanks{Equal contribution. $^\dagger$Work done while at Twitter Inc. $^\ddagger$\texttt{benjamin.chamberlain@gmail.com}.} \footnotemark[2] \footnotemark[3] \\
Charm Therapeutics\\
%\texttt{benjamin.chamberlain@gmail.com}\\
\and
Sergey Shirobokov \footnotemark[1] \footnotemark[2] \\
ShareChat AI
\and
Emanuele Rossi \footnotemark[2] \\
Imperial College London \\
\and
Fabrizio Frasca \footnotemark[2] \\
Imperial College London \\
\and Thomas Markovich \\
Twitter Inc. \\
\and Nils Hammerla \\
Coding.bio \\
\and
Michael M. Bronstein   \footnotemark[2] \\
University of Oxford\\
\and 
Max Hansmire \\
Twitter Inc. \\
}            
\begin{document}

\maketitle

\begin{abstract}
Many Graph Neural Networks (GNNs) perform poorly compared to simple heuristics on Link Prediction (LP) tasks. This is due to limitations in expressive power such as the inability to count triangles (the backbone of most LP heuristics) and because they can not distinguish automorphic nodes (those having identical structural roles). Both expressiveness issues can be alleviated by learning link (rather than node) representations and incorporating structural features such as triangle counts. Since explicit link representations are often prohibitively expensive, recent works resorted to subgraph-based methods, which have achieved state-of-the-art performance for LP, but suffer from poor efficiency due to high levels of redundancy between subgraphs. We analyze the components of subgraph GNN (SGNN) methods for link prediction. Based on our analysis, we propose a novel full-graph GNN called ELPH (Efficient Link Prediction with Hashing) that passes subgraph sketches as messages to approximate the key components of SGNNs without explicit subgraph construction. ELPH is provably more expressive than Message Passing GNNs (MPNNs).  It outperforms existing SGNN models on many standard LP benchmarks while being orders of magnitude faster. However, it shares the common GNN limitation that it is only efficient when the dataset fits in GPU memory. Accordingly, we develop a highly scalable model, called BUDDY, which uses feature precomputation to circumvent this limitation without sacrificing predictive performance. Our experiments show that BUDDY also outperforms SGNNs on standard LP benchmarks while being highly scalable and faster than ELPH.
\end{abstract}

\section{Introduction}

% intro to link prediction
Link Prediction (LP) is an important problem in graph ML with many industrial applications. For example, recommender systems can be formulated as LP; link prediction is also a key process in drug discovery and knowledge graph construction.
% three methods for link prediction
There are three main classes of LP methods: (i) {\bf heuristics} (See Appendix~\ref{app:heuristics}) that estimate the distance between two nodes (e.g. personalized page rank (PPR)~\citep{page1999pagerank} or graph distance~\citep{zhou2009predicting}) or the similarity of their neighborhoods (e.g Common Neighbors (CN), Adamic-Adar (AA)~\citep{adamic2003friends}, or Resource Allocation (RA)~\citep{zhou2009predicting}); (ii) {\bf unsupervised node embeddings} or {\bf factorization} methods, which encompass the majority of production recommendation systems~\citep{koren2009matrix, chamberlain2020tuning}; and, recently, (iii) \textbf{Graph Neural Networks}, in particular of the Message-Passing type (MPNNs)~\citep{Gilmer2017,kipf2017, Hamilton2017}.\footnote{GNNs are a broader category than MPNNs. Since the majority of GNNs used in practice are of the message passing type, we will use the terms synonymously.}
GNNs excel in graph- and node-level tasks, but often fail to outperform node embeddings or heuristics on common LP benchmarks such as the Open Graph Benchmark (OGB)~\citep{hu2020open}. 

% problems with GNN link prediction
There are two related reasons why MPNNs tend to be poor link predictors.
% triangle counting
Firstly, due to the equivalence of message passing to the Weisfeiler-Leman (WL) graph isomorphism test~\citep{xu2018how,morris2019weisfeiler}, standard MPNNs are provably incapable of counting triangles~\citep{chen2020can} and consequently of counting Common Neighbors or computing one-hop or two-hop LP heuristics such as AA or RA. 
% the isomorphic node problem
Secondly, GNN-based LP approaches combine permutation-equivariant structural node representations (obtained by message passing on the graph) and a readout function that maps from two node representations to a link probability.
However, generating link representations as a function of equivariant node representations encounters the problem that all nodes $u$ in the same orbit induced by the graph automorphism group have equal representations. Therefore,  the link probability
%Therefore, the link probability 
$p(u,v)$ is the same for all $u$ in the orbit independent of e.g. the graph distance $d(u,v)$ (Figure \ref{fig:seal_iso_graph}).
%suffers from %severe issues with 
%an expressiveness issue first described by
~\citep{srinivasan2019equivalence}. 
%
%GNNs generate node representations by propagating features from neighbor nodes
%subgraphs 
%
\begin{wrapfigure}{r}{0.32\textwidth}
    \begin{center}\vspace{-1mm}
    \includegraphics[width=0.3\textwidth]{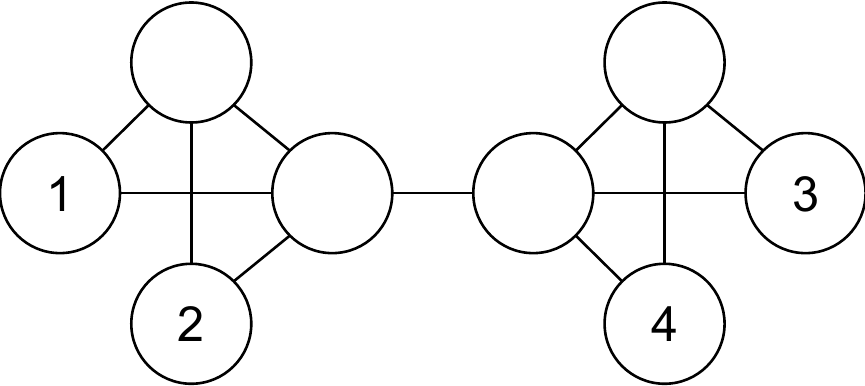}
    % https://docs.google.com/drawings/d/1PqwY6kPMGflxwbf35NnaTNel5Eb2bfHZp6bL5sKu_Ws/edit
    \end{center}\vspace{-2mm}
    \caption{Nodes 2 and 4 are in the same orbit induced by the graph's automorphism group. As a result, a conventional GNN will assign the same  probability to links (1,2) and (1,4).\vspace{-5mm}}
    \label{fig:seal_iso_graph}
\end{wrapfigure}
This is a result of GNN's built-in permutation equivariance, which produces equal %structural
representations for any nodes whose enclosing subgraphs (corresponding to the receptive field of the GNN) are isomorphic.\footnote{More precisely, WL-equivalent, which is a necessary but insufficient condition for isomorphism. }   
We refer to this phenomenon as the \textit{automorphic node problem} and define \textit{automorphic nodes} (denoted $u \cong v$) to be those nodes that are indistinguishable by means of a given $k$-layer GNN.  
On the other hand, 
transductive node embedding methods such as TransE~\citep{bordes2013translating} and DeepWalk~\citep{perozzi2014deepwalk}, or matrix factorization~\citep{koren2009matrix} do not suffer from this problem as the embeddings are not permutation equivariant. 

% fixing GNNs with labeling tricks.
Several methods have been proposed to improve GNN expressivity for LP. Most simply, adding unique node IDs immediately makes all structural node representations distinguishable, but at the expense of generalization~\citep{abboud2021surprising} and training convergence~\citep{sato2021random}. Substructure counts may act as permutation-equivariant approximately unique identifiers~\citep{bouritsas2022improving}, but they require a precomputation step which may be computationally intractable in the worst case. More successfully, a family of structural features, sometimes referred to as \textit{labeling tricks} have recently been proposed that solve the automorphic node problem while still being equivariant and having good generalization~\citep{li2020distance, zhang2021labeling, you2021identity}. However, adding structural features amounts to computing structural node representations that are conditioned on an edge and so can no longer be efficiently computed in parallel. 
% subgraph methods
For the purpose of tractability, state-of-the-art methods for LP restrict computation to subgraphs enclosing a link, transforming link prediction into {\em binary subgraph classification}~\citep{zhang2021labeling, zhang2018link, yin2022algorithm}. Subgraph GNNs (SGNN) are inspired by the strong performance of LP heuristics compared to more sophisticated techniques and are motivated as an attempt to learn data-driven LP heuristics. 

% the limitations of subgraph methods
Despite impressive performance on benchmark datasets, SGNNs suffer from some serious limitations: (i) Constructing the subgraphs is expensive; 
% This problem is so severe that the authors of \citep{zhang2021labeling} could only approximate some OGB benchmarks. 
(ii) Subgraphs are irregular and so batching them is inefficient on GPUs (iii); Each step of inference is almost as expensive as each training step because subgraphs must be constructed for every test link. These drawbacks preclude many applications, where scalability or efficient inference are required. 
% why SEAL doesn't work for recs
% Specifically, most large scale recommender systems rely on a node embeddings in a metric space plus an ANN to serve recommendations, but GNN based link predictors generate edge-wise and not node-wise embeddings. For this reason there is no efficient way to look up recommendation candidates using subgraph link prediction methods.

% what we do / contributions
\paragraph*{Main contributions.} 
(i) We analyze the relative contributions of SGNN components and reveal which properties of the subgraphs are salient to the LP problem.
(ii) Based on our analysis, we develop an MPNN (ELPH) that passes subgraph sketches as messages. The sketches allow the most important qualities of the subgraphs to be summarized in the nodes. The resulting model removes the need for explicit subgraph construction and is a full-graph MPNN with the similar complexity to GCN. 
(iii) We prove that ELPH is strictly more expressive than MPNNs for LP and that it solves the automorphic node problem.
(iv) As full-graph GNNs suffer from scalability issues when the data exceeds GPU memory, we develop BUDDY,  a highly scalable model that precomputes sketches and node features.
(v) We provide an open source Pytorch library for (sub)graph sketching that generates data sketches via message passing on the GPU. 
% In this paper, we first analyze the computational complexity of the components of SGNNs and their contribution to the predictive performance in LP problems. We find that augmenting the graph with structural features is vital for GNN-based link prediction, but that propagating structural features over subgraphs is often detrimental. We find that a major computational bottleneck is in constructing structural features that are conditioned on edges. To address this, we show a method for constructing structural features based on data sketches that has better computational complexity. Furthermore, while it is necessary to propagate the given node features, this can be done globally as a preprocessing step, without constructing subgraphs. The graph level readout function can also be formulated just as effectively as an edge level readout function. Together, these results allow us to develop a GNN-based link prediction method that is subgraph-free while at the same time benefiting from high generalization and expressiveness. 
%
Experimental evaluation shows that our methods compares favorably to state-of-the-art both in terms of accuracy and speed. 

% hash based labeling scheme
% One of our major contributions is the development of a new labeling scheme for GNN based link prediction. While previous methods require the construction of subgraphs for each link under consideration (an $O(n^2)$ operation) we encode the labeling scheme at the node level using $O(n)$ space complexity with inference time equivalent to a multilayer perceptron.

\section{Preliminaries} \label{sec:preliminaries}

\paragraph{Notation.} Let $G=(\mathcal{V},\mathcal{E})$ be an undirected graph comprising the set of $n$ nodes (vertices) $\mathcal{V}$ and $e$ links (edges) $\mathcal{E}$. 
We denote by $d(u,v)$ the {\em geodesic distance} (shortest walk length) between nodes $u$ and $v$. 
Let $S = (\mathcal{V}_S \subseteq \mathcal{V} ,\mathcal{E}_S \subseteq \mathcal{E})$ be a node-induced subgraph  of $G$ 
satisfying $(u,v) \in \mathcal{E}_S $ iff $(u,v) \in \mathcal{E}$ for any $u,v \in \mathcal{V}_S$. 
We denote by 
$S^k_{uv}=(\mathcal{V}_{uv},\mathcal{E}_{uv})$ a $k$-hop subgraph enclosing the link $(u,v)$, where $\mathcal{V}_{uv}$ is the union of the $k$-hop neighbors of $u$ and $v$ and $\mathcal{E}_{uv}$ is the union of the links that can be reached by a $k$-hop walk originating at $u$ and $v$ (for simplicity, where possible, we omit $k$). 
Similarly, $S^k_{u}$ is the $k$-hop subgraph enclosing node $u$. 
The given features of nodes $\mathcal{V}_{uv}$ are denoted by $\mathbf{X}_{uv}$ and the derived structure features by $\mathbf{Z}_{uv}$. The probability of a link $(u,v)$ is denoted by $p(u,v)$. When nodes $u$ and $v$ have isomorphic enclosing subgraphs (i.e., $S_u \cong S_v$), we write $u \cong v$. 

\paragraph{Sketches for Intersection Estimation.} \label{sec:sketches_main}
We use two sketching techniques, \textit{HyperLogLog}~\citep{hyperloglog,heule2013hyperloglog} and \textit{MinHashing}~\citep{broder1997resemblance}. 
Given sets $\mathcal{A}$ and $\mathcal{B}$, \textit{HyperLogLog} efficiently estimates the cardinality of the union $|\mathcal{A} \cup \mathcal{B}|$ and \textit{MinHashing} estimates the Jaccard index $J(\mathcal{A},\mathcal{B}) = |\mathcal{A} \cap \mathcal{B}| / |\mathcal{A} \cup \mathcal{B}|$. We combine these approaches to estimate the intersection of node sets produced by graph traversals ~\citep{hyperloglogminhash}. 
These techniques represent sets as sketches, where the sketches are much smaller than the sets they represent. Each technique has a parameter $p$ controlling the trade-off between the accuracy and computational cost. Running times for merging two sets, adding an element to a set, and extracting estimates only depend on $p$, but they are constant with respect to the size of the set. Importantly, the sketches of the union of sets are given by permutation-invariant operations (element-wise $\min$ for minhash and element-wise $\max$ for hyperloglog).
More details are provided in Appendix~\ref{sec:sketches}.
% hyperloglog

\paragraph{Graph Neural Networks for Link Prediction.}

Message-passing GNNs (MPNNs) are parametric functions of the form $\mathbf{Y} = \GNN(\mathbf{X})$, where $\mathbf{X}$ and $\mathbf{Y}$ are matrix representations (of size $n\times d$ and $n\times d'$, where $n$ is the number of nodes and $d,d'$ are the input and output  dimensions, respectively) of input and output node features. 
{\em Permutation equivariance} implies that $\boldsymbol{\Pi}\GNN(\mathbf{X}) = \GNN(\boldsymbol{\Pi}\mathbf{X})$ for any $n\times n$ node permutation matrix $\boldsymbol{\Pi}$. 
%
%learn equivariant structural node representations that satisfy $\Pi \left(\GNN(X)\right) = \GNN(\Pi(X))$ where $\Pi$ is a permutation operator. 
%
% In GNNs, this is achieved by message passing, applying a local permutation-invariant aggregation function to the neighbor features. %
This is achieved in GNNs by applying a local permutation-invariant aggregation function $\square$ (typically sum, mean, or max) to the neighbor features of every node (`message passing'), resulting in a node-wise update of the form
\begin{equation} \label{eq:mpnn_link_prediction}
%\mathbf{x}_u^{(l)} &=\gamma^{(l)}\left(\mathbf{x}_u^{(l-1)}, \square_{v \in \mathcal{N}(u)} \phi^{(l)}\left(\mathbf{x}_u^{(l-1)}, \mathbf{x}_v^{(l-1)}\right)\right) \\
%p(u,v) &= \psi \left(\mathbf{x}^k_u \odot \mathbf{x}^k_v \right)
\mathbf{y}_u =\gamma\left(\mathbf{x}_u, \square_{v \in \mathcal{N}(u)} \phi\left(\mathbf{x}_u, \mathbf{x}_v\right)\right), 
%\\
%p(u,v) &= \psi \left(\mathbf{x}^k_u \odot \mathbf{x}^k_v \right)
\end{equation}
where $\phi, \gamma$ are learnable functions. 
MPNNs are upperbounded in their discriminative power by the Weisfeiler-Leman isomorphism test (WL) \citep{weisfeiler1968reduction}, a procedure iteratively refining the representation of a node by hashing its star-shaped neighborhood. As a consequence, since WL always identically represents automorphic nodes ($u \cong v$), any MPNN would do the same: $\mathbf{y}_u = \mathbf{y}_v$.
%
%
%Given the node representations $\mathbf{Y}$ computed by a GNN, 
% link probabilities can then be computed as $p_{uv} = R(\mathbf{y}_u,\mathbf{y}_v)$, where $R$ is a learnable readout function.  
%
%
%$\mathbf{x}_u = \mathbf{x}_v \forall \, u \cong v$\footnote{The situation is slightly worse in practice as there are cases where GNNs can not distinguish non-isomorphic subgraphs}, where $\mathbf{x}'$ are the node representations output by a GNN. 
%
%For link prediction, 
%A GNN will then produce 
%
%
%
%For this property to hold, one must have $\mathbf{y}_u = \mathb  f{y}_v \, \forall \, u\cong v$. 
%
Given the node representations $\mathbf{Y}$ computed by a GNN, 
 link probabilities can then be computed as $p(u, v) = R(\mathbf{y}_u,\mathbf{y}_v)$, where $R$ is a learnable readout function with the property that $R(\mathbf{y}_u,\mathbf{y}_v) = 
 R(\mathbf{y}_u,\mathbf{y}_w)
 $ 
 for any $v\cong w$. 
 %$R(\mathbf{x}_u,\mathbf{x}_i) = R(\mathbf{x}_u,\mathbf{x}_j) \forall i \cong j$. 
 This node automorphism problem is detrimental for LP as $p(u,v)=p(u,w)$ if $v \cong w$ even when $d(u,v) \gg d(u,w)$. As an example, in Figure\ref{fig:seal_iso_graph} $2 \cong 4$, therefore $p(1,2)=p(1,4)$ while $d(1,2)=2 < d(1,4)=3$. As a result, a GNN may suggest to link totally unrelated nodes ($v$ may even be in a separate connected component to $u$ and $w$, but still have equal probability of connecting to $u$~\citep{srinivasan2019equivalence}). 

% Node embedding models (e.g. Deepwalk~\citep{perozzi2014deepwalk}) are not equivariant as equivariance is broken (usually) by some random process that causes isomorphic nodes to acquire different representations. However, the expectation over the random process is once again equivariant and so node embeddings can be thought of as samples from structural (equivariant) distributions.

% intro to subgraph methods
\paragraph*{Subgraph GNNs (SGNN).}~\citep{zhang2021labeling, zhang2018link, yin2022algorithm} convert LP into binary graph classification. For a pair of nodes $u,v$ and the enclosing subgraph $S_{uv}$, SGNNs produce node representations $\vec{Y}_{uv}$ and one desires $R(\vec{Y}_{uv})=1$ if $(u,v) \in \mathcal{E}$ and zero otherwise. 
In order to resolve the automorphic node problem, node features are augmented with structural features \citep{bouritsas2022improving} that improve the ability of networks to count substructures \citep{chen2020can}. % and resolve .
% motivation of subgraph methods
SGNNs were originally motivated by the strong performance of heuristics on benchmark datasets and attempted to learn generalized heuristics. When the graph is large it is not tractable to learn heuristics over the full graph, but global heuristics can be well approximated from subgraphs that are augmented with structural features with an approximation error that decays exponentially with the number of hops taken to construct the subgraph~\citep{zhang2018link}.

\section{Analyzing Subgraph Methods for Link Prediction}
\label{sec:analysis}

 SGNNs can be decomposed into the following steps: (i) subgraph extraction around every pair of nodes for which one desires to perform LP; (ii) augmentation of the subgraph nodes with structure features; (iii) feature propagation over the subgraphs using a GNN, and (iv) learning a graph-level readout function to predict the link. 
 Steps (ii)--(iv) rely on the existence of a set of subgraphs (i), which is either constructed on the fly or as a preprocessing step.
%
% preview of this section
In the remainder of this section we discuss the inherent complexity of each of these steps and perform ablation studies with the goal of understanding the relative importance of each. % We start by discussing the construction of the subgraphs themselves.

% \paragraph{Subgraph Generation}
% \label{sec:subgraph_generation}

\begin{figure}
    \centering\vspace{-3mm}
\begin{subfigure}[t]{0.45 \textwidth}
    \includegraphics[width=\linewidth]{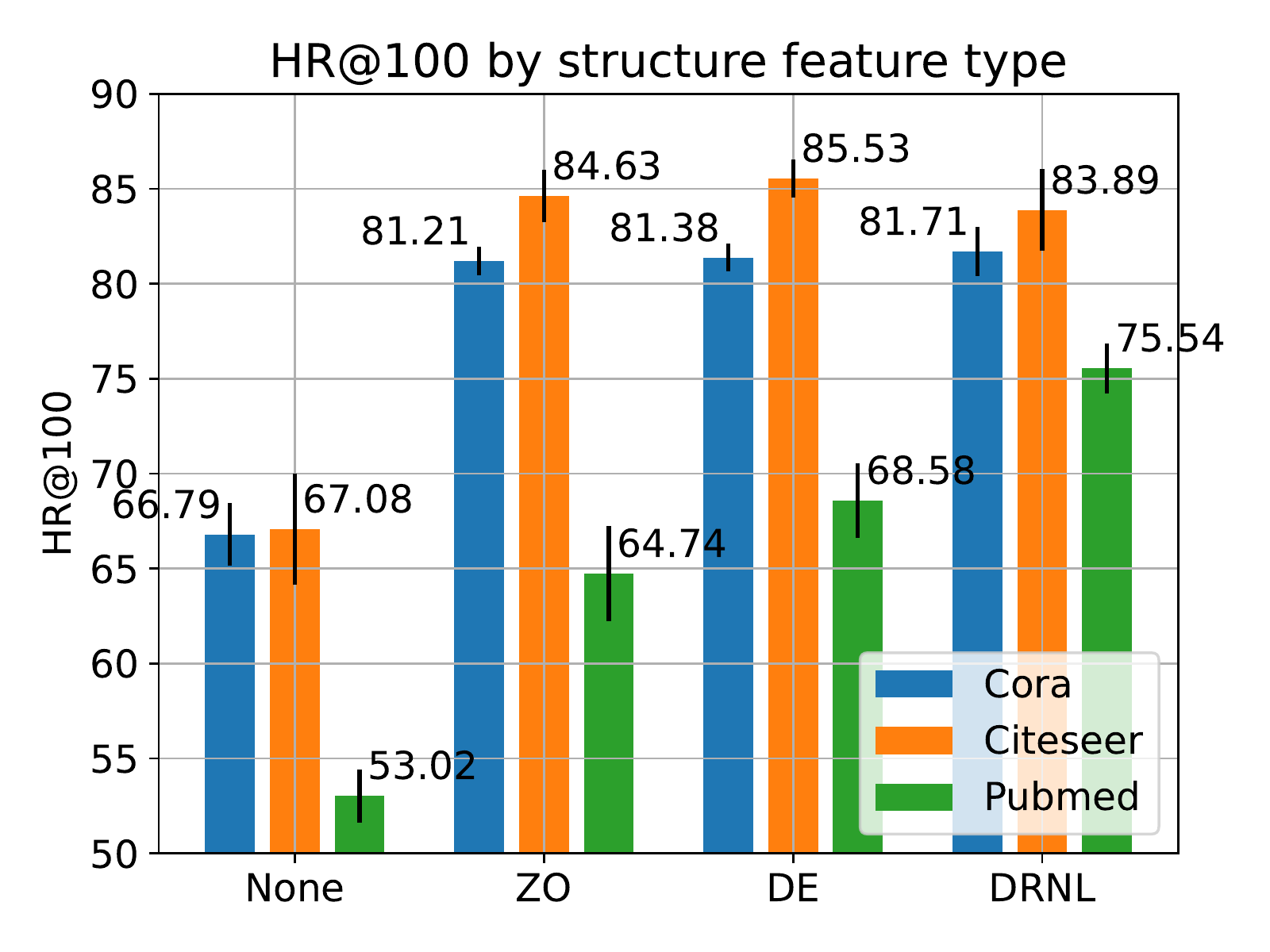}
    % \caption{Propagating structure features with a GNN or an MLP on structure feature counts.}   
    \vspace{-6mm}
    \caption{\label{fig:lab_abl}}
    
\end{subfigure}%
\hspace{5mm}
\begin{subfigure}[t]{0.45 \textwidth}
    \centering
    \includegraphics[width=\linewidth]{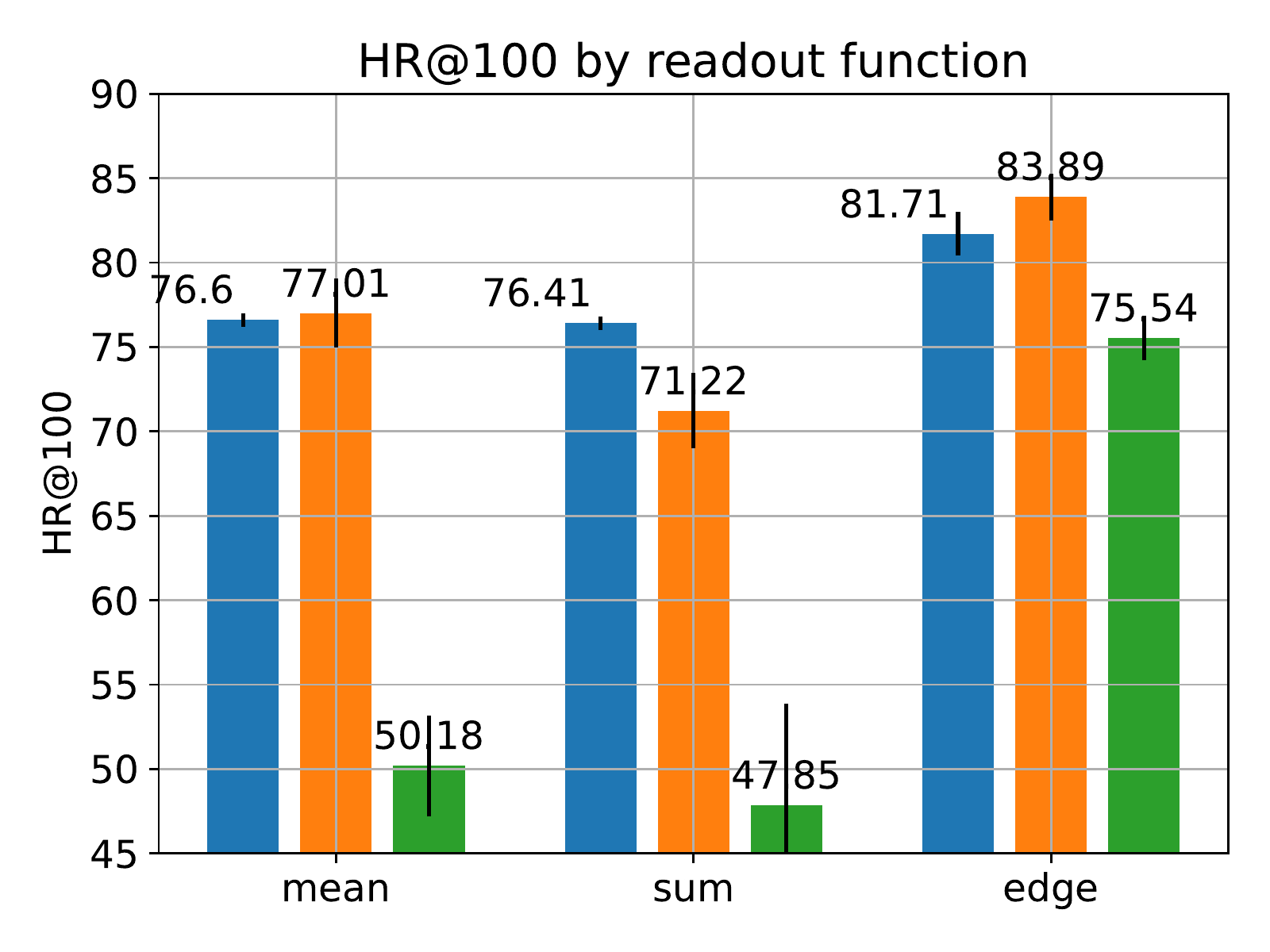}
    % \caption{Propagating node features and combining with and MLP on structure feature counts.}
    \vspace{-6mm}
    \subcaption{\label{fig:readout_ablation}}
    
\end{subfigure}
\vspace{-3mm}
\caption{(a) Structure features in SGNNs. (b) LP readout function over the output of all nodes in $S_{uv}$ (sum or mean) or just $u$ and $v$ (edge).}
% \caption{Examining the extent to which an SGNN is needed to propagate features. (left) An MLP on structure feature counts outperforms SGNN propagation in two of three datasets. (right) An MLP on structure feature counts and raw node features performs poorly (no prop), but performance can be recovered if node features are propagated over the graph in preprocessing (preprop.}
\vspace{-5mm}
\end{figure}

% For regular graphs with degree $\degree$, subgraph generation has $\mathcal{O}(\degree^k)$ time complexity. However, for complex networks with power law degree distributions this becomes $\mathcal{O}(|\mathcal{E}|)$ (See Appendix~\ref{sec:subgraph_complexity}). 
% % precomputation
% Subgraphs can be pre-computed, but due to high levels of redundancy the subgraph dataset is much larger ($\degree^k$ times larger for regular graphs) than the original dataset and for moderate sized datasets may exceed available memory.
% % subsampling
% Time and space complexity can be reduced by sampling subgraphs by fixing the number of neighbors in each hop, or restricting the number of hops to one, but both cases reduce expressiveness and invalidate the $\gamma-$decaying heuristic theory of~\citep{zhang2018link}.

\paragraph{Structure Features}
\label{sec:structure_features}

\begin{wrapfigure}{r}{0.45\textwidth}
    %\centering
    \vspace{-3mm}
    \includegraphics[width=0.45\textwidth]{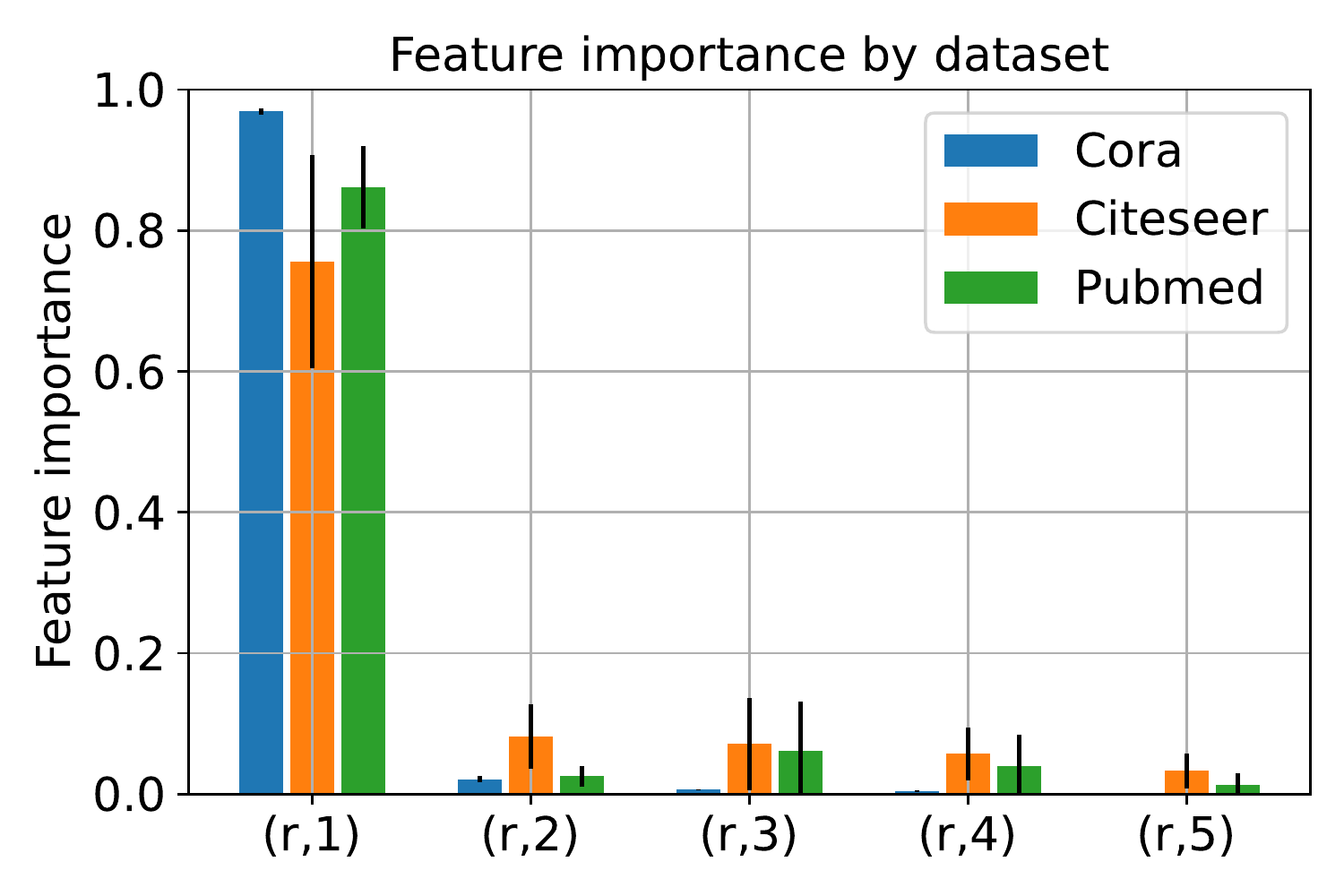}\vspace{-2mm}
    \caption{The importance of DRNL structure features. Importance is based on the weights in a logistic regression model using all DRNL features without node features.}
    \label{fig:feature_importance}
    \vspace{-3mm}
\end{wrapfigure}

% intro to structure features
Structure features address limitations in GNN expressivity stemming from the inherent inability of message passing to distinguish automorphic nodes. In SGNNs, permutation-equivariant distances $d(u,i)$ and $d(v,i) \, \forall i \in \mathcal{V}_{uv}$ are used. The three most well known are Zero-One (ZO) encoding~\citep{you2021identity}, Double Radius Node Labeling (DRNL)~\citep{zhang2018link} and Distance Encoding (DE)~\citep{li2020distance}.
% discussion of labeling schemes
To solve the automorphic node problem, all that is required is to distinguish $u$ and $v$ from $\mathcal{V}_{uv} \setminus \{u,v\}$, which ZO achieves with binary node labels. 
DRNL has $z_{u}=z_{v}=1$ and $z_j=f(d(u,j),d(v,j))>1$, where $f: \mathbb{N}^2 \to \mathbb{N}$ is a bijective map. Distance Encoding (DE) generalizes DRNL; each node is encoded with a tuple $z_j = (d(u,j),d(v, j))$ (See Figure~\ref{fig:DE}). Both of these schemes therefore include a unique label for triangles / common neighbors (See Appendix~\ref{app:labeling_schemes} for more details on labeling schemes). The relative performance of ZO, DRNL, DE and no structure features is shown in Figure~\ref{fig:lab_abl}. The use of structure features greatly improves performance and DRNL and DE slightly outperform ZO, with a more pronounced outperformance for the larger Pubmed dataset. 
% downsides of structure features
It is important to note that structure features are conditioned on edges and so can not be easily paralellized in the same way that node features usually are in GNNs and must be calculated for each link at both training and inference time. In particular DRNL requires two full traversals of each subgraph, which for regular graphs is $\mathcal{O}(\degree^k)$, but for complex networks becomes $O(|\mathcal{E}|)$.

% ablation of structure features
% In Figure~\ref{fig:feature_importance} we investigate the relative importance of DRNL structure features. Feature importance is measured using the weights in a logistic regression link prediction model that uses only structure feature counts as inputs. The Figure indicates that most of the predictive performance is concentrated in low distance DRNL labels. The importance assigned to higher distances (not shown) was negligible.    

In Figure~\ref{fig:feature_importance} we investigate the relative importance of DRNL structure features. Feature importance is measured using the weights in a logistic regression link prediction model with only structure feature counts as inputs. We then sum up feature importances corresponding to different max distance $r$ and normalize the total sum to one. The Figure indicates that most of the predictive performance is concentrated in low distances.

% \citet{zhang2021labeling} define two labeling trick axioms to be equivariance and discrimination of the two target nodes. Valid labeling tricks include: zero-one, the simplest labeling trick, where  $l_{u}=l_{v}=1 , l_j=0 \,\, \forall j \notin (u,v)$. 

% \begin{figure}
%     \centering
%     \includegraphics[width=0.7\textwidth]{labeling_ablation.pdf}
%     \caption{Adding structure features to a subgraph GNNs.}
%     \label{fig:lab_abl}
%     % \vspace{-5mm}
% \end{figure}

\paragraph{Propagation / GNN}
\label{sec:sg_gnn}

% Node features and structure features are concatenated and taken as the input to a GNN operating on $S_{uv}$, which outputs a node embeddings for $\mathcal{V}_{uv}$.
% how structure features are used
In SGNNs, structure features are usually embedded into a continuous space, concatenated to any node features
 and propagated over subgraphs. While this procedure is necessary for ZO encodings, it is less clear why labels that precisely encode distances and are directly comparable as distances should be embedded in this way. 
% GNN for structure features
Instead of passing embedded DE or DRNL structure features through a GNN, we fixed the number of DE features by setting the max distance to three and trained an MLP directly on the counts of these nine features (e.g. (1,1): 3, (1,2): 1, etc.). Figure~\ref{fig:structure_prop} shows that doing so, while leaving ceteris paribus (node features still propagated over the subgraph) actually improves performance in two out of three datasets.
% GNN for node features
We also investigated if any features require SGNN propagation by passing both raw node features and structure feature counts through an MLP. The results in the right columns of Figure~\ref{fig:feature_prop} indicate that this reduces performance severely, but by pre-propagating the features as $\mathbf{x}'_u = 1/|N(u)|\sum_{i \in N(u)}\mathbf{x}_i$ (middle columns) it is possible to almost recover the performance of propagation with the SGNN (left columns). 

% and show through an ablation study that they can be replaced by a single linear layer (see Section~\ref{sec:ablations}). Furthermore, we use this linear architecture to study the importance of the various labels induced by the DRNL node labeling scheme, which is known to be the most performant labeling scheme~\citep{zhang2021labeling}. We find that only the shortest distances contribute meaningfully to the predictive performance (see Section~\ref{sec:feature_importance}). 

\begin{figure}
    \centering
    \vspace{-5mm}
\begin{subfigure}[t]{0.45 \textwidth}
    \includegraphics[width=\linewidth]{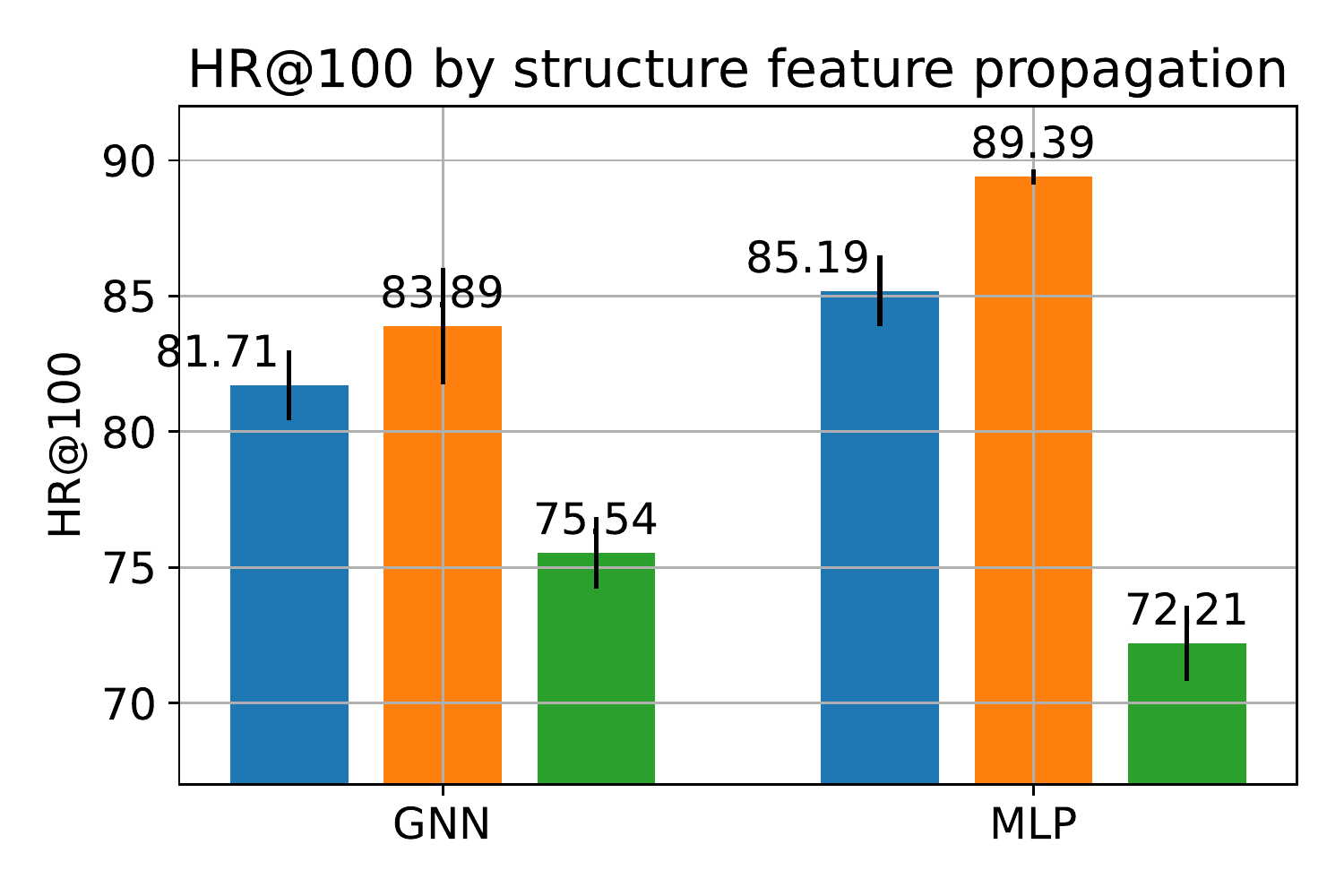}
    % \caption{Propagating structure features with a GNN or an MLP on structure feature counts.}
    \vspace{-5mm}
    \caption{Propagating structure features}
    \label{fig:structure_prop}
\end{subfigure}%
\hspace{5mm}
\begin{subfigure}[t]{0.45 \textwidth}
    \centering
    \includegraphics[width=\linewidth]{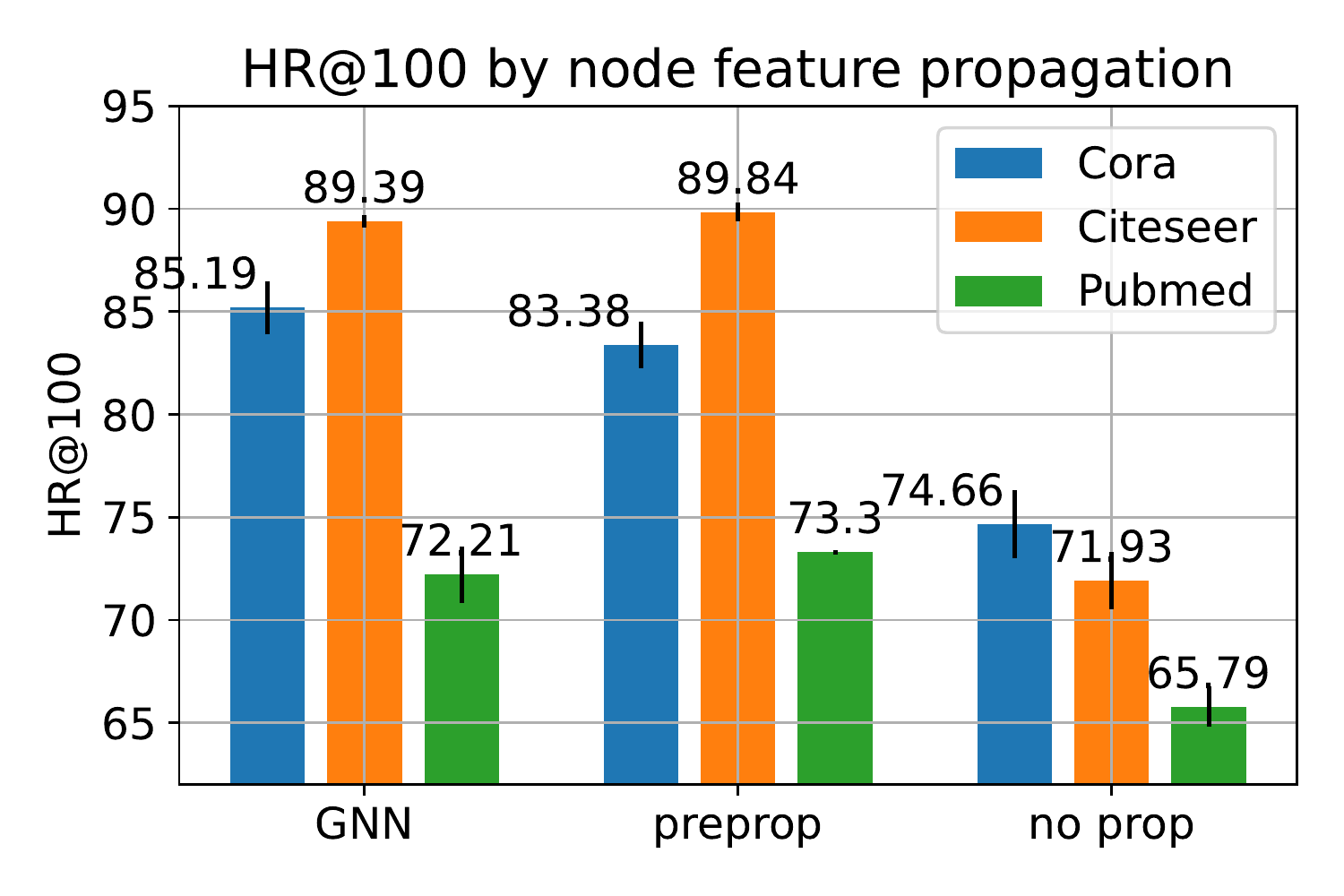}
    % \caption{Propagating node features and combining with and MLP on structure feature counts.}
    \vspace{-5mm}
    \caption{Propagating node features.}
    \label{fig:feature_prop}
\end{subfigure}
\vspace{-1mm}
\caption{Examining the extent to which an SGNN is needed to propagate features. (a) An MLP on structure feature counts outperforms SGNN propagation in two of three datasets. (b) An MLP on structure feature counts and raw node features performs poorly (no prop), but performance can be recovered if node features are propagated over the graph in preprocessing (preprop).}
\vspace{-3mm}
\end{figure}

\begin{wrapfigure}{r}{0.3\textwidth}%\vspace{-0.7cm}
    \begin{center}\vspace{-7mm}
    \includegraphics[width=0.3\textwidth]{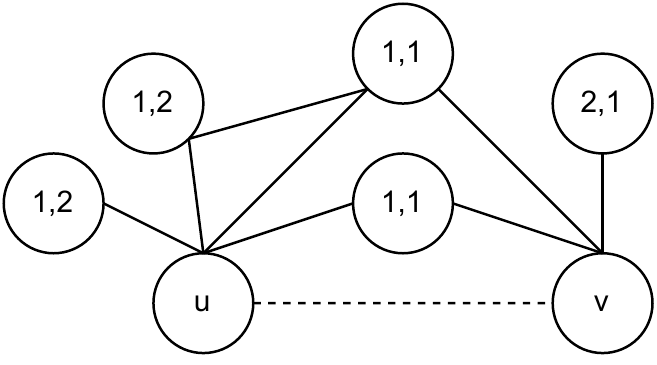}
    % https://docs.google.com/drawings/d/1PqwY6kPMGflxwbf35NnaTNel5Eb2bfHZp6bL5sKu_Ws/edit
    \end{center}
    \vspace{-3mm}
    \caption{The DE node labeling scheme for link $(u,v)$ \label{fig:DE}}
    \vspace{-5mm}
\end{wrapfigure}

\paragraph{Readout / Pooling Function}
\label{sec:sg_readout}
Given SGNN node representations  $\vec{Y}_ {uv}$ on the subgraph, a readout function $R(S_{uv},\vec{Y}_{uv})$ maps a representations to link probabilities. For graph classification problems, this is most commonly done by pooling node representations (graph pooling) typically with a mean or sum operation plus an MLP. For LP, an alternative is edge pooling with $R(\mathbf{y}_u,\mathbf{y}_v)$, usually with the Hadamard product. A major advantage of edge pooling is that it can be formulated subgraph free. Figure~\ref{fig:readout_ablation} indicates that edge pooling produces better predictive performance than either mean or sum pooling across all nodes in $\mathcal{V}_{uv}$.

% \begin{wrapfigure}{r}{0.5\textwidth}
%     \vspace{-8mm}
%     \centering
%     \includegraphics[width=0.5\textwidth]{readout_ablation.pdf}
%     \caption{The affect of a GNN readout function over the output of all nodes in the subgraph $S_{uv}$ (sum or mean) or just the nodes $u$ and $v$ (edge).}
%     \label{fig:readout_ablation}
%     \vspace{-5mm}
% \end{wrapfigure}

\paragraph{Analysis Summary}
The main results of Section~\ref{sec:analysis} are that (i) The inclusion of structure features leads to very large improvements across all datasets (Figure~\ref{fig:lab_abl}); (ii) The processing of these features, by embedding them and propagating them with an SGNN is sub-optimal both in terms of efficiency and performance (Figure~\ref{fig:structure_prop}); (iii) Most of the importance of the structure features is located in the lowest distances (Figure~\ref{fig:feature_importance}); and (iv) edge level readout functions greatly outperform mean or sum pooling over subgraphs (Figure~\ref{fig:readout_ablation}).
If, on one hand, subgraphs are employed as a tractable alternative to the full graph for each training edge, on the other, generating them remains an expensive operation ($\mathcal{O}(\degree^k)$ time complexity for regular graphs and $\mathcal{O}(|\mathcal{E}|)$ for complex networks with power law degree distributions \footnote{Subgraphs can be pre-computed, but the subgraphs combined are much larger than the original dataset, exceeding available memory for even moderately-sized datasets.}, see Appendix~\ref{sec:subgraph_complexity}). Within this context, our analysis shows that if the information necessary to compute structure features for an edge can be encoded in the nodes, then it is possible to recover the predictive performance of SGNNs without the cost of generating a different subgraph for each edge. We build upon this observation to design an efficient yet expressive model in Section~\ref{sec:link_prediction_with_subgraph_sketching}.

% Subgraphs can be pre-computed, but the subgraphs are much larger than the original dataset, exceeding available memory for even moderately-sized datasets. Time and space complexity can be reduced by fixing the number of neighbors in each hop, or restricting the number of hops to one, but this invalidates the $\gamma-$decaying heuristic theory~\citep{zhang2018link}.
\section{Link Prediction with Subgraph Sketching}
\label{sec:link_prediction_with_subgraph_sketching}
We now develop a full-graph GNN model that uses node-wise subgraph sketches to approximate structure features such as the counts of DE and DRNL labels, which our analysis indicated are sufficient to encompass the salient patterns governing the existence of a link (Figure~\ref{fig:structure_prop}).

% \subsection{Structure Features with Subgraph Sketches} 

% relationship between DE labels and neighbourhood intersections
% Figure~\ref{fig:DE} depicts DE node labels for an edge $(u,v)$ and the corresponding counts vector. 

\subsection{Structure Features Counts}
\label{sec:approximating_structure_features}
Let $\mathcal{A}_{uv}[d_u, d_v]$ be the number of ($d_u$, $d_v$) labels for the link ($u$, $v$), which is equivalent to the number of nodes at distances exactly $d_u$ and $d_v$ from $u$ and $v$ respectively (See Figure~\ref{fig:DE}). 
We compute $\mathcal{A}_{uv}[d_u, d_v]$ for all $d_u$, $d_v$ less than the receptive field $k$, which guarantees a number of counts that do not depend on the graph size and mitigates overfitting. To alleviate the loss of information coming from a fixed $k$, we also compute $\mathcal{B}_{uv}[d] = \textstyle \sum_{d_v=k+1}^{\infty}\mathcal{A}_{uv}[d, d_v]$, counting the number of nodes at distance $d$ from $u$ and at distance $>k$ from $v$. 
We compute $\mathcal{B}_{uv}[d]$ for all $1 \leq d \leq k$. Figure \ref{fig:intersections} shows how $\mathcal{A}$ and $\mathcal{B}$ relate to the neighborhoods of the two nodes. 
Overall, this results in $k^2$ counts for $\mathcal{A}$ and $2k$ counts for $\mathcal{B}$ ($k$ for the source and $k$ for the destination node), for a total of $k(k+2)$ count features.
These counts can be computed efficiently without constructing the whole subgraph for each edge. Defining $N_{d_u,d_v}(u,v) \triangleq N_{d_u}(u)\cap N_{d_v}(v)$, we have
\begin{align}
\mathcal{A}_{uv}[d_u, d_v] &= |N_{d_u,d_v}(u,v)|- \!\!\!\!\!\!\!\!\!\!\!\! \sum_{x \leq d_u, y \leq d_v, (x,y) \neq (d_u,d_v)} \!\!\!\!\!\!\!\!\!\!\!\! |N_{x,y}(u,v)|\\
{\mathcal{B}}_{uv}[d] &= |N_{d}(u)| -  {\mathcal{B}}_{uv}[d-1] - \sum_{i=1}^d\sum_{j=1}^{d} {\mathcal{A}}_{uv}[i, j]
\end{align}
where $N_{d}(u)$ are the $d$-hop neighbors of $u$ (i.e. nodes at distance $\leq d$ from $u$). 
%
% It is clear that label counts for a given edge can be computed as the intersection of the neighborhoods of nodes $u$ and $v$. The number of $(1,1)$ labels is the common neighbor count given by the intersection of the 1-hop neighborhood set of nodes $u$ and $v$. Higher distance labels can be arrived at by simple arithmetic operations on neighborhood intersections (See Section~\ref{sec:estimating_intersections}). The cardinality of the intersection of two sets $A$ and $B$ is given by 
% $J(A,B) \cdot |A \cup B|$, which can be efficiently estimated using minhash and hyperloglog sketches.

% explaining how we do this with sketches
\paragraph{Estimating Intersections and Cardinalities}
\label{sec:estimating_intersections}
% definitions
$|N_{d_u,d_v}(u,v)|$ and $|N_d(u)|$ can be efficiently approximated with the sketching techniques introduced in section \ref{sec:preliminaries}.
Let $\mathbf{h}_{u}^{(d)}$ and  $\mathbf{m}_{u}^{(d)}$ be the \emph{HyperLogLog} and \emph{MinHash}  sketches of node $u$'s $d$-hop neighborhood, obtained recursively from the initial node sketches $\mathbf{h}_{u}^{(0)}$ and $\mathbf{m}_{u}^{(0)}$ using the relations $\mathbf{m}_{u}^{(d)}=\min_{v \in \mathcal{N}(u)} \mathbf{m}_{v}^{(d-1)}$ and $\mathbf{h}_{u}^{(d)}=\max_{v \in \mathcal{N}(u)} \mathbf{h}_{v}^{(d-1)}$, where $\min$ and $\max$ are elementwise. We can approximate the intersection of neighborhood sets as
\begin{align}
|N_{d_u,d_v}(u,v)| &\triangleq |N_{d_u}(u)\cap N_{d_v}(v)|  = \\
             & = J(N_{d_u}(u), N_{d_v}(v)) \cdot  |N_{d_u}(u)\cup N_{d_v}(v)| \\
             & \approx H\left( \mathbf{m}_{u}^{(d_u)}, \mathbf{m}_{v}^{(d_v)}\right) \cdot \card \left(\max(\mathbf{h}_{u}^{(d_u)}, \mathbf{h}_{v}^{(d_v)})\right),
\end{align}
where $H(x,y)=1/n\sum_i^n \delta_{x_i,y_i}$ is the Hamming similarity, $\card$ is the \emph{HyperLogLog} cardinality estimation function (see Appendix~\ref{app:hll}) and $\max$ is taken elementwise. \emph{MinHashing} approximates the Jaccard similarity ($J$) between two sets and \emph{HyperLogLog} approximates set cardinality.
$|N_d(u)|$ can be approximated as $|N_d(u)| \approx \card(\mathbf{h}_{u}^{(d)})$.
The cost of this operation only depends on additional parameters of \emph{HyperLogLog} and \emph{MinHashing} (outlined in more detail in 
Section~\ref{sec:sketches_main}) which allow for a tradeoff between speed and accuracy, but is nevertheless independent of the graph size. Using this approximations we obtain estimates $\hat{\mathcal{A}}$ and $\hat{\mathcal{B}}$ of the structure features counts.

% \paragraph{Structure Features}
% \label{sec:structure_features}

\begin{figure}
    \centering
    \includegraphics[width=0.65\textwidth]{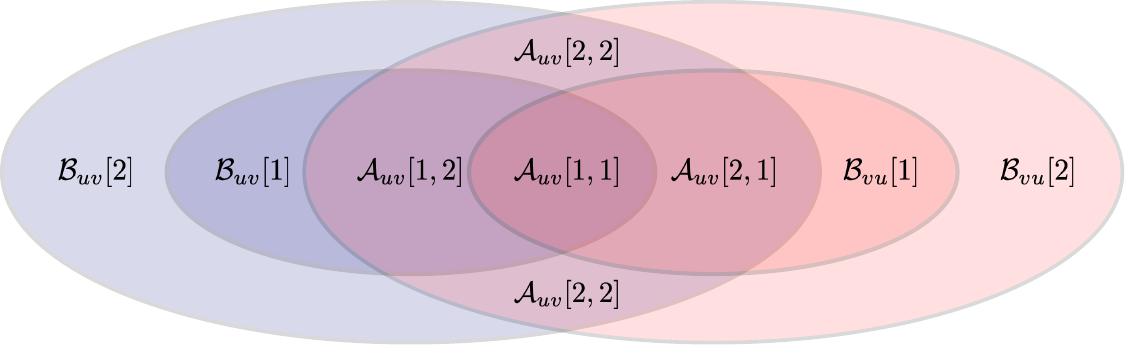}
    \caption{Blue and red concentric circles indicate 1 and 2-hop neighborhoods of $u$ and $v$ respectively. Structure features $\mathcal{A}$ and $\mathcal{B}$ measure the cardinalities of intersections of these neighborhoods.}
    \vspace{-5mm}
    \label{fig:intersections}
\end{figure}

\subsection{Efficient Link Prediction with Hashes (ELPH)}

% {\bf [MB: need to use terminology consistently. GNN=any graph neural networks including higher-order. MPNN=message passing only]}

We present ELPH, a novel MPNN for link prediction. In common with other full-graph GNNs, it employs a feature propagation component with a link level readout function. However, by augmenting the messages with subgraph sketches, it achieves higher expressiveness for the same asymptotic complexity. ELPH's feature propagation fits the standard MPNN formalism~\citep{Gilmer2017}:
\vspace{-1mm}
\begin{align}
\mathbf{m}_{u}^{(l)} &= \min_{v \in \mathcal{N}(u)} \mathbf{m}_{v}^{(l-1)},\quad\quad
\mathbf{h}_{u}^{(l)} = \max_{v \in \mathcal{N}(u)} \mathbf{h}_{v}^{(l-1)} \label{eq:sketches} \\
\mathbf{e}_{u, v}^{(l)} &= \{ \hat{\mathcal{B}}_{uv}[l], \hat{\mathcal{A}}_{uv}[d_u, l], \hat{\mathcal{A}}_{uv}[l, d_v] : \forall d_u,d_v < l \} \label{eq:edge_features} \\
\mathbf{x}_u^{(l)} &=\gamma^{(l)}\left(\mathbf{x}_u^{(l-1)}, \square_{v \in \mathcal{N}(u)} \phi^{(l)}\left(\mathbf{x}_u^{(l-1)}, \mathbf{x}_v^{(l-1)}, \mathbf{e}_{u, v}^{(l)}\right)\right)
\label{eq:aggregation}
% \mathbf{f}_u^{(l)} &= \mathbf{x}_u^{(l)} \quad || \quad \mathbf{z}_{hu}^{(l)} \quad|| \quad \mathbf{z}_{mu}^{(l)}\\
% \mathbf{f}_u^{(l)} &=\gamma^{(l)}\left(\mathbf{f}_u^{(l-1)}, \square_{v \in \mathcal{N}(u)} \phi^{(l)}\left(\mathbf{f}_u^{(l-1)}, \mathbf{f}_v^{(l-1)}, \mathbf{e}_{u, v}^{(l-1)}\right)\right),
\end{align}
where $\phi, \gamma$ are learnable functions, $\square$ is a local permutation-invariant aggregation function (typically sum, mean, or max), $\mathbf{x}_u^{(0)}$ are the original node features, and $\mathbf{m}_u^{(0)}$ and $\mathbf{h}_u^{(0)}$ are the \textit{MinHashing} and \textit{HyperLogLog} single node sketches respectively. Minhash and hyperloglog sketches of $N_l(u)$ are computed by aggregating with $\min$ and $\max$ operators respectively the sketches of $N_{l-1}(u)$ (Eq. \ref{eq:sketches}). These sketches can be used to compute the intersection estimations $\hat{\mathcal{B}}_{uv}$ and $\hat{\mathcal{A}}_{uv}$ up to the $l$-hop neighborhood, which are then used as edge features (Eq. \ref{eq:edge_features}). Intuitively, the role of the edge features is to modulate message transmission based on local graph structures, similarly to how attention is used to modulate message transmission based on feature couplings.
%
% with the nuance that edge features are layer dependent and given by $\{ \hat{\mathcal{B}}_{uv}[d], \hat{\mathcal{A}}_{uv}[d_u, d_v] \}$ for $d_u,d_v,d < l$. 
% $\mathbf{f}$ contains three separate messages, the propagated node features, $\mathbf{x}$, the minhash sketches $\mathbf{z}_m$ and the hyperloglog sketches $\mathbf{z}_h$. $\square$ acts differently on each message performing sum, min and max aggregation respectively. Intuitively the role of the edgewise features is to modify message transmission based on local graph structures in a similar way that attention is used to modify message transmission based on feature couplings.
%
% link prediction function
A link predictor is then applied to the final node representations of the form
\begin{align} 
p(u,v)= \psi \left(\mathbf{x}^{(k)}_u \odot \mathbf{x}^{(k)}_v, \{ \hat{\mathcal{B}}_{uv}[d], \hat{\mathcal{A}}_{uv}[d_u, d_v] : \forall \, d,d_u,d_v \in [k]  \}\right) \,, 
\label{eq:readout}
\end{align}
where $[k] = \{1, \ldots, k\}$, $k$ is a receptive field, $\psi$ is an MLP and $\odot$ is the element-wise product. $p(u,v)$  decouples the readout function from subgraph generation as suggested in  Figure~\ref{fig:readout_ablation}.
%
% the main idea
This approach moves computation from edgewise operations e.g. generating subgraphs, to nodewise operations (generating sketches) that encapsulate the most relevant subgraph information for LP. We have shown that (i) GNN propagation is either not needed (structure features) or can be preprocessed globally (given features) (ii) structure features can be generated from sketches instead of subgraphs (iii) an edge pooling readout function performs as well as a graph pooling readout. In combination these approaches circumvent the need to explicitly construct subgraphs. 
% At a high level this is achieved by (i) moving the $Z_{uv}$ dependence to the readout function 2) allowing $Z_S = f(z_u,z_v)$ 3) constructing $z_u$ recursively by hashing one-hop neighbors and 4) decomposing $\GNN$ into learning and propagation components.
% our approach
The resulting model efficiently combines node features and the graph structure without explicitly constructing subgraphs and is as efficient as GCN~\citep{kipf2017}.

% The probability of a link $(u,v)$ is given by
% \begin{align}
%     p_{uv} = R \left(\Psi(\mathbf{x}'_u,\mathbf{x}'_v), \Phi(\hat{\mathcal{A}}_{uv}, \hat{\mathcal{B}}_{uv}, \hat{\mathcal{B}}_{vu})\right)
% \end{align}
% where $\mathbf{x}'_u$ encodes the propagated node features and $\Psi$ and $\Phi$ are neural networks.

% \todo{(FF) BUDDY vs. ELPH? This is the right place to talk about them and specify the differences?}

\paragraph{Expressive power of ELPH} 

ELPH combines the advantageous traits of both GNNs and subgraph-based methods, i.e. tractable computational complexity and the access to pair-wise structural features including triangle counts (Appendix~\ref{sec:curvature}). Furthermore, as the following results show, it is more expressive than MPNNs: 
%But, what can we say about its representational power? We first prove the following:
% {\bf [MB: I suggest leaving only Thrm 4.3 and moving the lemmas to Appendix] to save space.]}
\begin{proposition}\label{prop:elph-does-not-suffer}
    Let $\mathcal{M}_\mathrm{ELPH}$ be the family of ELPH models as per Equations~\ref{eq:sketches} - \ref{eq:aggregation} and~\ref{eq:readout} where estimates are exact ($\hat{\mathcal{A}} \equiv \mathcal{A}, \hat{\mathcal{B}} \equiv \mathcal{B}$)\footnote{For sufficiently large samples, this result can be extended to approximate counts via unbiased estimators.}. $\mathcal{M}_\mathrm{ELPH}$ \emph{does not} suffer from the automorphic node problem.
\end{proposition}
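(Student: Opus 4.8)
To show $\mathcal{M}_\mathrm{ELPH}$ does not suffer from the automorphic node problem it suffices to exhibit a single counterexample to the pathology: one graph $G$, one triple $u,v,w$ with $v\cong w$, and one model in $\mathcal{M}_\mathrm{ELPH}$ with $p(u,v)\neq p(u,w)$. The plan is to decouple the two arguments that the readout $\psi$ of Equation~\ref{eq:readout} receives, the node term $\mathbf{x}^{(k)}_u\odot\mathbf{x}^{(k)}_v$ and the pair term $\{\hat{\mathcal{B}}_{uv},\hat{\mathcal{A}}_{uv}\}$, and to argue that automorphy of $v$ and $w$ pins down the former (this is exactly why MPNN predictors are stuck) but not the latter, so that a suitable $\psi$ turns a gap in the structure counts into a gap in $p$.

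\emph{Step 1 (the node term cannot separate).} Take $v\cong w$ as in Figure~\ref{fig:seal_iso_graph}: some $\sigma\in\mathrm{Aut}(G)$ sends $v$ to $w$ without fixing $u$, so $d(u,v)\neq d(u,w)$. ELPH's propagation (Equations~\ref{eq:sketches}--\ref{eq:aggregation}) is an MPNN, hence permutation-equivariant, and a graph automorphism permutes its outputs along with the nodes; since $\sigma$ fixes $G$ and $\sigma(v)=w$, we get $\mathbf{x}^{(k)}_w=\mathbf{x}^{(k)}_{\sigma(v)}=\mathbf{x}^{(k)}_v$ for \emph{every} choice of $\phi,\gamma$ (the sketches and edge features in Equations~\ref{eq:sketches}--\ref{eq:edge_features} are likewise $\sigma$-equivariant). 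As $(u,v)$ and $(u,w)$ share the same first node, $\mathbf{x}^{(k)}_u\odot\mathbf{x}^{(k)}_v=\mathbf{x}^{(k)}_u\odot\mathbf{x}^{(k)}_w$, so any separation of $p(u,v)$ from $p(u,w)$ must come through the structure-feature argument of $\psi$.

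\emph{Step 2 (the structure-feature term separates).} Although $v$ and $w$ are automorphic as nodes, the ordered pairs $(u,v)$ and $(u,w)$ are not equivalent, and this is visible in the counts the readout consumes: in Figure~\ref{fig:seal_iso_graph}, $d(u,v)=2$ forces $u$ and $v$ to have at least one common neighbour, so $\mathcal{A}_{uv}[1,1]\ge1$, whereas $d(u,w)=3>2$ precludes a common neighbour, so $\mathcal{A}_{uw}[1,1]=0$; and the index $(1,1)$ lies in the range $d_u,d_v\in[k]$ of Equation~\ref{eq:readout} for every $k\ge1$. Under the hypothesis $\hat{\mathcal{A}}\equiv\mathcal{A}$ the readout receives these exact values, so choosing $\psi$ to read off the $(1,1)$-entry of $\hat{\mathcal{A}}$ (a linear, hence realizable, functional, optionally wrapped in a monotone squashing to land in $[0,1]$), with $\phi,\gamma$ arbitrary, gives $p(u,v)=\mathcal{A}_{uv}[1,1]\ge1>0=\mathcal{A}_{uw}[1,1]=p(u,w)$, establishing the claim. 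The same scheme with a richer $\psi$ separates any two links whose $\{\mathcal{A},\mathcal{B}\}$ profiles differ, so the construction is not tied to this witness.

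\emph{Main obstacle.} No step is individually hard; the care lies entirely in the \emph{conjunction} of properties required of the witness: $v,w$ must be automorphic as nodes (so Step~1 genuinely rules out every node-level / MPNN predictor and the example really does bear on the automorphic node problem), the ordered pairs $(u,v),(u,w)$ must be inequivalent, \emph{and} their difference must surface inside the truncated index window $[k]$ of Equation~\ref{eq:readout} (or inside the overflow counts $\mathcal{B}$) --- note that the readout's distance indices start at $1$, so distance-$0$ self-contributions are unavailable. Pitting $d(u,v)=2$ against $d(u,w)=3$ satisfies all three at once with the cheapest available structure feature, the common-neighbour / triangle count, which is precisely the expressivity gap the paper identifies for MPNNs. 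Finally, as the footnote notes, the exactness hypothesis $\hat{\mathcal{A}}\equiv\mathcal{A}$, $\hat{\mathcal{B}}\equiv\mathcal{B}$ is essential; for finite sketches the conclusion holds only in expectation / for enough samples, via unbiasedness of the \emph{MinHashing}/\emph{HyperLogLog} intersection estimator.
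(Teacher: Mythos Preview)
Your proof is correct and takes essentially the same approach as the paper's: exhibit a witness graph with automorphic nodes whose associated node-pairs have different common-neighbor counts $\mathcal{A}[1,1]$, then choose the readout $\psi$ to project onto that coordinate (the paper uses $C_6$ with pairs $(v_0,v_2),(v_0,v_3)$ rather than the graph of Figure~\ref{fig:seal_iso_graph}). Your Step~1 is extra commentary not required by Definition~\ref{def:automorphic_node_problem}, and the connective ``so'' there is a mild non-sequitur (an automorphism not fixing $u$ does not by itself force $d(u,v)\neq d(u,w)$), but this does not affect Step~2, which carries the proof.
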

% Let $\mathcal{M}_\mathrm{ELPH}$ and $\mathcal{M}_\mathrm{MPNN}$ be the family of ELPH models (as described per Equations~\ref{eq:elph_message_passing} and~\ref{eq:readout}) and MPNN models (Equations~\ref{eq:mpnn_link_prediction}) respectively. 
%
% \begin{lemma} \label{lemma:same_mpnn}
% $\mathcal{M}_\mathrm{ELPH}$ is more expressive than $\mathcal{M}_\mathrm{MPNN}$.
% \end{lemma}
% We define the concept of expressiveness for families of link prediction models in Section~\ref{def:more_expressive}. Intuitively, this means that an ELPH model can discriminate any pair of links that can be discriminated by an MPNN model. We can prove this by noticing that ELPH generalizes an MPNN, and can learn to behave as an MPNN by ignoring the edge and structural features. Specifically, with 1) $\phi^{(l)}\left(\mathbf{x}_u^{(l-1)}, \mathbf{x}_v^{(l-1)}, \mathbf{e}_{u, v}^{(l)}\right)=\phi^{(l)}\left(\mathbf{x}_u^{(l-1)}, \mathbf{x}_v^{(l-1)}\right)$, and 2) $\psi \left(\mathbf{x}^k_u \odot \mathbf{x}^k_v, \{ \hat{\mathcal{B}}_{uv}[d], \hat{\mathcal{A}}_{uv}[d_u, d_v] : \forall \, d,d_u,d_v = 1, \ldots, k  \}\right) = \psi \left(\mathbf{x}^k_u \odot \mathbf{x}^k_v\right)$, ELPH defaults to an MPNN.
%
% \begin{proposition} \label{lemma:more_mpnn}
% % $\mathcal{M}_\mathrm{ELPH}$ can discriminate pair of links that cannot be discriminated by $\mathcal{M}_\mathrm{MPNN}$.
% $\mathcal{M}_\mathrm{ELPH}$ \emph{does not} suffer from the automorphic node problem.
% \end{proposition}

While we rigorously define the automorphic node problem and prove the above in Section~\ref{app:proofs}, this result states that there exists non-automorphic links with automorphic nodes that an ELPH model is able to discriminate (thanks to structural features). Contrary to ELPHs, MPNNs are not able to distinguish any of these links; we build upon this consideration, as well as the observation that ELPH models subsume MPNNs, to obtain this additional result:
%
% We prove this by showing an example of a link that cannot be distinguished by an MPNN, but is distinguished by ELPH.
% Since nodes 2 and 4 in figure \ref{fig:seal_iso_graph} are isomorphic, an MPNN will assign the same probability to links (1,2) and (1,4). However, these two links are not isomorphic. ELPH is able to distinguish these links thanks to the structural features, For (1,2), we have $\mathcal{A}_{v_{1}v_{2}}[1,1]=2$, $\mathcal{A}_{v_{1}v_{2}}[1,2]=0$, $\mathcal{A}_{v_{1}v_{2}}[2, 1]=0$ and $\mathcal{A}_{v_{1}v_{2}}[2,2]=1$. On the other hand, for (1,4) we have $\mathcal{A}_{v_{1}v_{2}}[1,1]=0$, $\mathcal{A}_{v_{1}v_{2}}[1,2]=1$, $\mathcal{A}_{v_{1}v_{2}}[2, 1]=1$ and $\mathcal{A}_{v_{1}v_{2}}[2,2]=0$.
% Since our model computes different structural features, $\psi$ has different inputs for the two links and is able to assign them different probabilities.
%
\begin{theorem}\label{thm:elph-strictly-more-expressive-than-mpnn}
    Let $\mathcal{M}_\mathrm{MPNN}$ be the family of Message Passing Neural Networks (Equation~\ref{eq:mpnn_link_prediction}). $\mathcal{M}_\mathrm{ELPH}$ is strictly more powerful than $\mathcal{M}_\mathrm{MPNN}$ ($\mathcal{M}_\mathrm{ELPH} \sqsubset \mathcal{M}_\mathrm{MPNN}$).
\end{theorem}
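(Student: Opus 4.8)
The plan is to establish the strict containment $\mathcal{M}_\mathrm{ELPH} \sqsubset \mathcal{M}_\mathrm{MPNN}$ in two parts: first that ELPH is at least as powerful as any MPNN, and second that this inclusion is strict. For the first part, I would show that every MPNN in the family of Equation~\ref{eq:mpnn_link_prediction} can be exactly simulated by an ELPH model. This is essentially by construction: an ELPH update (Equations~\ref{eq:sketches}--\ref{eq:aggregation}) applies $\gamma^{(l)}$ and $\phi^{(l)}$ to $(\mathbf{x}_u^{(l-1)}, \mathbf{x}_v^{(l-1)}, \mathbf{e}_{u,v}^{(l)})$, so by choosing $\phi^{(l)}$ to ignore its third argument $\mathbf{e}_{u,v}^{(l)}$ we recover exactly the MPNN message. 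Likewise, the readout $\psi$ in Equation~\ref{eq:readout} can be chosen to ignore the structural-feature argument and depend only on $\mathbf{x}_u^{(k)} \odot \mathbf{x}_v^{(k)}$, matching the MPNN readout $R$. Hence any link-prediction function realizable by an MPNN is realizable by an ELPH model, giving $\mathcal{M}_\mathrm{ELPH} \sqsupseteq \mathcal{M}_\mathrm{MPNN}$ (in the appropriate ordering of discriminative power).

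For strictness, I would invoke Proposition~\ref{prop:elph-does-not-suffer}: there exist two links with automorphic endpoints --- say links $(u,v)$ and $(u,w)$ with $v \cong w$ --- that some ELPH model assigns different probabilities, because $\mathcal{A}_{uv} \neq \mathcal{A}_{uw}$ (or the $\mathcal{B}$ counts differ); the canonical witness is the graph of Figure~\ref{fig:seal_iso_graph}, where nodes $2$ and $4$ lie in the same automorphism orbit but the pairs $(1,2)$ and $(1,4)$ have different distance profiles and hence different DE-label counts. Any MPNN, being upper-bounded by WL, produces $\mathbf{y}_v = \mathbf{y}_w$ and therefore $R(\mathbf{y}_u,\mathbf{y}_v) = R(\mathbf{y}_u,\mathbf{y}_w)$, so no MPNN can separate this pair of links. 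Combined with the inclusion from the first part, this shows the containment is strict.

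The main obstacle --- and the place where care is needed --- is making the notion of ``strictly more powerful'' precise enough that the two halves fit together cleanly. One must fix a comparison framework: e.g., say a model family $\mathcal{F}_1$ is at least as powerful as $\mathcal{F}_2$ if for every graph $G$ and every pair of node pairs that some $f_2 \in \mathcal{F}_2$ distinguishes (assigns different link scores), some $f_1 \in \mathcal{F}_1$ also distinguishes it; strictness then requires a $G$ and a pair of node pairs distinguished by ELPH but by no MPNN. The simulation argument must be checked to respect this definition uniformly (same architecture distinguishes at least the same pairs), and the witness from Proposition~\ref{prop:elph-does-not-suffer} must be verified to be genuinely indistinguishable for \emph{every} MPNN, which is exactly the WL upper bound already recalled in Section~\ref{sec:preliminaries}. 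Once the definition is pinned down, both steps are short; I would present the simulation as a lemma and then combine it with Proposition~\ref{prop:elph-does-not-suffer} in two lines.
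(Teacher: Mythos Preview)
Your proposal is correct and matches the paper's approach almost exactly: the paper decomposes the proof into two lemmas --- one showing ELPH subsumes MPNNs by choosing $\phi^{(l)}$ and $\psi$ to ignore the structural features (your first part), and one exhibiting non-automorphic node pairs with automorphic endpoints that ELPH separates but no MPNN can (your second part, via Proposition~\ref{prop:elph-does-not-suffer} and the WL upper bound). The only cosmetic difference is the choice of witness: the paper uses the cycle $C_6$ with pairs $(v_0,v_2)$ and $(v_0,v_3)$ rather than the graph of Figure~\ref{fig:seal_iso_graph}, but either example works.
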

We prove this in Section~\ref{app:proofs}. Intuitively, the Theorem states that while all links separated by MPNNs are also separated by ELPHs, there also exist links separated by the latter family but not by the former.

\section{Scaling ELPH with Preprocessing (BUDDY)} 
% ELPH intersperses learned feature maps with propagation over the graph. The propagation is modulated by sketch based edge features. 

Similarly to other full-graph GNNs (e.g. GCN), ELPH is efficient when the dataset fits into GPU memory. When it does not, the graph must be batched into subgraphs. Batching is a major challenge associated with scalable GNNs and invariably introduces high levels of redundancy across batches. Here we introduce a large scale version of ELPH, called BUDDY, which uses preprocessing to side-step the need to have the full dataset in GPU memory.
% \begin{wrapfigure}{r}{0.5\textwidth}
%     \vspace{-5mm}
%     \centering
%     \includegraphics[width=0.5\textwidth]{feature_}
%     \caption{The affect of a GNN readout function over the output of all nodes in the subgraph $S_{uv}$ (sum or mean) or just the nodes $u$ and $v$ (edge).}
%     \label{fig:readout_ablation}
%     \vspace{-5mm}
% \end{wrapfigure}

% solution
\paragraph{Preprocessing} Figure~\ref{fig:feature_prop} indicated that a fixed propagation of the node features almost recovers the performance of learnable SGNN propagation. This propagation can be achieved by efficient sparse scatter operations and done only once in preprocessing. Sketches can also be precomputed in a similar way:
\begin{align}
    \mathbf{M}^{(l)} = \textrm{scatter\_min}(\mathbf{M}^{(l-1)}, G), \,\,
    \mathbf{H}^{(l)} = \textrm{scatter\_max}(\mathbf{H}^{(l-1)}, G), \,\,
    \mathbf{X}^{(l)} = \textrm{scatter\_mean}(\mathbf{X}^{(l-1)}, G)
    \notag
\end{align}
where $\textrm{scatter\_min}(\mathbf{M}^{(l)}, G)_u=\min_{v \in \mathcal{N}(u)}\mathbf{m}_v^{(l-1)}$ and $\textrm{scatter\_max}$ and $\textrm{scatter\_mean}$ are defined similarly, $\mathbf{X}^{(0)}$ are the original node features, and $\mathbf{M}^{(0)}$ and $\mathbf{H}^{(0)}$ are the \textit{MinHashing} and \textit{HyperLogLog} single node sketches respectively. Similarly to~\citep{rossi2020sign}, we concatenate features diffused at different hops to obtain the input node features: $\mathbf{Z}= \left[\mathbf{X}^{(0)} \mathbin\Vert \mathbf{X}^{(1)} \mathbin\Vert...\mathbin\Vert \mathbf{X}^{(k)} \right]$. 

\paragraph{Link Predictor} We now have $p(u,v)= \psi \left(\mathbf{z}^{(k)}_u \odot \mathbf{z}^{(k)}_v, \{ \hat{\mathcal{B}}_{uv}[d], \hat{\mathcal{A}}_{uv}[d_u, d_v] : \forall \, d,d_u,d_v \in [k] \}\right)$, where $\psi$ is an MLP and $\hat{\mathcal{A}}_{uv}$ and $\hat{\mathcal{B}}_{uv}$ are computed using $\mathbf{M}$ and $\mathbf{H}$ as explained in Section \ref{sec:approximating_structure_features}.
Doing so effectively converts a GNN into an MLP, removing the need to sample batches of subgraphs when the dataset overflows GPU memory~\citep{rossi2020sign}. Further improvements in efficiency are possible when multiple computations on the same edge sets are required. In this case, the edge features $\{ \hat{\mathcal{B}}_{uv}[d], \hat{\mathcal{A}}_{uv}[d_u, d_v] \}$ can be precomputed and cached.
%
% In general, the input features $\mathbf{X}$ are given by 
% \begin{align}\label{eq:buddy-prop}
% \mathbf{X} = \left[\mathbf{X} \mathbin\Vert \Sigma \mathbf{X} \mathbin\Vert...\mathbin\Vert \Sigma^k \mathbf{X} \right]
% % \mathcal{K}_{i=0}^k \tilde{A}^i X
% \end{align}
% where $\mathbin\Vert$ is the concatenation operator and $\Sigma$ performs scatter mean, min and max over the graph for node features, minhash sketches and hyperloglog sketches respectively. Thus the only change to Equation~\eqref{eq:readout} is how $\mathbf{f}$ is calculated. 

% is a graph propagation operator such as the symmetrically normalized adjacency employed in GCN~\citep{kipf2017}. We apply this idea to LP, effectively performing all subgraph feature propagation steps in a single global propagation operation.

\vspace{-1em}
\paragraph{Time Complexity}

% this next paragraph was when we used to describe this in terms of set operations
% The running time to compute all $z$ values in the worst case is $\mathcal{O}(k|\mathcal{E}|u + |\mathcal{E}|i)$ and $\mathcal{O}(nk^2 + d)$ to compute $\mathcal{A}$ and $\mathcal{B}$, where $u$, $i$, $n$ and $d$ are the running times of \emph{union}, \emph{initialize}, \emph{intersection}, and \emph{difference} respectively. In addition set operations require $\mathcal{O}(|\mathcal{E}|)$ memory, as the $k$-hop neighborhood of a node may span a significant portion of the whole graph. Instead, we approximate $\mathcal{A}_{uv}$ and $\mathcal{B}_{uv}$ with sketches whose running times and sizes are constant with respect to $|\mathcal{E}|$ and $k$. 
% preprocessing

Denoting the complexity of hash operations as $h$,  the node representation dimension $d$ and the number of edges $E=|\mathcal{E}|$, the preprocessing complexity of BUDDY is $\mathcal{O}(kEd + kEh)$. The first term being node feature propagation and the second sketch propagation. 
% However, usually $h \ll d$ and thus the complexity reduces to $\mathcal{O}(kEd)$. 
Preprocessing for SEAL and NBFNet are $\mathcal{O}(1)$.
%
% training
A link probability is computed by (i) extracting $k(k+2)$ $k$-hop structural features, which costs $\mathcal{O}(k^2 h)$ and (ii) An MLP on structural and node features, which results in $\mathcal{O}(k^2h + kd^2)$ operations. Both are
% Again, since $h \ll d$ we omit the hashing time and are left with $\mathcal{O}(d^2)$. Note, that the complexity  
\textbf{independent of the size of the graph}.
% BUDDY encodes $k$-hop neighborhood intersections with complexity $\mathcal{O}(kh|\mathcal{E}|)$ to preprocess and $\mathcal{O}(k^2)$ to predict a link. This greatly improves on node labeling schemes such as those employed in~\citep{zhang2021labeling, zhang2018link} that must be calculated for each link at both training and inference time and have $O(\mathcal{E}$) worst-case time complexity per link. Denoting $N=|\mathcal{V}|$, using $d$-dimensional features, letting $E=|\mathcal{E}|$, assume a maximum $k$-hop feature propagation and hashing operations of complexity $h$ (a small number independent of the graph size derived from $p$).
% Since SGNN construct $k$-hop subgraphs for each link,  in the \textbf{best} case of regular graphs they require $\degree^k$ to construct a subgraph, $\degree^k$ to label the graph and then $\degree^k d^2$ to perform the forward pass, thus resulting in $\mathcal{O}(\degree^k d^2)$ complexity. 
% % The $\degree$ is an average node degree in a graph.
% However, in practice the regularity assumption does not hold and for power law degree distributions, the $\degree^k$ term must be replaced with $E$, resulting in $\mathcal{O}(E d^2)$ complexity (See Appendix~\ref{sec:subgraph_complexity}).
Since SGNNs construct subgraphs for each link they are $\mathcal{O}(E d^2)$ for complex networks (See Section~\ref{sec:subgraph_complexity}).
Finally, the total complexity of NBFNet is $\mathcal{O}(Ed + N d^2)$ with an amortized time for a single link prediction of $\mathcal{O}(Ed / N + d^2)$~\citep{zhu2021neural}. However, this is only realized in situations where the link probability $p(u, j)$ is required $\forall j \in \mathcal{V}$.
%
% tables
Time complexities are summarized in Table~\ref{tab:complexity} and space complexities in Appendix~
\ref{app:space_complexity}.

% \vspace{-1em}
\paragraph{Expressiveness of BUDDY} Similarly to ELPH, BUDDY does not suffer from the automorphic node problem (see Proposition and its Proof in Appendix~\ref{app:proofs}). However, due to its non-parametric feature propagation, this class of models does not subsume MPNNs and we cannot exclude the presence of links separated by MPNNs, but not by BUDDY. Nevertheless, BUDDY empirically outperforms common MPNNs by large margins (see Section~\ref{sec:experiments}), while also being extremely scalable. 

% \begin{table}[t]
% \caption{Wall time of our method in comparison to SEAL and NBFNet on Cora (left) and Pubmed (right) datasets. Training time presented is computed based on one training epoch. Inference time presented is for full \textbf{test} data pass. A batch size of 256 is used for SEAL and Ours and a batch size of 128 is used for NBFNet. Computation is performed on a single Tesla V100 GPU. $^*$ We do not include results for NBFnet obtained using 5 Tesla V100 GPUs with a batch size of 84 for a fair comparison. {\bf [MB: merge into one table. Add another bigger dataset?]}}
%     \resizebox{0.5\textwidth}{!}{%
%     \begin{tabular}{l cccc}
%     \toprule 
    
%          \textbf{Wall time (sec)} &  
%          \textbf{SEAL}&
%          \textbf{NBFNet}&
%          \textbf{Ours} \\
%          \toprule
         
%          Preprocessing &0 &0& 9.8 \\
         
%          Training (1 epoch) &9.2  &9.6& 0.3\\
         
%          Inference &3.0 &0.9 & 0.01
         
% \end{tabular}
% \quad
% }
%     \resizebox{0.5\textwidth}{!}{%
%     \begin{tabular}{l cccc}
%     \toprule 
    
%          \textbf{Wall time (sec)} &  
%          \textbf{SEAL}&
%          \textbf{NBFNet}&
%          \textbf{Ours} \\
%          \toprule
         
%          Preprocessing &0  &0 & 75 \\
         
%          Training (1 epoch) & 69.9 & OOM & 1.4 \\
         
%          Inference &22.6 & OOM & 0.06
         
% \end{tabular}
% }
% \quad
% \vspace{2mm}

% \label{tab:wall_time}
% \vspace{-5mm}
% \end{table}  

\begin{wraptable}{r}{0.6\textwidth}

    \centering
    % \vspace{-5mm}
    \caption{Time complexity. $N$ and $E$ are the number of nodes and edges respectively. We use $d$-dimensional node features, $k$ hops for propagation and sketches of size $h$.}
    \resizebox{0.55\textwidth}{!}{%
    \begin{tabular}{l cccc}
    \toprule 
    
         \textbf{Complexity} &  
         \textbf{SEAL}&
         \textbf{NBFNet}&
         \textbf{BUDDY} \\
         \toprule
         
         Preprocessing &$\mathcal{O}(1)$  &$\mathcal{O}(1)$ & $\mathcal{O}(kE(d + h))$ \\
         
         Training (1 link) &$\mathcal{O}(Ed^2)$ & $\mathcal{O}(Ed + N d^2)$ & $\mathcal{O}(k^2h + kd^2)$ \\
         
         Inference &$\mathcal{O}(Ed^2)$ & $\mathcal{O}(Ed + N d^2)$ & $\mathcal{O}(k^2h + kd^2)$
         
\end{tabular}
}
\quad
\vspace{1mm}
\label{tab:complexity}
% \vspace{-5mm}
\end{wraptable}

% \subsection{Interpretability}

% \subsection{$\gamma$-decaying theory}

% Is there an equivalent theory such as the one presented in~\citep{zhang2018link}?

% \subsection{Equivariance}

% Section to analyse to what extent the model is equivariant

\section{Related Work}

% subgraph methods: seal and related papers
Subgraph methods for link prediction were introduced as the Weisfeiler Leman Neural Machine (WLNM) in~\citep{zhang2017weisfeiler}. As an MLP is used to learn from subgraphs instead of a GNN their model is not permutation-equivariant and uses a hashing-based WL algorithm~\citep{kersting2014power} to generate node indices for the MLP. Note that `hashing' here refers to injective neighbor aggregation functions, distinct from the data sketching methods employed in the current work. The MLP in WLNM was replaced by a GNN in the seminal SEAL~\citep{zhang2018link}, thus removing the need for a hashing scheme. Additional methodological improvements were made in~\citep{zhang2021labeling, zhang2021nested} and subgraph generation was improved in~\citep{yin2022algorithm}. Applications to recommender systems were addressed in~\citep{zhang2019inductive} and  random walks are used to represent subgraph topology in~\citep{yin2022algorithm, pan2021neural}.
% other labeling tricks
The DRNL labeling scheme was introduced in SEAL and further labeling schemes are developed in~\citep{you2021identity, li2020distance} and generalized with equivariant positional encodings~\citep{wang2022equivariant}.
% NBF and related 
Neural Bellman-Ford Networks~\citep{zhu2021neural} takes a different approach. It is concerned with single source distances where each layer of the neural networks is equivalent to an iteration of the Bellman-Ford algorithm and can be regarded as using a partial labeling scheme. The same graph and augmentation can be shared among multiple destinations and so it is faster (amortized) than SGNNs. However, it suffers from poor space complexity and thus requires impractically high memory consumption.
Neo-GNN~\citep{yun2021neo} directly learns a heuristic function using an edgewise and a nodewise neural network. \#GNN~\citep{wu2021hashing} directly hashes node features, instead of graph structure, and is limited to node features that can be interpreted as set memberships.
% knowledge graph methods and BigGraph?
% Another traditional approach to LP requires explicitly embedding the graph at hand. In such a framework, the nearest neighbors of a query vertex are assumed to be candidate links. 
Another paradigm for LP uses self-supervised node embeddings combined with an (often) approximate distance metric.
% , typically have a margin-loss function, and provide quite a simple model that is easily capable of representing both directed and multirelational aspects of the graph at hand. 
Amongst the best known models are TransE~\citep{bordes2013translating}, DistMult~\citep{yangYHGD14a}, or complEx~\citep{pmlr-v48-trouillon16}, which have been shown to scale to large LP systems in e.g.~\citep{pbg, el2022twhin}. 
% It is for this reason that we have chosen to compare our presented method against these Knowledge Graph embeddings.
% handling features.
Decoupling feature propagation and learning in GNNs was first implemented in SGC~\citep{wu2019simplifying} and then scalably in SIGN~\citep{rossi2020sign}. 

\begin{table*}[t]
    \centering
    \caption{Results on link prediction benchmarks. The top three models are colored by \textbf{\textcolor{red}{First}}, \textbf{\textcolor{blue}{Second}}, \textbf{\textcolor{violet}{Third}}.  Where possible, baseline results are taken directly from the OGB leaderboard.}
    \resizebox{\textwidth}{!}{%
    \begin{tabular}{l ccccccc}
    \toprule 
         &
         \textbf{Cora} &  
         \textbf{Citeseer} & 
         \textbf{Pubmed} &
         \textbf{Collab} &
         \textbf{PPA} &
         \textbf{Citation2} &
         \textbf{DDI} \\
         
         %          \#Nodes &

         % 2,708 & 
         % 3,327 &
         % 18,717 &
         % 235,868 &
         % 576,289 &
         % 2,927,963 &
         % 4267
         %  \\
         
         % \#Edges &
         % 5,278 & 
         % 4,676 &
         % 44,327 &
         % 1,285,465 &
         % 30,326,273 &
         % 30,561,187 &
         % 1,334,889 
         %  \\
          
         % avg deg &
         % 3.9 &
         % 2.74 & 
         % 4.5 &
         % 5.45 &
         % 52.62 &
         % 10.44 &
         % 312.84
         % \\
          
          Metric &

          HR@100 &
          HR@100 & 
          HR@100 &
          HR@50 &
          HR@100 &
          MRR &
          HR@20
          %\\
         
         % splits &

         % rand &
         % rand & 
         % rand &
         % time &
         % throughput &
         % time &
         % protein
         \\ \midrule
          
         \textbf{CN} & 
         $33.92 {\scriptstyle \pm 0.46}$& 
         $29.79 {\scriptstyle \pm 0.90}$& 
         $23.13 {\scriptstyle \pm 0.15}$&
         $56.44 {\scriptstyle \pm 0.00}$&
         $27.65 {\scriptstyle \pm 0.00}$&
         $51.47 {\scriptstyle \pm 0.00}$&
         $17.73 {\scriptstyle \pm 0.00}$
         \\
          
        \textbf{AA} & 
        $39.85 {\scriptstyle \pm 1.34}$&
        $35.19 {\scriptstyle \pm 1.33}$&
        $27.38 {\scriptstyle \pm 0.11}$&
        $64.35 {\scriptstyle \pm 0.00}$&
        $32.45 {\scriptstyle \pm 0.00}$&
        $51.89 {\scriptstyle \pm 0.00}$&
        $18.61 {\scriptstyle \pm 0.00}$\\
        
        \textbf{RA} &
        $41.07 {\scriptstyle \pm 0.48}$ &
        $33.56 {\scriptstyle \pm 0.17}$&
        $27.03 {\scriptstyle \pm 0.35}$& 
        $64.00 {\scriptstyle \pm 0.00}$&
        \second{49.33}{0.00} & 
        $51.98 {\scriptstyle \pm 0.00}$&
        $27.60 {\scriptstyle \pm 0.00}$
        \\ \midrule
        
        \textbf{transE} &
        $67.40 {\scriptstyle \pm 1.60}$& 
        $60.19 {\scriptstyle \pm 1.15}$&
        $36.67 {\scriptstyle \pm 0.99}$& 
        $29.40 {\scriptstyle \pm 1.15}$& % https://wandb.ai/link-prediction/link-prediction/sweeps/hhl5jwfi?workspace=user-tmarkovich
        $22.69 {\scriptstyle \pm 0.49}$ &
        $76.44 {\scriptstyle \pm 0.18}$ & 
        $6.65  {\scriptstyle \pm 0.20}$
        \\
          
        \textbf{complEx} & 
        $37.16 {\scriptstyle \pm 2.76}$& 
        $42.72 {\scriptstyle \pm 1.68}$& 
        $37.80 {\scriptstyle \pm 1.39}$&
        $53.91 {\scriptstyle \pm 0.50}$& % https://wandb.ai/link-prediction/link-prediction/sweeps/0bbr9kt9?workspace=user-tmarkovich
        $27.42 {\scriptstyle \pm 0.49}$ &
        $72.83 {\scriptstyle \pm 0.38}$ &
        $8.68 {\scriptstyle \pm 0.36 }$
        \\
        
        \textbf{DistMult} & 
        $41.38 {\scriptstyle \pm 2.49}$& 
        $47.65 {\scriptstyle \pm 1.68}$& 
        $40.32 {\scriptstyle \pm 0.89}$&
        $51.00 {\scriptstyle \pm 0.54}$& % https://wandb.ai/link-prediction/link-prediction/sweeps/u0mwexre/table?workspace=user-tmarkovich
        $28.61 {\scriptstyle \pm 1.47} $&
        $66.95 {\scriptstyle \pm 0.40} $&
        $11.01 {\scriptstyle \pm 0.49} $
         \\ \midrule
        
        \textbf{GCN} & 
        $66.79 {\scriptstyle \pm 1.65}$& 
        $67.08 {\scriptstyle \pm 2.94}$&
        $53.02 {\scriptstyle \pm 1.39}$& 
        $47.14 {\scriptstyle \pm 1.45}$&
        $18.67 {\scriptstyle \pm 1.32}$&
        $84.74 {\scriptstyle \pm 0.21}$&
        $37.07 {\scriptstyle \pm 5.07}$
         \\
          
        \textbf{SAGE} & 
        $55.02 {\scriptstyle \pm 4.03}$& 
        $57.01 {\scriptstyle \pm 3.74}$& 
        $39.66 {\scriptstyle \pm 0.72}$& 
        $54.63 {\scriptstyle \pm 1.12}$& 
        $16.55 {\scriptstyle \pm 2.40}$&
        $82.60 {\scriptstyle \pm 0.36}$&
        $53.90 {\scriptstyle \pm 4.74}$ \\ \midrule

        \textbf{Neo-GNN} & 
        $80.42 {\scriptstyle \pm 1.31} $ &
        \third{84.67}{2.16} &
        \third{73.93}{1.19} &
        $62.13 {\scriptstyle \pm 0.5}$& 
        \third{49.13}{0.60}& 
        \third{87.26}{0.84}&
        \third{63.57}{3.52}\\ 
        
        \textbf{SEAL} & 
        \third{81.71}{1.30}& 
        $83.89 {\scriptstyle \pm 2.15}$ & 
        \first{75.54}{1.32}&
        \third{64.74}{0.43}& 
        $48.80 {\scriptstyle \pm 3.16}$& 
        \first{87.67}{0.32}&
        $30.56 {\scriptstyle \pm 3.86}$\\ 
        
         \textbf{NBFnet} & 
         $71.65 {\scriptstyle \pm 2.27}$&
         $74.07 {\scriptstyle \pm 1.75}$&
         $58.73 {\scriptstyle \pm 1.99}$&
         OOM&
         OOM&
         OOM&
         $4.00 {\scriptstyle \pm 0.58}$\\  \midrule

         \textbf{ELPH} & 
         \second{87.72}{2.13}&
         \first{93.44}{0.53}&
         $72.99 {\scriptstyle \pm 1.43}$ &
         \first{66.32}{0.40}&
         OOM&
         OOM&
         \first{83.19}{2.12}
         \\ 
         
         \textbf{BUDDY} & 
         \first{88.00}{0.44}&
         \second{92.93}{0.27}&
         \second{74.10}{0.78}&
         \second{65.94}{0.58}&
         \first{49.85}{0.20}& %https://wandb.ai/link-prediction/link-prediction/sweeps/hv9hfjzf/table?workspace=user-bchamberlain
         \second{87.56}{0.11}&
        %  $60.691 {\scriptstyle \pm 0.00}$& %https://wandb.ai/link-prediction/link-prediction/sweeps/2sjx73zm/table?workspace=user-tmarkovich
        %  $51.89 {\scriptstyle \pm 0.00}$&
         %https://wandb.ai/link-prediction/link-prediction/sweeps/6qmxk7b5?workspace=user-bchamberlain
         \second{78.51}{1.36} \\
         % \midrule
         \bottomrule
\end{tabular}
}
\label{tab:main_results}
\vspace{-3mm}
\end{table*}   

\section{Experiments} \label{sec:experiments}

% \vspace{-1em}
\paragraph{Datasets, Baselines and Experimental Setup}
We report results for the most widely used Planetoid citation networks Cora~\citep{mccallum2000automating}, Citeseer~\citep{sen2008collective} and Pubmed~\citep{namata2012query} and the OGB link prediction datasets~\citep{hu2020open}. 
We report results for: Three heuristics that have been successfully used for LP, Adamic-Adar (AA)~\citep{adamic2003friends}, Resource Allocation (RA)~\citep{zhou2009predicting} and Common Neighbors (CN); two of the most popular GNN architectures: Graph Convolutional Network (GCN)~\citep{kipf2017} and GraphSAGE~\citep{Hamilton2017}; the state-of-the-art link prediction GNNs Neo-GNN~\citep{yun2021neo},  SEAL~\citep{zhang2018link} and NBFNet~\citep{zhu2021neural}. Additional details on datasets and the experimental setup are included in Appendix~\ref{sec:datasets} and the code is public\footnote{https://github.com/melifluos/subgraph-sketching}.

% link prediction methodology
\vspace{-1em}
\paragraph{Results} Results are presented in Table~\ref{tab:main_results} with metrics given in the first row. Either ELPH or BUDDY achieve the best performance in five of the seven datasets, with SEAL being the closest competitor. Being a full-graph method, ELPH runs out of memory on the two largest datasets, while its scalable counterpart, BUDDY, performs extremely well on both. Despite ELPH being more general than BUDDY, there is no clear winner between the two in terms of performance, with ELPH outperforming BUDDY in three of the five datasets where we have results for both. Ablation studies for number of hops, sketching parameters and the importance of node and structure features are in Appendix~\ref{app:ablation_studies}.
% Given its scalability and faster training and inference times, we see BUDDY as the preferred method between the two.

\begin{wraptable}{R}{0.6\textwidth}
\vspace{-4mm}
\caption{Wall times. Training time is one epoch. Inference time is the full test set. 
% For reference GCN takes ~4s to train and 0.1s inference on Pubmed and is OOM on Citation
}
    \resizebox{0.6\textwidth}{!}{%
    \begin{tabular}{ll cccccc}
    \toprule 
        \textbf{dataset} &
         \textbf{time (s)} &  
         \textbf{SEAL dyn}&
         \textbf{SEAL stat}&
         \textbf{ELPH} &
         \textbf{BUDDY} & 
         \textbf{GCN} \\         
         \toprule
         
         % & preproc &0 & 57 & 0 &0.4 \\
         
         % Cora & train &9.2  & 5 & 1.6 & 0.7\\
         
         % & inference &3.0 & 1.5 & 0.05 & 0.04 \\

         % \midrule

         % & Preprocessing &0 & 48 & 0 & 0.7 \\
         
         % Citeseer & Training (1 epoch) & 13.3 & 4.5 & 0.9 &0.5\\
         
         % & Inference & 19.6 & 1.5 & 0.03 & 0.07 \\

         % \midrule

         & preproc &0  & 630 & 0 & 5 & 0 \\
         
         Pubmed & train &70 & 30 & 25 & 1 & 4\\
         
         & inference &23 & 9 & 0.1 & 0.06 & 0.02 \\     

        \midrule

         % & preproc &0  & & 0& 42.6 \\
         
         % Collab & train & 1e3-1e4 & OOM & 2060 & 105 \\
         
         % & inference & 1e3-1e4 &  & 1.5 & 1.1 \\

         % \midrule

         % & Preprocessing &0  &0 &  & 840 \\
         
         % PPA & Training (1 epoch) & 1e5-2e5 & OOM & OOM & 75 \\
         
         % & Inference & 2e4 & OOM & & 18.2 \\

         %  \midrule

         & preproc &0  & $\sim$ 3,000,000 &  & 1,200 \\
         
         Citation & train  & $\sim$300,000 & $\sim$200,000 & OOM & 1,500 & OOM \\
         
         & inference & $\sim$300,000 & 
         $\sim$100,000 & & 300 \\

        \bottomrule
         
\end{tabular}
}
% \quad
\vspace{-3mm}

\label{tab:wall_time}
\end{wraptable}

\vspace{-1em}
\paragraph{Runtimes} Wall times are shown in Table~\ref{tab:wall_time}. We report numbers for both the static mode of SEAL (all subgraphs are generated and labeled as a preprocessing step), and the dynamic mode (subgraphs are generated on the fly). BUDDY is orders of magnitude faster both in training and inference. In particular, Buddy is 200--1000$\times$ faster than SEAL in dynamic mode for training and inference on the Citation dataset. We also show GCN as a baseline. Further runtimes and a breakdown of preprocessing costs are in Appendix~\ref{sec:runtimes} with learning curves in Appendix~\ref{app:learning_curves}.

\section{Conclusion}

We have presented a new model for LP that is based on an analysis of existing state-of-the-art models, but which achieves better time and space complexity and superior predictive performance on a range of standard benchmarks.
% limitations
The current work is limited to undirected graphs or directed graphs that are first preprocessed to make them undirected as is common in GNN research.
% future work
We leave as future work extensions to directed graphs and temporal / dynamically evolving graphs and investigations into the links with graph curvature (See~Appendix~\ref{sec:curvature}).

\section{Acknowledgments}

We would like to thank Junhyun Lee, Seongjun Yun and Michael Galkin for useful discussions regarding prior works and are particularly grateful to Junhyun and Seongjun for generating additional Neo-GNN results for Table~\ref{tab:main_results}. MB is supported in part by ERC Consolidator grant no 724228 (LEMAN).

\pagebreak

\bibliography{references}
\bibliographystyle{plain}
% \bibliographystyle{}
% \ExecuteOptions{nohyperref}
\newcommand{\showDOI}[1]{\unskip} 
\newcommand{\showURL}[1]{\unskip} 
\renewcommand{\thepage}{}

\clearpage

\appendix

\section{Theoretical Analyses}

\subsection{Preliminaries}

We introduce some preliminary concepts that will be useful in our analysis. Let us start with the definition of graph \emph{isomorphism} and \emph{automorphism}.

\begin{definition}[Graph isomorphism and automorphism]\label{def:isomorphism-automorphism}
    Let $G_1 = (V_1, E_1)$, $G_2 = (V_2, E_2)$ be two simple graphs. An \emph{isomorphism} between $G_1, G_2$ is a bijective map $\varphi: V_1 \rightarrow V_2$ which preserves adjacencies, that is: $\forall u, v \in V_1: (u, v) \in E_1 \Longleftrightarrow (\varphi(u), \varphi(v)) \in E_2$. If $G_1 = G_2$, $\varphi$ is called an \emph{automorphism}.
\end{definition}

In view of the definition above, two graphs are also called \emph{isomorphic} whenever there exists an isomorphism between the two. Intuitively, if two graphs are isomorphic they encode the same exact relational structure, up to relabeling of their nodes. As we shall see next, automorphisms convey a slightly different meaning: they allow to define graph symmetries by formalizing the concept of node structural roles.

The above definitions can be extended to attributed graphs, more conveniently represented through third-order tensors $\mathsf{A} \in \mathbb{R}^{n^2 \times d}$, with $n = |V|, d > 0$ the number of features \citep{maron2019provably, zhang2021labeling}. Indexes in the first two dimensions of tensor $\mathsf{A}$ univocally correspond to vertexes in $V$ through a bijection $\iota : V \rightarrow \{1, \dots, n \}$. Here, elements $\big( \mathsf{A} \big)_{i,i,:}$ correspond to node features, while elements $\big( \mathsf{A} \big)_{i,j,:}, i \neq j$ to edge features, which include the connectivity information in $E$. Within this context, a graph is defined as a tuple $G = (V, E, \iota, \mathsf{A})$, and isomorphisms and automorphisms are interpreted as permutations acting on $\mathsf{A}$ as:
\begin{equation}
    \big ( \sigma \cdot \mathsf{A} \big )_{ijk} = \mathsf{A}_{\sigma^{-1}(i)\sigma^{-1}(j)k}, \quad \forall \sigma \in S_n
\end{equation}
\noindent where $S_n$ is the set of all bijections (permutations) over $n$ symbols. We then define the concepts of isomorphism and automorphism as:
\begin{definition}[Graph isomorphism and automorphism on attributed graphs]\label{def:isomorphism-automorphism-attr}
    Let $G_1, G_2$ be two attributed graphs represented by tensors $\mathsf{A}_1, \mathsf{A}_2 \in \mathbb{R}^{n^2 \times d}$. An \emph{isomorphism} between $G_1, G_2$ is a bijective map (permutation) $\sigma \in S_n$ such that $\sigma \cdot \mathsf{A}_1 = \mathsf{A}_2$. For a graph $G = \mathsf{A}$, $\sigma \in S_n$ is an \emph{automorphism} if $\sigma \cdot \mathsf{A} = \mathsf{A}$.
\end{definition}

Let us get back to the information conveyed by automorphisms by introducing the following definition:
\begin{definition}[Graph automorphism group]\label{def:auto-group}
    Let $G$ be a simple graph and $\mathbb{A}_G$ be the set of graph automorphisms defined on its vertex set. $\mathrm{Aut}_G = (\mathbb{A}_G, \circ)$ is the group having $\mathbb{A}_G$ as base set and function composition ($\circ$) as its operation.
\end{definition}

Group $\mathrm{Aut}_G$ induces an equivalence relation $\sim_G \subseteq V \times V$: $\forall u, v \in V, u \sim_G v \Leftrightarrow \exists \varphi \in \mathbb{A}_G: u = \varphi(v)$ (it is easily proved that $\sim_G$ is indeed reflexive, symmetric and transitive, thus qualifying as an equivalence relation). The equivalence classes of $\sim_G$ partition the vertex set into \emph{orbits}: $\mathrm{Orb}_G(v) = \big \{ u \in V\ \big |\ u \sim_G v \big \} = \big \{ u \in V\ \big |\ \exists \varphi \in \mathbb{A}_G : u = \varphi(v) \big \}$. The set of orbits identifies all structural roles in the graph. Two vertexes belonging to the same orbit have the same structural role in the graph and are called `symmetric' or `automorphic'.

In fact, it is possible to define automorphisms between node-pairs as well~\citep{srinivasan2019equivalence,zhang2021labeling}:
\begin{definition}[Automorphic node pairs]
    Let $G = \mathsf{A}$ be an attributed graph and $\ell_1 = (u_1, v_1), \ell_2 = (u_2, v_2) \in V \times V$ be two pairs of nodes. We say $\ell_1, \ell_2$ are \emph{automorphic} if there exists an automorphism $\sigma \in \mathbb{A}_G$ such that $\sigma \cdot \ell_1 = (\sigma(u_1), \sigma(v_1)) = (u_2, v_2) = \ell_2$. We write $\ell_1 \sim^{(2)}_G \ell_2$.
\end{definition}
\noindent With the above it is possible to extend the definition of orbits to node-pairs, by considering those node-pair automorphisms which are naturally induced by node ones, i.e.: $\forall \varphi \in \mathbb{A}_G, \varphi^{*}: (u, v) \mapsto (\varphi(u), \varphi(v))$.

Notably, for two node pairs $\ell_1 = (u_1, v_1), \ell_2 = (u_2, v_2) \in V \times V$, $ \big \{ u_1 \sim_G u_2, v_1 \sim_G v_2 \big \} \notimplies \ell_1 \sim^{(2)}_G \ell_2$, i.e., automorphism between nodes does not imply automorphism between node pairs. For instance, one counter-example of automorphic nodes involved in non-automorphic pairs is depicted in Figure~\ref{fig:seal_iso_graph}, while another one is constituted by pair $(v_0, v_2), (v_0, v_3)$ in Figure~\ref{fig:c6}, whose automorphisms are reported in Tables~\ref{tab:c6-node-auto} and~\ref{tab:c6-node-pair-auto}.
% Intuitively, this is because the same, node automorphism is required for node-pair automorphism, while two node pairs may have their nodes automorphic via distinct automorphisms.
This `phenomenon' is the cause of the so-called ``automorphic node problem''.
\begin{definition}[Automorphic node problem] \label{def:automorphic_node_problem}
    Let $\mathcal{M}$ be a family of models. We say $\mathcal{M}$ \emph{suffers from the automorphic node problem} if for any model $M \in \mathcal{M}$, and any simple (attributed) graph $G = (V, E, \iota, \mathsf{A})$ we have that $\forall (u_1, v_1), (u_2, v_2) \in V \times V$, $\big \{ u_1 \sim_G u_2, v_1 \sim_G v_2 \big \} \implies M \big ( (u_1, v_1) \big ) = M \big ( (u_2, v_2) \big )$.
\end{definition}
\noindent The above property is identified as a `problem' because there exist examples of non-automorphic node pairs composed by automorphic nodes. A model with this property would inevitably compute the same representations for these non-automorphic pairs, despite they feature significantly different characteristics, e.g. shortest-path distance or number of common neighbors.

Importantly, as proved by~\cite{srinivasan2019equivalence} and restated by~\cite{zhang2021labeling}, the model family of Message Passing Neural Networks suffer from the aforementioned problem:
\begin{proposition}\label{prop:mpnn-suffers}
Let $\mathcal{M}_{\mathrm{MPNN}}$ be the family of Message Passing Neural Networks (Equation~\ref{eq:mpnn_link_prediction}) representing node-pairs as a function of their computed (equivariant) node representations. $\mathcal{M}_{\mathrm{MPNN}}$ suffers from the automorphic node problem.
\end{proposition}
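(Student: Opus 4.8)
The plan is to isolate a single lemma about pure message passing and then compose it with the pair readout. Fix an arbitrary $M \in \mathcal{M}_{\mathrm{MPNN}}$ with $k$ layers and an arbitrary attributed graph $G = (V, E, \iota, \mathsf{A})$, and write $\mathbf{x}_u^{(l)}$ for the state of node $u$ after $l$ rounds of the update in Equation~\ref{eq:mpnn_link_prediction} (so $\mathbf{x}_u^{(0)}$ is the input feature and $\mathbf{x}_u^{(k)} = \mathbf{y}_u$). The key auxiliary claim is: for every $l \in \{0, \dots, k\}$ and all $u, u' \in V$, $u \sim_G u'$ implies $\mathbf{x}_u^{(l)} = \mathbf{x}_{u'}^{(l)}$.

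I would prove the auxiliary claim by induction on $l$. For $l = 0$, pick an automorphism $\sigma \in \mathbb{A}_G$ with $\sigma(u') = u$; since $\sigma \cdot \mathsf{A} = \mathsf{A}$ (Definition~\ref{def:isomorphism-automorphism-attr}), evaluating the identity $\mathsf{A}_{ijk} = \mathsf{A}_{\sigma^{-1}(i)\sigma^{-1}(j)k}$ at $i = j = u$ gives $\mathsf{A}_{uu:} = \mathsf{A}_{u'u':}$, i.e. $\mathbf{x}_u^{(0)} = \mathbf{x}_{u'}^{(0)}$. The same identity at $(i,j) = (u,w)$ shows $w \in \mathcal{N}(u) \Leftrightarrow \sigma^{-1}(w) \in \mathcal{N}(u')$, so $\sigma$ restricts to a bijection $\mathcal{N}(u') \to \mathcal{N}(u)$. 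For the inductive step, note $u \sim_G u'$ and, for each $w \in \mathcal{N}(u)$, $\sigma^{-1}(w) \sim_G w$; by the induction hypothesis $\mathbf{x}_u^{(l-1)} = \mathbf{x}_{u'}^{(l-1)}$ and $\mathbf{x}_{\sigma^{-1}(w)}^{(l-1)} = \mathbf{x}_w^{(l-1)}$. Re-indexing $w = \sigma(w')$ then shows that the multiset $\{\!\!\{\, \phi^{(l)}(\mathbf{x}_u^{(l-1)}, \mathbf{x}_w^{(l-1)}) : w \in \mathcal{N}(u) \,\}\!\!\}$ coincides with the one for $u'$; since the aggregator $\square$ depends only on this multiset and $\gamma^{(l)}$ is then applied to the pair $(\mathbf{x}_u^{(l-1)}, \square(\cdots))$, which is unchanged, we conclude $\mathbf{x}_u^{(l)} = \mathbf{x}_{u'}^{(l)}$.

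Finally I would close the argument. Given any $(u_1, v_1), (u_2, v_2) \in V \times V$ with $u_1 \sim_G u_2$ and $v_1 \sim_G v_2$, the auxiliary claim at $l = k$ gives $\mathbf{x}_{u_1}^{(k)} = \mathbf{x}_{u_2}^{(k)}$ and $\mathbf{x}_{v_1}^{(k)} = \mathbf{x}_{v_2}^{(k)}$; since $M$ represents a node pair purely through its equivariant node states via some readout $R$, we get $M\big((u_1, v_1)\big) = R(\mathbf{x}_{u_1}^{(k)}, \mathbf{x}_{v_1}^{(k)}) = R(\mathbf{x}_{u_2}^{(k)}, \mathbf{x}_{v_2}^{(k)}) = M\big((u_2, v_2)\big)$, which is precisely Definition~\ref{def:automorphic_node_problem}; as $M$ and $G$ were arbitrary, the whole family $\mathcal{M}_{\mathrm{MPNN}}$ suffers from the automorphic node problem. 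The only real care needed is in the inductive step — making the re-indexing of the message multiset under $\sigma$ fully rigorous — and in observing that the reasoning is insensitive to whether $R$ is symmetric, whether layers use distinct $\phi^{(l)}, \gamma^{(l)}$, and (had edge features been included) that automorphisms preserve them too; the remainder is routine, and this statement is in any case already established by~\cite{srinivasan2019equivalence} and restated by~\cite{zhang2021labeling}.
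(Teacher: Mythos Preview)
Your proof is correct. The paper, however, does not supply its own proof of this proposition: it simply attributes the result to~\cite{srinivasan2019equivalence} (and its restatement in~\cite{zhang2021labeling}) and moves on. So there is no ``paper's own proof'' to compare against beyond that citation.

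What you have written is a self-contained argument that those references essentially contain: induction on layers to show automorphic nodes share representations at every depth, using that an automorphism $\sigma$ preserves input features and restricts to a bijection between neighborhoods, so the message multisets coincide and the permutation-invariant aggregator and update functions cannot tell them apart; then the pair readout inherits the collision. This is the standard route and the details you flag (re-indexing the multiset under $\sigma$, insensitivity to layer-specific $\phi^{(l)}, \gamma^{(l)}$ and to the symmetry of $R$) are exactly the right ones to check. Your closing remark already acknowledges the overlap with the cited literature, so you are aligned with how the paper handles the statement --- you have just made explicit what the paper leaves to references.
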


We conclude these preliminaries by introducing the concept of \emph{link discrimination}, which gives a (more) fine-grained measure of the link representational power of model families.
\begin{definition}[Link discrimination]
    Let $G = (V, E, \iota, \mathsf{A})$ be any simple (attributed) graph and $M$ a model belonging to some family $\mathcal{M}$. Let $\ell_1 = (u_1, v_1), \ell_2 = (u_2, v_2) \in V \times V$ be two node pairs. We say $M$ discriminates pairs $\ell_1, \ell_2$ iff $M(\ell_1) \neq M(\ell_2)$. We write $\ell_1 \neq_M \ell_2$. If there exists such a model $M \in \mathcal{M}$, then family $\mathcal{M}$ distinguishes between the two pairs and we write $\ell_1 \neq_\mathcal{M} \ell_2$. 
\end{definition}
\noindent Accordingly, $\mathcal{M}$ does not discriminate pairs $\ell_1, \ell_2$ when it contains no model instances which assign distinct representations to the pairs. We write $\ell_1 =_\mathcal{M} \ell_2$.

We can compare model families based on their \emph{expressiveness}, that is, their ability to discriminate node pairs:
\begin{definition}[More expressive]\label{def:more_expressive}
    Let $\mathcal{M}_1, \mathcal{M}_2$ be two model families. We say $\mathcal{M}_1$ is more expressive than $\mathcal{M}_2\ \mathrm{iff}\ \forall G = (V, E, \iota, \mathsf{A}), \ell_1, \not\sim^{(2)}_G \ell_2 \in V \times V, \ell_1 \neq_{\mathcal{M}_2} \ell_2 \implies \ell_1 \neq_{\mathcal{M}_1} \ell_2$. We write $\mathcal{M}_1 \sqsubseteq \mathcal{M}_2$.
\end{definition}
Put differently, $\mathcal{M}_1$ is more expressive than $\mathcal{M}_2$ when for any two node pairs, if there exists a model in $\mathcal{M}_2$ which disambiguates between the two pairs, then there exists a model in $\mathcal{M}_1$ which does so as well. When the opposite is not verified, we say $\mathcal{M}_1$ is \emph{strictly} more expressive than $\mathcal{M}_2$:
\begin{definition}[Strictly more expressive] \label{def:strictly_more_expressive}
    Let $\mathcal{M}_1, \mathcal{M}_2$ be two model families. We say $\mathcal{M}_1$ is strictly more expressive than $\mathcal{M}_2\ \mathrm{iff}\ \mathcal{M}_1 \sqsubseteq \mathcal{M}_2 \land \mathcal{M}_2 \not\sqsubseteq \mathcal{M}_1$. Equivalently, $\mathcal{M}_1 \sqsubseteq \mathcal{M}_2 \land \exists G = (V, E, \iota, \mathsf{A}), \ell_1, \not\sim^{(2)}_G \ell_2 \in V \times V,
    \ \mathrm{s.t.}\ \ell_1 \neq_{\mathcal{M}_1} \ell_2 \land \ell_1 =_{\mathcal{M}_2} \ell_2$.
\end{definition}
In other words, $\mathcal{M}_1$ is strictly more expressive than $\mathcal{M}_2$ when there exists no model in the latter family disambiguating between two non-automorphic pairs while there exist some models in the former which do so.

\subsection{Deferred theoretical results and proofs}\label{app:proofs}

\begin{figure}
    \centering
    \includegraphics[width=0.5\textwidth]{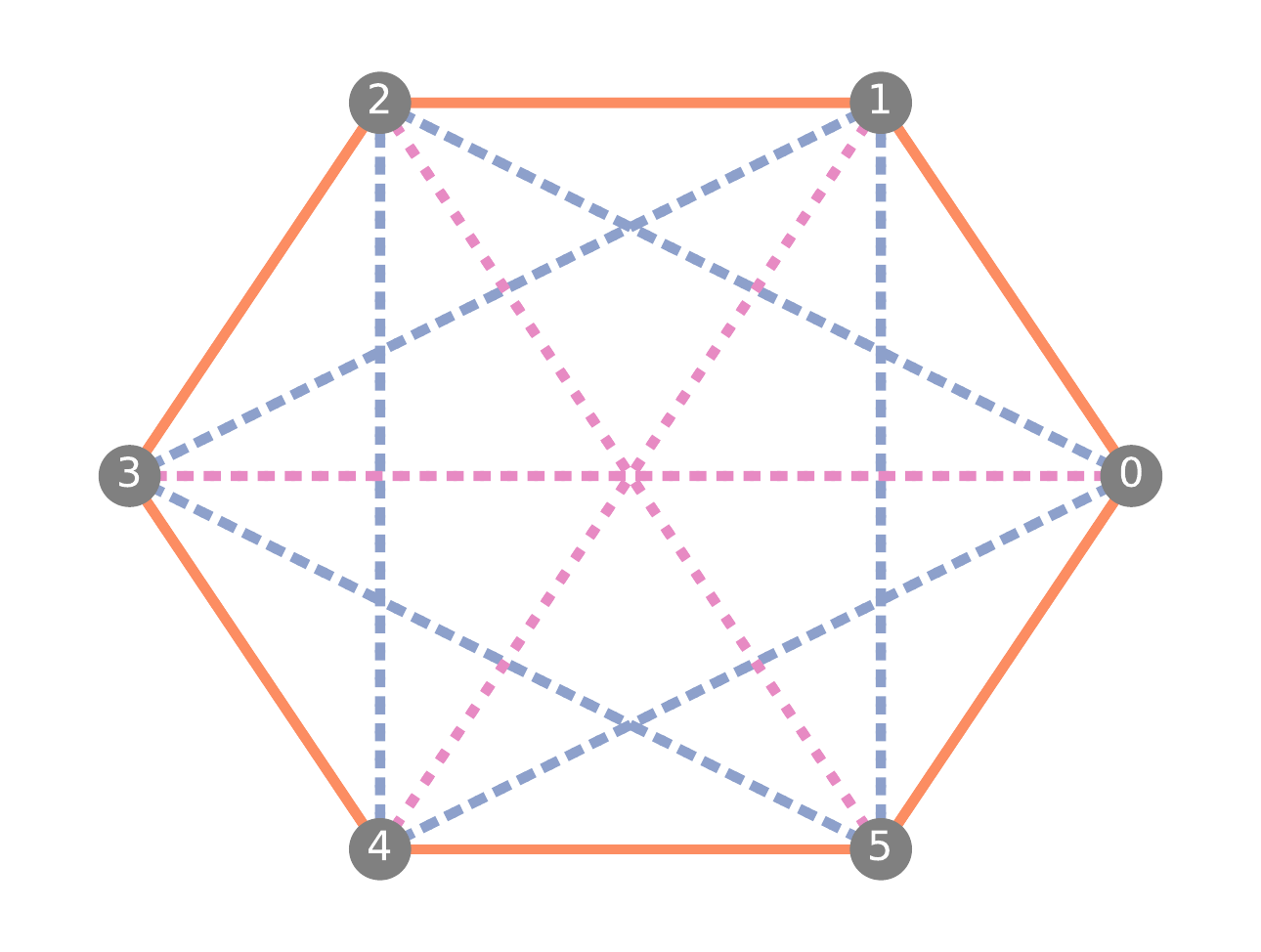}
    \caption{Graph $C_6$ with nodes and node-pairs coloured according to the orbit they belong to. Node-pairs corresponding to actual edges are depicted solid, dashed otherwise. There are $2 \cdot n = 12$ automorphisms which map any node to any other (see Table~\ref{tab:c6-node-auto}). Hence all nodes are in the same, single orbit. On the contrary, node-pairs are partitioned into three distinct orbits (see Table~\ref{tab:c6-node-pair-auto}). As it is possible to notice, pairs $(v_0, v_2)$ and $(v_0, v_3)$ are \emph{not} automorphic, while their constituent nodes are.}
    \label{fig:c6}
\end{figure}

Let us start by reporting the Proof for Proposition~\ref{prop:elph-does-not-suffer}, stating that ELPH models do not suffer from the automorphic node problem.
\begin{proof}[Proof of Proposition~\ref{prop:elph-does-not-suffer}]\label{proof:elph-does-not-suffer}
    In order to prove the Proposition it is sufficient to exhibit an ELPH model which distinguishes between two node-pairs whose nodes are automorphic. Consider $G = C_6$, the chordless cycle graph with $6$ nodes, which we depict in Figure~\ref{fig:c6}. Due to the symmetry of this graph, all nodes are in the same orbit, and are therefore automorphic: we report the set of all graph automorphisms $\mathbb{A}_{G}$ in Table~\ref{tab:c6-node-auto}. Let us then consider pairs $\ell_1 = (v_0,v_2), \ell_2 = (v_0,v_3)$. The two pairs satisfy the premise in the definition of the automorphic node problem, as $v_0 \sim_G v_0, v_2 \sim_G v_3$. One single ELPH message-passing layer ($k=1$) produces the following (exact) structural features. Pair $\ell_1$: $\mathcal{A}_{v_0,v_2}[1,1] = 1, \mathcal{B}_{v_0,v_2}[1] = 1$; pair $\ell_2$: $\mathcal{A}_{v_0,v_3}[1,1] = 0, \mathcal{B}_{v_0,v_3}[1] = 2$. Thus, a one-layer ELPH model $M$ is such that $\ell_1 \neq_M \ell_2$ if the readout layer $p(u,v)= \psi \left(\vec{x}^1_u \odot \vec{x}^1_v, ( \hat{\mathcal{B}}_{uv}[1], \hat{\mathcal{A}}_{uv}[1,1] ) \right) = \hat{\mathcal{A}}_{uv}[1,1]$. An MLP implementing the described $\psi$ is only required to nullify parts of its input and apply an identity mapping on the remaining ones. This MLP trivially exists (it can be even manually constructed).
\end{proof}

We now move to the counterpart of the above result for BUDDY models.
\begin{proposition}\label{prop:BUDDY-does-not-suffer}
    Let $\mathcal{M}_\mathrm{BUDDY}$ be the family of BUDDY models as described per Equations~\ref{eq:readout} and pre-processed features $\mathbf{Z}= \left[\mathbf{X}^{(0)} \mathbin\Vert \mathbf{X}^{(1)} \mathbin\Vert...\mathbin\Vert \mathbf{X}^{(k)} \right]$, where estimates are exact ($\hat{\mathcal{A}} \equiv \mathcal{A}, \hat{\mathcal{B}} \equiv \mathcal{B}$). $\mathcal{M}_\mathrm{BUDDY}$ \emph{does not} suffer from the automorphic node problem.
\end{proposition}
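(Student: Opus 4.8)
The plan is to follow the template of the Proof of Proposition~\ref{prop:elph-does-not-suffer}, exploiting the fact that BUDDY's link predictor (Equation~\ref{eq:readout}, with $\mathbf{z}^{(k)}$ in place of $\mathbf{x}^{(k)}$) consumes exactly the same exact structural features $\hat{\mathcal{A}}_{uv}, \hat{\mathcal{B}}_{uv}$ that ELPH's does. By Definition~\ref{def:automorphic_node_problem}, to establish the Proposition it suffices to exhibit one attributed graph $G$, one BUDDY model $M \in \mathcal{M}_\mathrm{BUDDY}$, and one pair of node-pairs $\ell_1 = (u_1, v_1), \ell_2 = (u_2, v_2)$ with $u_1 \sim_G u_2$ and $v_1 \sim_G v_2$ but $M(\ell_1) \neq M(\ell_2)$.

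First I would reuse the chordless cycle $G = C_6$ from Figure~\ref{fig:c6} together with the pairs $\ell_1 = (v_0, v_2)$ and $\ell_2 = (v_0, v_3)$. As already recorded in the excerpt (Table~\ref{tab:c6-node-auto}), $C_6$ has a single node orbit, so all of its nodes are mutually automorphic; in particular $v_0 \sim_G v_0$ and $v_2 \sim_G v_3$, so the premise of the automorphic node problem is satisfied for this pair of links. Next I would evaluate the exact structural features of Section~\ref{sec:approximating_structure_features} at receptive field $k = 1$: since $v_0$ and $v_2$ share the common neighbor $v_1$ whereas $v_0$ and $v_3$ have none, one obtains $\mathcal{A}_{v_0, v_2}[1,1] = 1$, $\mathcal{B}_{v_0, v_2}[1] = 1$ and $\mathcal{A}_{v_0, v_3}[1,1] = 0$, $\mathcal{B}_{v_0, v_3}[1] = 2$. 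The hypothesis $\hat{\mathcal{A}} \equiv \mathcal{A}$, $\hat{\mathcal{B}} \equiv \mathcal{B}$ guarantees these are precisely the counts fed to the BUDDY predictor.

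Finally I would instantiate $\psi$ in $p(u,v) = \psi\left(\mathbf{z}^{(1)}_u \odot \mathbf{z}^{(1)}_v, \{\hat{\mathcal{B}}_{uv}[1], \hat{\mathcal{A}}_{uv}[1,1]\}\right)$ as an MLP that zeroes out the node-feature block and the $\hat{\mathcal{B}}$-coordinate and returns $\hat{\mathcal{A}}_{uv}[1,1]$ unchanged — a single linear layer with identity activation realizes this, and such a map trivially exists and can even be written out by hand. For the resulting model $M$ we get $M(\ell_1) = 1 \neq 0 = M(\ell_2)$, hence $\ell_1 \neq_M \ell_2$, contradicting the implication in Definition~\ref{def:automorphic_node_problem}; therefore $\mathcal{M}_\mathrm{BUDDY}$ does not suffer from the automorphic node problem.

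The only genuinely new point relative to the ELPH proof — and where a moment of care is needed — is that BUDDY's node representations $\mathbf{Z} = [\mathbf{X}^{(0)} \mathbin\Vert \cdots \mathbin\Vert \mathbf{X}^{(k)}]$ come from a fixed, non-parametric propagation rather than ELPH's learnable message passing. I expect this to be only a mild obstacle: the chosen $\psi$ makes no use of the node-feature information, so the restriction is harmless, and the structural counts — the only ingredient the argument relies on — are computed by BUDDY in exactly the same way as by ELPH (and exactly, by assumption). No statement about BUDDY subsuming MPNNs is required for this result.
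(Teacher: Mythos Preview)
Your proposal is correct and follows essentially the same approach as the paper: both reuse the $C_6$ counterexample with pairs $(v_0,v_2)$ and $(v_0,v_3)$, observe that the readout $\psi$ which ignores node features and outputs $\hat{\mathcal{A}}_{uv}[1,1]$ is a valid BUDDY readout, and conclude that this model separates the two pairs. The paper's version is simply terser, pointing back to the ELPH proof rather than re-deriving the structural counts.
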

\begin{proof}[Proof of Proposition~\ref{prop:BUDDY-does-not-suffer}]\label{proof:BUDDY-does-not-suffer}
    In order to prove the Proposition it is sufficient to notice that the readout in the above Proof completely neglects node features, while only replicating in output the structural features $\mathcal{A}$. This is also a valid BUDDY readout function, and allows BUDDY to output the same exact node-pair representations of the ELPH model described above. As we have observed, these are enough to disambiguate $(v_0, v_2), (v_0, v_3)$ from graph $C_6$ depicted in Figure~\ref{fig:c6}. This concludes the proof.
\end{proof}

In these results we have leveraged the assumption that $\mathcal{A}$ estimates are exact. We observe that, if the MinHash and HyperLogLog estimators are unbiased, it would be possible to choose a sample size large enough to provide distinct estimates for the two counts of interest, so that the above results continue to hold. Unbiased cardinality estimators are, for instance, described in~\citep{ertl2017new}.

\begin{table*}[t]
    \centering
    \caption{Images, for each node, according to the $12$ automorphisms for graph $C_6$. See Figure \ref{fig:c6}.}
    %\resizebox{\textwidth}{!}{%
    \begin{tabular}{l|cccccccccccc}
    \toprule
    vertex & $A_1$ & $A_2$ & $A_3$ & $A_4$ & $A_5$ & $A_6$ & $A_7$ & $A_8$ & $A_9$ & $A_{10}$ & $A_{11}$ & $A_{12}$ \\
    \midrule
    0 & 0 & 0 & 1 & 1 & 2 & 2 & 3 & 3 & 4 & 4 & 5 & 5  \\
    1 & 1 & 5 & 0 & 2 & 1 & 3 & 2 & 4 & 3 & 5 & 0 & 4  \\
    2 & 2 & 4 & 5 & 3 & 0 & 4 & 1 & 5 & 2 & 0 & 1 & 3  \\
    3 & 3 & 3 & 4 & 4 & 5 & 5 & 0 & 0 & 1 & 1 & 2 & 2  \\
    4 & 4 & 2 & 3 & 5 & 4 & 0 & 5 & 1 & 0 & 2 & 3 & 1  \\
    5 & 5 & 1 & 2 & 0 & 3 & 1 & 4 & 2 & 5 & 3 & 4 & 0 \\
    \bottomrule
    \end{tabular}%}
\label{tab:c6-node-auto}
\end{table*}  

\begin{table*}[t]
    \centering
    \caption{Images, for each node-pair, according to the $12$ induced node-pair automorphisms for graph $C_6$. Pairs are considered as undirected (sets). See Figure \ref{fig:c6}. As it is possible to notice, there is no automorphism mapping pair $(v_0, v_2)$ to $(v_0, v_3)$ -- hence the two are in distinct orbits.}
    \resizebox{\textwidth}{!}{%
    \begin{tabular}{l|cccccccccccc}
    \toprule
    pair & $A_1$ & $A_2$ & $A_3$ & $A_4$ & $A_5$ & $A_6$ & $A_7$ & $A_8$ & $A_9$ & $A_{10}$ & $A_{11}$ & $A_{12}$ \\
    \midrule
    (0, 1) &  (0, 1) &  (0, 5) &  (0, 1) &  (1, 2) &  (1, 2) &  (2, 3) &  (2, 3) &  (3, 4) &  (3, 4) &  (4, 5) &  (0, 5) &  (4, 5)  \\
    (0, 2) &  (0, 2) &  (0, 4) &  (1, 5) &  (1, 3) &  (0, 2) &  (2, 4) &  (1, 3) &  (3, 5) &  (2, 4) &  (0, 4) &  (1, 5) &  (3, 5)  \\
    (0, 3) &  (0, 3) &  (0, 3) &  (1, 4) &  (1, 4) &  (2, 5) &  (2, 5) &  (0, 3) &  (0, 3) &  (1, 4) &  (1, 4) &  (2, 5) &  (2, 5)  \\
    (0, 4) &  (0, 4) &  (0, 2) &  (1, 3) &  (1, 5) &  (2, 4) &  (0, 2) &  (3, 5) &  (1, 3) &  (0, 4) &  (2, 4) &  (3, 5) &  (1, 5)  \\
    (0, 5) &  (0, 5) &  (0, 1) &  (1, 2) &  (0, 1) &  (2, 3) &  (1, 2) &  (3, 4) &  (2, 3) &  (4, 5) &  (3, 4) &  (4, 5) &  (0, 5)  \\
    (1, 2) &  (1, 2) &  (4, 5) &  (0, 5) &  (2, 3) &  (0, 1) &  (3, 4) &  (1, 2) &  (4, 5) &  (2, 3) &  (0, 5) &  (0, 1) &  (3, 4)  \\
    (1, 3) &  (1, 3) &  (3, 5) &  (0, 4) &  (2, 4) &  (1, 5) &  (3, 5) &  (0, 2) &  (0, 4) &  (1, 3) &  (1, 5) &  (0, 2) &  (2, 4)  \\
    (1, 4) &  (1, 4) &  (2, 5) &  (0, 3) &  (2, 5) &  (1, 4) &  (0, 3) &  (2, 5) &  (1, 4) &  (0, 3) &  (2, 5) &  (0, 3) &  (1, 4)  \\
    (1, 5) &  (1, 5) &  (1, 5) &  (0, 2) &  (0, 2) &  (1, 3) &  (1, 3) &  (2, 4) &  (2, 4) &  (3, 5) &  (3, 5) &  (0, 4) &  (0, 4)  \\
    (2, 3) &  (2, 3) &  (3, 4) &  (4, 5) &  (3, 4) &  (0, 5) &  (4, 5) &  (0, 1) &  (0, 5) &  (1, 2) &  (0, 1) &  (1, 2) &  (2, 3)  \\
    (2, 4) &  (2, 4) &  (2, 4) &  (3, 5) &  (3, 5) &  (0, 4) &  (0, 4) &  (1, 5) &  (1, 5) &  (0, 2) &  (0, 2) &  (1, 3) &  (1, 3)  \\
    (2, 5) &  (2, 5) &  (1, 4) &  (2, 5) &  (0, 3) &  (0, 3) &  (1, 4) &  (1, 4) &  (2, 5) &  (2, 5) &  (0, 3) &  (1, 4) &  (0, 3)  \\
    (3, 4) &  (3, 4) &  (2, 3) &  (3, 4) &  (4, 5) &  (4, 5) &  (0, 5) &  (0, 5) &  (0, 1) &  (0, 1) &  (1, 2) &  (2, 3) &  (1, 2)  \\
    (3, 5) &  (3, 5) &  (1, 3) &  (2, 4) &  (0, 4) &  (3, 5) &  (1, 5) &  (0, 4) &  (0, 2) &  (1, 5) &  (1, 3) &  (2, 4) &  (0, 2)  \\
    (4, 5) &  (4, 5) &  (1, 2) &  (2, 3) &  (0, 5) &  (3, 4) &  (0, 1) &  (4, 5) &  (1, 2) &  (0, 5) &  (2, 3) &  (3, 4) &  (0, 1)  \\
    \bottomrule
    \end{tabular}}
\label{tab:c6-node-pair-auto}
\end{table*}  

A more precise assessment of the representational power of ELPH can be obtained by considering the node-pairs this model family is able to discriminate w.r.t. others.

\begin{lemma}\label{lemma:elph-more-expressive-than-mpnn}
    Let $\mathcal{M}_\mathrm{ELPH}$ be the family of ELPH models (Equations~\ref{eq:sketches},~\ref{eq:edge_features},~\ref{eq:aggregation}, and~\ref{eq:readout}), $\mathcal{M}_\mathrm{MPNN}$ that of Message Passing Neural Networks (Equation~\ref{eq:mpnn_link_prediction}). $\mathcal{M}_\mathrm{ELPH}$ is more powerful than $\mathcal{M}_\mathrm{MPNN}$ ($\mathcal{M}_\mathrm{ELPH} \sqsubseteq \mathcal{M}_\mathrm{MPNN}$). 
\end{lemma}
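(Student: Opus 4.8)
The plan is to establish the inclusion $\mathcal{M}_\mathrm{ELPH} \sqsubseteq \mathcal{M}_\mathrm{MPNN}$ by showing that the ELPH family \emph{subsumes} the MPNN family: every MPNN (as per Equation~\ref{eq:mpnn_link_prediction}) can be realized as a particular instance of an ELPH model. Given this, whenever an MPNN $M \in \mathcal{M}_\mathrm{MPNN}$ discriminates two non-automorphic pairs $\ell_1 \not\sim^{(2)}_G \ell_2$, the corresponding ELPH instance does so as well, which is exactly the requirement of Definition~\ref{def:more_expressive}. Note that the restriction to non-automorphic pairs in that definition is not even needed for this direction, since the implication holds for all pairs.

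The key step is the construction of the embedding of an arbitrary MPNN into $\mathcal{M}_\mathrm{ELPH}$. Given an MPNN with update functions $\phi^{(l)}, \gamma^{(l)}$ and aggregator $\square$, I would define an ELPH model with the same $\square$, the same $\gamma^{(l)}$, and message function $\tilde{\phi}^{(l)}(\mathbf{x}_u^{(l-1)}, \mathbf{x}_v^{(l-1)}, \mathbf{e}_{u,v}^{(l)}) \triangleq \phi^{(l)}(\mathbf{x}_u^{(l-1)}, \mathbf{x}_v^{(l-1)})$, i.e.\ a message function that simply ignores the structural edge features $\mathbf{e}_{u,v}^{(l)}$ in Equation~\ref{eq:edge_features}. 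Since $\tilde\phi^{(l)}$ is a perfectly valid learnable function in the ELPH formalism (Equation~\ref{eq:aggregation}), the node representations $\mathbf{x}_u^{(l)}$ produced by this ELPH model coincide exactly, layer by layer, with those of the MPNN (the sketch channels $\mathbf{m}_u^{(l)}, \mathbf{h}_u^{(l)}$ are still computed via Equation~\ref{eq:sketches}, but are simply not consumed). Finally, for the readout, I would take the ELPH link predictor in Equation~\ref{eq:readout} with $\psi$ chosen to depend only on its first argument $\mathbf{x}^{(k)}_u \odot \mathbf{x}^{(k)}_v$ (nullifying the structural-count inputs, exactly as done in the Proof of Proposition~\ref{prop:elph-does-not-suffer}), so that it reproduces an arbitrary symmetric readout $R(\mathbf{y}_u, \mathbf{y}_v)$ applied to the MPNN node representations. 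This requires only that the MPNN readout of interest be expressible through a Hadamard-product bottleneck followed by an MLP; for the purpose of the expressiveness comparison it suffices that $\psi$ ranges over a sufficiently rich (universal) class, so any readout separating $\ell_1$ from $\ell_2$ is matched.

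Having shown the embedding, the argument concludes quickly: take any $G$ and any $\ell_1 \neq_{\mathcal{M}_\mathrm{MPNN}} \ell_2$; by definition there is $M \in \mathcal{M}_\mathrm{MPNN}$ with $M(\ell_1) \neq M(\ell_2)$; its ELPH realization $\tilde M$ satisfies $\tilde M(\ell_i) = M(\ell_i)$, hence $\tilde M(\ell_1) \neq \tilde M(\ell_2)$, giving $\ell_1 \neq_{\mathcal{M}_\mathrm{ELPH}} \ell_2$. This is precisely $\mathcal{M}_\mathrm{ELPH} \sqsubseteq \mathcal{M}_\mathrm{MPNN}$.

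I expect the main obstacle to be a careful treatment of the readout layer rather than the message-passing layers. The message-passing side is essentially immediate because ELPH's Equation~\ref{eq:aggregation} is literally an MPNN update with an extra, optional argument, and the sketch recursion runs in parallel without interfering. The readout, however, forces ELPH link predictions through the fixed form $\psi(\mathbf{x}_u \odot \mathbf{x}_v, \text{counts})$ with an elementwise product, whereas a generic MPNN readout $R(\mathbf{y}_u,\mathbf{y}_v)$ need not factor through a Hadamard product; one must either argue that the relevant function class for $R$ in the comparison is taken to be exactly of this form (a modelling convention, matching how link-prediction MPNNs are typically instantiated and how Equation~\ref{eq:mpnn_link_prediction}/the surrounding text defines $R$), or absorb any extra expressive power of $R$ into the last message-passing layer and $\gamma^{(k)}$ so that the Hadamard-plus-MLP readout suffices. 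I would make this assumption explicit (consistent with the paper's usage of $R$), after which the proof of the lemma is routine.
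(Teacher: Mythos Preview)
Your proposal is correct and follows essentially the same approach as the paper: show that ELPH subsumes MPNNs by choosing $\tilde{\phi}^{(l)}$ to ignore the edge features $\mathbf{e}_{u,v}^{(l)}$ and $\psi$ to ignore the structural counts, so that any MPNN distinguishing two pairs is matched by its ELPH realization. Your extra discussion of the Hadamard-product bottleneck in the readout is in fact more careful than the paper's own proof, which simply asserts the reduction without commenting on the form of $R$.
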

\begin{proof}[Proof of Lemma~\ref{lemma:elph-more-expressive-than-mpnn}]
    We prove this Lemma by noticing that the ELPH architecture generalizes that of an MPNN, so that an ELPH model can learn to simulate a standard MPNN by ignoring the structural features. Specifically, ELPH defaults to an MPNN with (1) 
    \begin{align}
        \phi^{(l)}\left(\mathbf{x}_u^{(l-1)}, \mathbf{x}_v^{(l-1)}, \mathbf{e}_{u, v}^{(l)}\right)=\phi^{(l)}\left(\mathbf{x}_u^{(l-1)}, \mathbf{x}_v^{(l-1)}\right),
    \end{align}
    and (2) 
    \begin{align}\psi \left(\vec{x}^k_u \odot \vec{x}^k_v, \{ \hat{\mathcal{B}}_{uv}[d], \hat{\mathcal{A}}_{uv}[d_u, d_v] : \forall \, d,d_u,d_v = 1, \ldots, k  \}\right) = \psi \left(\vec{x}^k_u \odot \vec{x}^k_v\right).
    \end{align}
    This entails that, anytime a specific MPNN instance distinguishes between two non-automorphic node-pairs, there exists an ELPH model which does so as well: the one which exactly simulates such MPNN.
\end{proof}

\begin{lemma}\label{lemma:exists-link-distinguished-by-elph-BUDDY-not-by-mpnn}
    There exist node-pairs distinguished by an ELPH or BUDDY model (with exact cardinality estimates) which are not distinguished by any MPNN model.
\end{lemma}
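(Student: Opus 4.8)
The plan is to recycle the construction already used in the proof of Proposition~\ref{prop:elph-does-not-suffer}: the chordless $6$-cycle $G = C_6$ of Figure~\ref{fig:c6}, together with the two node pairs $\ell_1 = (v_0, v_2)$ and $\ell_2 = (v_0, v_3)$. First I would recall, reading off Table~\ref{tab:c6-node-auto}, that $\mathrm{Aut}_{C_6}$ has order $12$ and acts transitively on the vertex set, so all vertices lie in the single orbit of $G$; in particular $v_0 \sim_G v_0$ and $v_2 \sim_G v_3$. I would then point to Table~\ref{tab:c6-node-pair-auto} to observe that no induced node-pair automorphism sends $(v_0, v_2)$ to $(v_0, v_3)$ — these pairs belong to distinct orbits — hence $\ell_1 \not\sim^{(2)}_G \ell_2$. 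This is exactly the kind of non-automorphic pair that makes the statement non-vacuous in the sense of Definition~\ref{def:more_expressive}.

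For the positive direction (ELPH and BUDDY do distinguish $\ell_1, \ell_2$), I would simply invoke the computation in the proof of Proposition~\ref{prop:elph-does-not-suffer}: with $k = 1$ and exact estimates one gets $\mathcal{A}_{v_0,v_2}[1,1] = 1,\ \mathcal{B}_{v_0,v_2}[1] = 1$ versus $\mathcal{A}_{v_0,v_3}[1,1] = 0,\ \mathcal{B}_{v_0,v_3}[1] = 2$, and the readout $\psi$ can be instantiated as the coordinate projection onto $\hat{\mathcal{A}}_{uv}[1,1]$, yielding an explicit $M \in \mathcal{M}_\mathrm{ELPH}$ with $\ell_1 \neq_M \ell_2$. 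As already noted in the proof of Proposition~\ref{prop:BUDDY-does-not-suffer}, this same readout ignores node features entirely and is therefore also a valid BUDDY readout, so the corresponding BUDDY model separates the pair as well.

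For the negative direction (no MPNN distinguishes them), I would appeal to Proposition~\ref{prop:mpnn-suffers}: the family $\mathcal{M}_\mathrm{MPNN}$ suffers from the automorphic node problem, i.e.\ $\{ u_1 \sim_G u_2,\ v_1 \sim_G v_2 \} \implies M\big((u_1,v_1)\big) = M\big((u_2,v_2)\big)$ for every $M \in \mathcal{M}_\mathrm{MPNN}$. Since $v_0 \sim_G v_0$ and $v_2 \sim_G v_3$ have been established, the hypothesis holds, so $M(\ell_1) = M(\ell_2)$ for every MPNN $M$, that is $\ell_1 =_{\mathcal{M}_\mathrm{MPNN}} \ell_2$. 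Combining the two directions proves the Lemma.

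The work here is essentially bookkeeping rather than new mathematics: the one point to be careful about is that the witness pair is genuinely non-automorphic, which is precisely what Table~\ref{tab:c6-node-pair-auto} certifies, and that the structural-feature counts are computed correctly for $k=1$, paying attention to the exact definitions of $\mathcal{A}$ and $\mathcal{B}$ in Section~\ref{sec:approximating_structure_features} (in particular whether a node is included in its own $1$-hop neighborhood). Everything else is a direct citation of results already established in the excerpt, so I do not expect a substantive obstacle.
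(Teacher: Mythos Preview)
Your proposal is correct and matches the paper's own proof essentially line for line: the paper likewise uses the $C_6$ example with pairs $(v_0,v_2)$ and $(v_0,v_3)$, cites the earlier Propositions~\ref{prop:elph-does-not-suffer} and~\ref{prop:BUDDY-does-not-suffer} for the positive direction, and invokes Proposition~\ref{prop:mpnn-suffers} for the negative one. Your write-up is in fact slightly more explicit (you spell out the orbit transitivity and the specific structural-feature values), but the argument is the same.
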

\begin{proof}[Proof of Lemma~\ref{lemma:exists-link-distinguished-by-elph-BUDDY-not-by-mpnn}]
    The Lemma is proved simply by considering non-automorphic pairs $(v_0, v_2)$, $(v_0, v_3)$ from graph $C_6$ depicted in Figure~\ref{fig:c6}. We have already shown how there exists both an ELPH and BUDDY model (computing exact estimates of $\mathcal{A}$, $\mathcal{B}$) which separate the two. On the other hand, we have already observed how the nodes in the pairs are automorphic as all belonging to the same orbit. Thus, the two pairs cannot possibly be distinguished by any MPNN, since they suffer from the automorphic node problem~\citep{srinivasan2019equivalence,zhang2021labeling}, as remarked in Proposition~\ref{prop:mpnn-suffers}.
\end{proof}

\begin{proof}[Proof of Theorem~\ref{thm:elph-strictly-more-expressive-than-mpnn}]
    The Theorem follows directly from Lemmas~\ref{lemma:elph-more-expressive-than-mpnn} and~\ref{lemma:exists-link-distinguished-by-elph-BUDDY-not-by-mpnn}.
\end{proof}

Does this expressiveness result also extend to BUDDY? While we have proved BUDDY does not suffer from the automorphic node problem, its non-parametric message-passing scheme is such that this family of models do not generally subsume MPNNs. On one hand, in view of Lemma~\ref{lemma:exists-link-distinguished-by-elph-BUDDY-not-by-mpnn}, we know there exist node-pairs separated by BUDDY but not by any MPNN; on the other, this does not exclude the presence of node-pairs for which the vice-versa is true.

\section{Additional Experimental Details}

\subsection{Datasets and Their Properties}\label{sec:datasets}

Table~\ref{tab:subgraph properties} basic properties of the experimental datasets, together with the scaling of subgraph size with hops. Subgraph statistics are generated by expanding $k$-hop subgraphs around 1000 randomly selected links. Regular graphs scale as $\deg^k$, however as these datasets are all complex networks the size of subgraphs grows far more rapidly than this, which poses serious problems for SGNNs. Furthermore, the size of subgraphs is highly irregular with high standard deviations making efficient parallelization in scalable architectures challenging. The one exception to this pattern is DDI. Due to the very high density of DDI, most two-hop subgraphs include almost every node.

\begin{table*}[t]
    \centering
    \caption{Properties of link prediction benchmarks. Confidence intervals are $\pm$ one standard deviation. Splits for the Planetoid datasets are random and Collab uses the fixed OGB splits. Where possible, baseline results for Collab are taken directly from the OGB leaderboard}
    \resizebox{\textwidth}{!}{%
    \begin{tabular}{l ccccccc}
    \toprule 
         &
         \textbf{Cora} &  
         \textbf{Citeseer} & 
         \textbf{Pubmed} &
         \textbf{Collab} &
         \textbf{PPA} &
         \textbf{DDI} &
         \textbf{Citation2}
        \\
         
                  \#Nodes &

         2,708 & 
         3,327 &
         18,717 &
         235,868 &
         576,289 &
         4,267 &
         2,927,963
          \\
         
         \#Edges &
         5,278 & 
         4,676 &
         44,327 &
         1,285,465 &
         30,326,273 &
         1,334,889 &
         30,561,187
          \\

         splits &

         rand &
         rand & 
         rand &
         time &
         throughput &
         time &
         protein \\
          
         avg $\degree$ &
         3.9 &
         2.74 & 
         4.5 &
         5.45 &
         52.62 &
         312.84 &
         10.44
        \\
        
         avg $\degree^2$ &
         15.21 &
         7.51 & 
         20.25 &
         29.70 &
         2769 &
         97,344 &
         109 
        \\
         
         1-hop size &

         $12 {\scriptstyle \pm 15}$ &
         $8 {\scriptstyle \pm 8}$ & 
         $12 {\scriptstyle \pm 17}$ &
         $99 {\scriptstyle \pm 251}$ &
         $152 {\scriptstyle \pm 152}$ &
         $901 {\scriptstyle \pm 494}$ &
         $23 {\scriptstyle \pm 28}$ 
         \\
         
        2-hop size &

         $127 {\scriptstyle \pm 131}$ &
         $58 {\scriptstyle \pm 92}$ & 
         $260 {\scriptstyle \pm 432}$ &
         $115 {\scriptstyle \pm 571}$ & 
         $7790 {\scriptstyle \pm 6176}$ &
         $3830 {\scriptstyle \pm 412}$ &
         $285 {\scriptstyle \pm 432}$ 
         \\ \midrule

\end{tabular}
}
\label{tab:subgraph properties}
\end{table*}

\subsection{Experimental Setup}
In all cases, we use the largest connected component of the graph. %Appendix~\ref{ap:datasets}.
LP tasks require links to play dual roles as both supervision labels and message passing links. For all datasets, at training time the message passing links are equal to the supervision links, while at test and validation time, disjoint sets of links are held out for supervision that are never seen at training time. The test supervision links are also never seen at validation time, but for the Planetoid and ogbl-collab\footnote{The OGB rules allow validation edges to be used for the ogbl-collab dataset.} datasets, the message passing edges at test time are the union of the training message passing edges and the validation supervision edges. OGB datasets have fixed splits whereas for Planetoid, random 70-10-20 percent train-val-test splits were generated. On DDI, NBFnet was trained without learned node embeddings and with a batch size of 5.     

\subsection{Hyperparameters}
The $p$ parameter used by \emph{HyperLogLog} was 8 and the number of permutations used by \emph{MinHashing} was 128.
% In all experiments the adam optimizer~\citep{kingma2014adam} was used.
% hyperparameter tuning
All hyperparameters were tuned using Weights and Biases random search. The search space was over hidden dimension (64--512), learning rate (0.0001--0.01) and dropout (0--1), layers (1--3) and weight decay (0--0.001). Hyperparameters with the highest validation accuracy were chosen and results are reported on a test set that is used only once.

\subsection{Space Complexity}
\label{app:space_complexity}
The space complexity for ELPH is almost the same as a GCN based link predictor. We define the number of feature as $F$, the sketch size $H$ (sum of hll and minhash sketch sizes), number of nodes as $N$, the number of layers $L$, the batch size $B$ and the number of edges as $E$. We split link prediction into i)  learning node representations and ii) predicting  link probabilities from node representations. Learning node representations with GCN has complexity 
\begin{align}
E + LF^2 + LNF, 
\end{align}
where the second and third terms represent the weight matrices and the node representation respectively. ELPH has complexity, 
\begin{align}
E + LF^2 + LN(F + H)
\end{align}
And so the only difference is the additional space required to store the node sketches. The GCN link predictor has space complexity 
\begin{align}
BF + F^2, 
\end{align}
Where the first term is a batch of edges and the second term are the link predictor weight matrices. ELPH also uses structure features, which assuming the number of GNN layers and the number of hops are the same, take up L(L+2) space per node. The link prediction complexity of ELPH is then
\begin{align}
B(F+L(L+2)) + F^2 + F(L(L+2)). 
\end{align}
Typically the number of layers L is 1,2,3 and so the additional overhead is small.
The space complexity of the BUDDY link predictor is the same as ELPH and node representations are built as a preprocessing step. The preprocessing has space complexity 
\begin{align}
E + LNF + LN + LNH,     
\end{align}
which correspond to the adjacency matrix, the propagated node features, the hll cardinality estimates and the minhash and hll sketches respectively. The only difference between this and ELPH is that cardinality estimates are cached (LN) and no weight matrices are used.

\subsection{Implementation Details}
Our code is implemented in PyTorch~\citep{torch}, using PyTorch geometric~\citep{torch_geo}. Code and instructions to reproduce the experiments are available at \url{https://github.com/melifluos/subgraph-sketching}. We utilized either AWS p2 or p3 machines with 8 Tesla K80 and 8 Tesla V100 respectively to perform all the experiments in the paper.

\section{More on Structural Features}

\subsection{Heuristic Methods for Link Prediction}
\label{app:heuristics}
Heuristic methods are classified by the receptive field and whether they measure neighborhood similarity or path length. 
% neighborhood similarity metrics
The simplest neighborhood similarity method is the 1-hop Common Neighbor (CN) count. Many other heuristics such as cosine similarity, the Jaccard index and the Probabilistic Mutual Information (PMI) differ from CN only by the choice of normalization. Adamic-Adar and Resource Allocation are two closely related second order heuristics that penalize neighbors by a function of the degree 
\begin{align}
    \Gamma(u,v) = \sum_{i \in N(u) \cap N(v)}\frac{1}{f(|N(i)|)},
\end{align}
where $N(u)$ are the neighbors of $u$.
% path based metrics
Shortest path based heuristics are generally more expensive to calculate than neighborhood similarity as they require global knowledge of the graph to compute exactly. 
The Katz index takes into account multiple paths between two nodes. Each path is given a weight of $\alpha^d$, where $\alpha$ is a hyperparameter attenuation factor and $d$ is the path length. Similarly Personalized PageRank (PPR) estimates landing probabilities of a random walker from a single source node. For a survey comparing 20 different LP heuristics see e.g.~\citep{lu2011link}.

\subsection{Subgraph Generation Complexity}
\label{sec:subgraph_complexity}

The complexity for generating regular $k$-hop subgraph is $\mathcal{O}(\degree^k)$, where $\degree$ is the degree of every node in the subgraph. For graphs that are approximately regular, with a well defined mean degree, the situation is slightly worse. However, in general we are interested in complex networks (such as social networks, recommendation systems or citation graphs) that are formed as a result of preferential attachment. Complex networks are typified by power law degree distributions of the form $p(\degree) = \degree^{-\gamma}$. As a result the mean is only well defined if $\gamma>2$ and the variance is finite only for $\gamma>3$. As most real world complex networks fall into the range $2 < \gamma < 3$, we have that the maximum degree is only bounded by the number of edges in the graph and thus, so is the complexity of subgraph generation. Table~\ref{tab:subgraph properties} includes the average size of one thousand 1-hop and 2-hop randomly generated graphs for each dataset used in our experiments. In all cases, the size of subgraphs greatly exceeds the average degree baseline with very large variances.

\subsection{Subgraph Sketches}
\label{sec:sketches}

We make use of both the \emph{HyperLogLog} and \emph{MinHash} sketching schemes. The former is used to estimate the size of the union, while the latter estimates the Jaccard similarity between two sets. Together they can be used to estimate the size of set intersections.

\subsubsection{Hyperloglog}
\label{app:hll}
% hyperloglog
HyperLogLog efficiently estimates the cardinality of large sets. It accomplishes this by representing sets using a constant size data sketch. These sketches can be combined in time that is constant w.r.t the data size and linear in the sketch size using elementwise maximum to estimate the size of a set union. 
% The size of the sketch implies standard error of the estimate. Increasing the size of the sketch decreases the standard error of the estimate. 

The algorithm takes the precision $p$ as a parameter. From $p$, it determines the number of registers to use. The sketch for a set $S$ is comprised of $m$ registers $M_{1} \ldots M_{m}$ where $m = 2^{p}$. A hash function $h(s)$ maps elements from $S$ into an array of 64-bits. The algorithm uses the first $p$ bits of the hash to associate elements with a register. From the remaining bits, it computes the number of leading zeros, tracking the maximum number of leading zeros per register. Intuitively a large number of leading zero bits is less likely and indicates a higher cardinality and for a single register the expected cardinality for a set where the maximum number of leading zeros is $n$ is $2^n$. This estimate is highly noisy and there are several methods to combine estimates from different registers. We use hyperloglog++~\citep{heule2013hyperloglog}, for which the standard error is numerically close to $1.04/sqrt(m)$ for large enough $m$.

Hyperloglog can be expressed in three functions Initialize, Union, and Card. Sketches are easily merged by populating a new set of registers with the element-wise max values for each register. To extract the estimate, the algorithm finds the harmonic mean of $2^{M[m]}$ for each of the the $m$ registers. This mean estimates the cardinality of the set divided by $m$. To find the estimated cardinality we multiply by $m$ and $\alpha_m$. $\alpha_m$ is used to correct multiplicative bias. Additional information about the computation of $\alpha_m$ along with techniques to improve the estimate can be found in \citep{hyperloglog,heule2013hyperloglog}. The full algorithm is presented in Algorithm~\ref{alg:hll}.

\paragraph{Complexity}
The Initialize operation has a $\mathcal{O}(m + |S|)$ running time. Union and Card are both $\mathcal{O}(m)$ operations. The size of the sketch for a 64-bit hash is $6*2^p$ bits. 
 
\subsubsection{Minhashing}
% minhashing
The MinHash algorithm estimates the Jaccard index. It can similarly be expressed in three functions Initialize, Union, and $J$. The $p$ parameter is the number of permutations on each element of the set. $P_i(x)$ computes a permutation of input $x$ where each $i$ specifies a different permutation. The algorithm stores the minimum value for each of the $p$ permutations of all hashed elements. The Jaccard estimate of the similarity of two sets is given by the Hamming similarity of their sketches. The full algorithm is presented as Algorithm~\ref{alg:minhash}.

\paragraph{Complexity}
The Initialize operation has a $\mathcal{O}(np|S|)$ running time. Union and $J$ are both $\mathcal{O}(np)$ operations. The size of the sketch is $np$ longs. Minhashing gives an unbiased estimate of the Jaccard with a variance given by the Cramer-Rao lower bound that scales as $\mathcal{O}(1/np)$~\citep{chamberlain2018real}.

\begin{algorithm}
\caption{HyperLogLog: Estimate cardinality}
\begin{algorithmic}
\State Parameter $p$ is used to control precision.
\State $m = 2^p$
\Procedure{HLLInitialize}{$S$}
  \For{$i \in range(m)$}
    \State $M[i] = 0$
  \EndFor
  \For{$v \in S$}
    \State $x = h(v)$
    \State $idx = \langle x_{31}, \ldots, x_{32-p} \rangle_2$
    \State $w = \langle x_{{31-p}}, \ldots, x_{0} \rangle_2$
    \State $M[idx] = max(M[idx], \varrho(w)))$
  \EndFor
  \State \Return $M$
\EndProcedure
\State
\Procedure{HLLUnion}{$M1$, $M2$}
  \For{$i \in range(m)$}
    \State $M[i] = max(M1[i], M2[i]))$
  \EndFor
  \State \Return $M$
\EndProcedure
\State
\Procedure{Card}{$M$}
  \State \Return $\alpha_m m^2 ( \sum_{i=0}^m 2^{-M[i]} )^{-1}$
\EndProcedure
\end{algorithmic}
\label{alg:hll}
\end{algorithm}

\begin{algorithm}
\caption{MinHash: Estimate Jaccard Similarity}
\begin{algorithmic}
\State Parameter $np$ controls the number of permutations.
\Procedure{MinHashInitialize}{$S$}
  \For{$i \in range(np)$}
    \State $M[i] = max value$
  \EndFor
  \For{$v \in S$}
    \State $x = h(v)$
    \For{$i \in range(np)$}
      \State $M[i] = min(M[i], P_i(x))$
    \EndFor
  \EndFor
  \State \Return $M$
\EndProcedure
\State
\Procedure{MinHashUnion}{$M1$, $M2$}
  \For{$i \in range(np)$}
    \State $M[i] = min(M1[i], M2[i]))$
  \EndFor
  \State \Return $M$
\EndProcedure
\State
\Procedure{J}{$M1$, $M2$}
  \State $num\_equal = 0$
  \For{$i \in range(np)$}
    \If{$M1[i] = M2[i]$}
      \State $num\_equal++$
    \EndIf
  \EndFor
  \State \Return $num\_equal / np$
\EndProcedure
\end{algorithmic}
\label{alg:minhash}
\end{algorithm}

\subsection{Labeling Schemes}
\label{app:labeling_schemes}

Empirically, we found the best performing labeling scheme to be DRNL, which scores each node in $S_{uv}$ based on it's distance to $u$ and $v$ with the caveat that when scoring the distance to $u$, node $v$ and all of its edges are masked and vice versa. This improves expressiveness as otherwise for positive edges every neighbor of $u$ would always be a 2-hop neighbor of $v$ and vice-versa. The result of masking is that distances can be very large even for 2-hop graphs (for instance one node may have an egonet that is a star graph with a ring around the outside. If the edges of the star are masked then the ring must be traversed). The first few DRNL values are
\begin{enumerate}
    \item disconnected nodes: $(\infty,0), (0,\infty) \to 0$
    \item link nodes: $(0,1), (1,0) \to 1$
    \item common neighbors: $(1,1) \to 2$
    \item 12-common-neighbors $(1,2), (2,1) \to 3$
    \item 2-hop common neighbors: $(2,2) \to 4$
\end{enumerate}
and the pattern has a hash function given by
\begin{align}
    f_l(i) = 1 + \min(d_{ui},d_{vi} + (d/2)[(d//2)+d\%2)-1]
\end{align}
where $d=d_{ui} + d_{vi}$. It is slightly suboptimal to assign infinite distances to 0, which is at least part of the reason that they are one-hot encode them as labels. Indeed, DE~\citep{li2020distance} uses a max distance label, which they claim reduces overfitting.

\subsection{Substructure Counting and Curvature}
\label{sec:curvature}
The eight features depicted in Figure~\ref{fig:intersections} can be used to learn local substructures. $\mathcal{A}_{uv}[1,1]$ counts the number of triangles that $(u,v)$ participates in assuming that the edge $(u,v)$ exists. $\mathcal{A}_{uv}[2,1]$ and $\mathcal{A}_{uv}[1,2]$ will double count a four-cycle and single count a four-cycle with a single diagonal. The model can not distinguish a four-clique from two triangles or a five-clique from three triangles. $\mathcal{A}_{uv}[2,2]$ counts five cycles. LP has also  recently been shown to be closely related to notions of discrete Ricci curvature on graphs~\citep{topping2021understanding}, a connection that we plan exploring in future work. 

\subsection{Example Structure Features}

\begin{figure}
    \centering
    \includegraphics[width=0.8\textwidth]{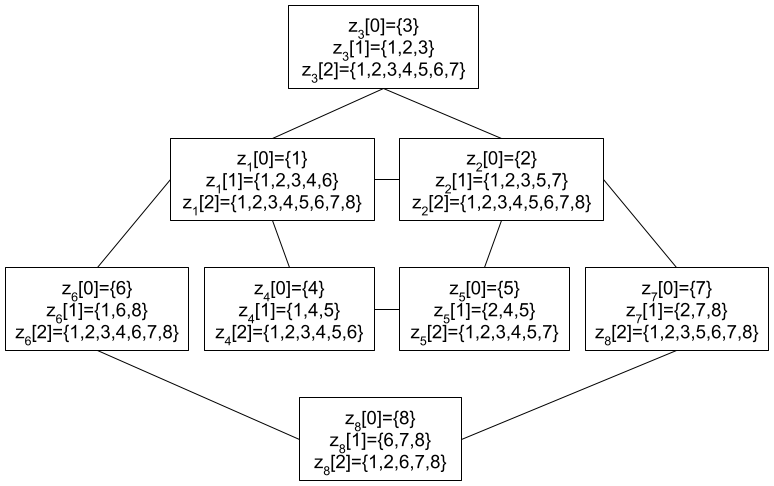}
    \caption{Computed z features}
    \label{fig:computedz}
    % https://docs.google.com/drawings/d/11Fq3gUK0OcwGaN4TRadBYQOhZGyd3n3Z-M62ESzpJx4/edit
\end{figure}

Figure~\ref{fig:computedz} provides an example of how structure features are calculated from a subgraph. The figure provides the $z$ values for each node in an eight node subgraph. From these the structure features are calculated. As an example
% $\mathcal{A}_{67}[2][1]$ and $\mathcal{B}_{67}[1]$. 

\begin{align}
    \mathcal{A}_{67}[2,1] &= |\{2,3,4,6,7\} \cap \{2,8\}| = |\{2\}| = 1\\
    \mathcal{B}_{67}[2] &= |\{2,3,4,6,7\} \setminus \{1,2,3,5,6,7,8\}| = |\{4\}| = 1
\end{align}

The value of $\mathcal{A}_{67}[2,1] = 1$ indicates that the is one node in common between the two-hop neighbors of node 6 and the 1-hop neighbors of node 7. The common node is node 2. Similarly $\mathcal{B}_{67}[2] = 1$ means there is one element in the two-hop neighbors of 6 that is not in any of the k-hop neighbors of 7: node 4.

\section{Additional Experiments}
\label{app:additional_exps}
\subsection{Runtimes and Discussion}

\label{sec:runtimes}

The results in Tables~\ref{tab:wall_time} and \ref{tab:wall_time_extended} are obtained by running methods on a single Tesla K80 GPU on an AWS p2 machine. In all cases SEAL used GCN (fastest GNN) and parameters were taken from the SEAL OGB repo. For the OGB datasets, SEAL runtimes are estimated based on samples due to the high runtimes.

% discussion of preprocessing
Table~\ref{tab:preproc} breaks BUDDY preprocessing times down into (i) generating hashes and (ii) propagating features for each node, where the same values are used for  training and inference and (iii) constructing structure features from hashes for each query edge, which has a separate cost for inference. We stress that both forms of preprocessing depend only on the dataset and so must only be done once ever for a fixed dataset and not e.g. after every epoch. This is akin to the static behavior of SEAL~\citep{zhang2018link}, where subgraphs are constructed for each edge (both training and inference) as a preprocessing step.

\begin{table}[t]
\caption{Wall time of our methods in comparison to SEAL in dynamic and static mode. Training time is for one epoch. Inference time is for the full test set. Due to high runtimes, values for SEAL are estimated from samples for the OGB datasets}
    \resizebox{0.7\textwidth}{!}{%
    \begin{tabular}{ll ccccc}
    \toprule 
        \textbf{dataset} &
         \textbf{Wall time (sec)} &  
         \textbf{SEAL dyn}&
         \textbf{SEAL stat}&
         \textbf{ELPH} &
         \textbf{Buddy} \\
         \toprule
         
         & Preprocessing &0 & 57 & 0 &0.4 \\
         
         Cora & Training (1 epoch) &9  & 5 & 2 & 0.7\\
         
         & Inference &3 & 2 & 0.05 & 0.04 \\

         \midrule

         & Preprocessing &0 & 48 & 0 & 0.7 \\
         
         Citeseer & Training (1 epoch) & 13 & 5 & 1 & 0.5\\
         
         & Inference & 20 & 2 & 0.03 & 0.07 \\

         \midrule

         & Preprocessing &0  & 630 & 0 & 5 \\
         
         Pubmed & Training (1 epoch) &70 & 32 & 25 & 1 \\
         
         & inference &23 & 9 & 0.1 & 0.06 \\     

        \midrule

         & Preprocessing &0  & $\sim$25000 & 0& 43 \\
         
         Collab & Training (1 epoch) & $\sim$5,000 & $\sim$330 & 2100 & 105 \\
         
         & Inference & $\sim$5,000 & $\sim$160 & 2 & 1 \\

         \midrule

         & Preprocessing &0  & $\sim$900,000 &  & 840 \\
         
         PPA & Training (1 epoch) & $\sim$130,000 & $\sim$150,000 & OOM & 75 \\
         
         & Inference & $\sim$20,000 & $\sim$16,000 & & 18 \\

          \midrule

         & Preprocessing &0  & $\sim$ 3,000,000 &  & 1,200 \\
         
         Citation & Training (1 epoch)  & $\sim$300,000 & $\sim$200,000 & OOM & 1,450 \\
         
         & inference & $\sim$300,000 & 
         $\sim$100,000 & & 280 \\

        \midrule

         & Preprocessing &0  & $\sim$400,000 & 0 & 66 \\
         
         DDI & Training (1 epoch) & $\sim$150,000 & $\sim$250,000 & 30 & 27 \\
         
         & Inference & $\sim$1,000 & $\sim$19,000 & 0.6 & 0.4 
         
\end{tabular}\label{tab:wall_time_extended}
}
\quad
\vspace{2mm}
\end{table}  

\begin{table}[t]
    \centering
    \caption{The three types of preprocessing used in BUDDY and associated wall times. Entity gives the entity associated with the preprocessed features. Each node has a hash and propagated features while each edge requires structure features}
    \resizebox{\textwidth}{!}{%
    \begin{tabular}{l cccccccc}
    \toprule 
    
         \textbf{Wall time (sec)} &  
         \textbf{Entity} &  
         \textbf{Cora}&
         \textbf{Citeseer}&
         \textbf{Pubmed} &
         \textbf{Collab} &
         \textbf{PPA} &
         \textbf{Citation} &
         \textbf{DDI} \\
         \toprule
         
         hashing & node & 0.12 & 0.1 & 1.04 & 26 & 469 & 714 & 20.3 \\
         
         feature propagation & node & 0.27 & 0.60 & 0.67 & 4.6 & 77 & 126 & NA \\
         
         structure features & edge & 0.02 & 0.01 & 3.3 & 12.0 & 294 & 393 & 45.76 \\
         \bottomrule
         
\end{tabular}
}
\label{tab:preproc}
\end{table}

\subsection{Ablation Studies}
\label{app:ablation_studies}
This section contains ablation studies for (i) the affect of varying the number of \emph{Minhash} permutations and the \emph{HyperLogLog} $p$ parameter and (ii) removing either node features or structure features from BUDDY.

\subsubsection{Hashing Parameter Ablation}

Figures~\ref{fig:collab_hashing_abl} and \ref{fig:planetoid_hashing_abl} are ablation studies of the hashing parameters that trade-off between accuracy and time/space complexity of the intersection estimates. The method is relatively insensitive to both parameters allowing smaller values to be chosen when space / time complexity is a constraint. Figure~\ref{fig:minhash_abl} shows that good performance is achieved providing more than 16 minhash permutations are used, while Figure~\ref{fig:hll_abl} shows that $p$ can be as low as 4 in the hyperloglog procedure. For Figure~\ref{fig:planetoid_hashing_abl} values were calculated with no node features to emphasize the affect of only the hashing parameters. This was not required for Figure~\ref{fig:collab_hashing_abl} because relatively speaking the node features are less important for Collab (See Table~\ref{tab:feature_ablation}).

\begin{figure}
    \centering
\begin{subfigure}[t]{0.45 \textwidth}
    \includegraphics[width=\linewidth]{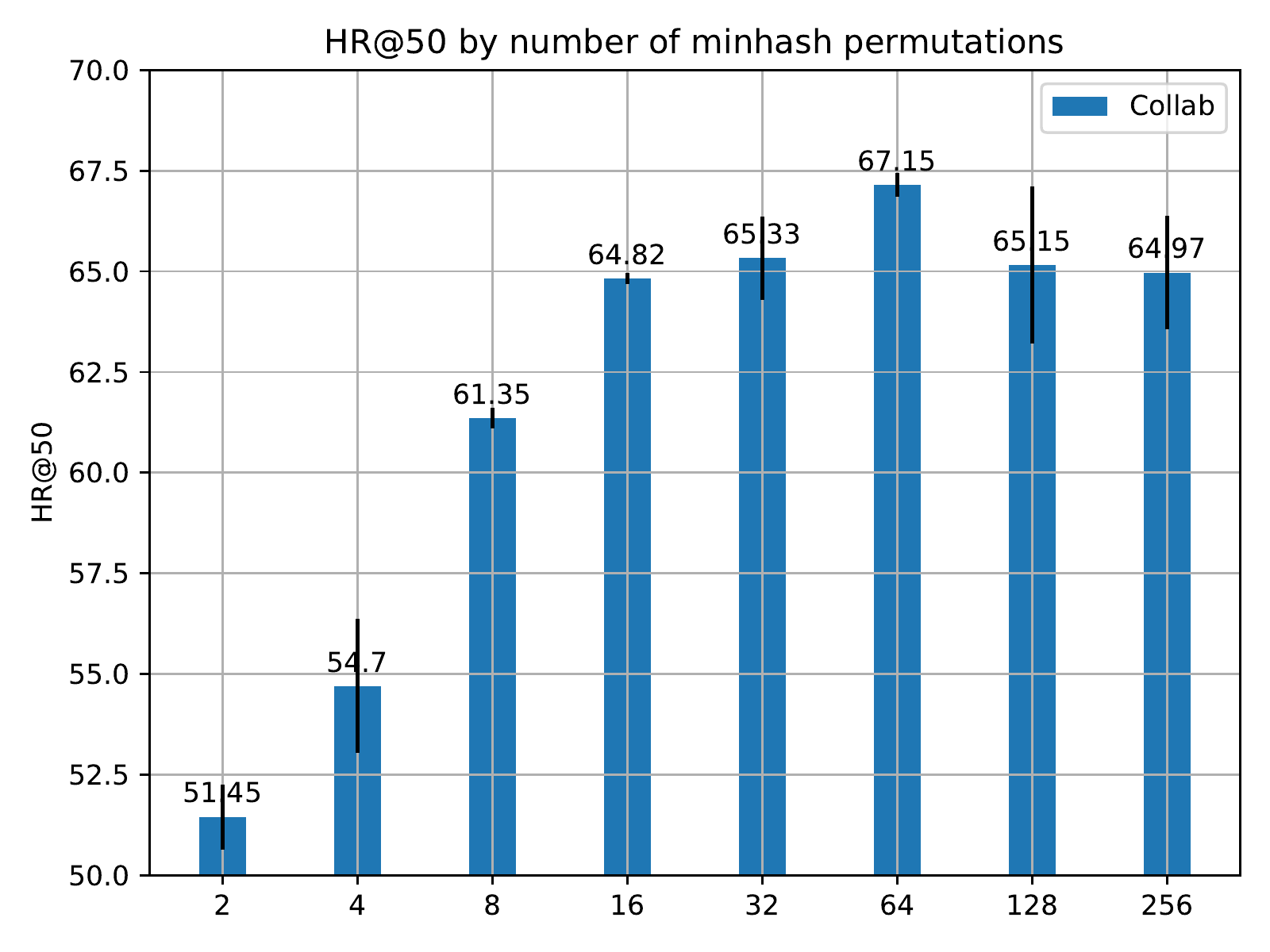}
    \caption{number of minhash permutations.}
    \label{fig:minhash_abl}
\end{subfigure}%
\hspace{5mm}
\begin{subfigure}[t]{0.45 \textwidth}
    \centering
    \includegraphics[width=\linewidth]{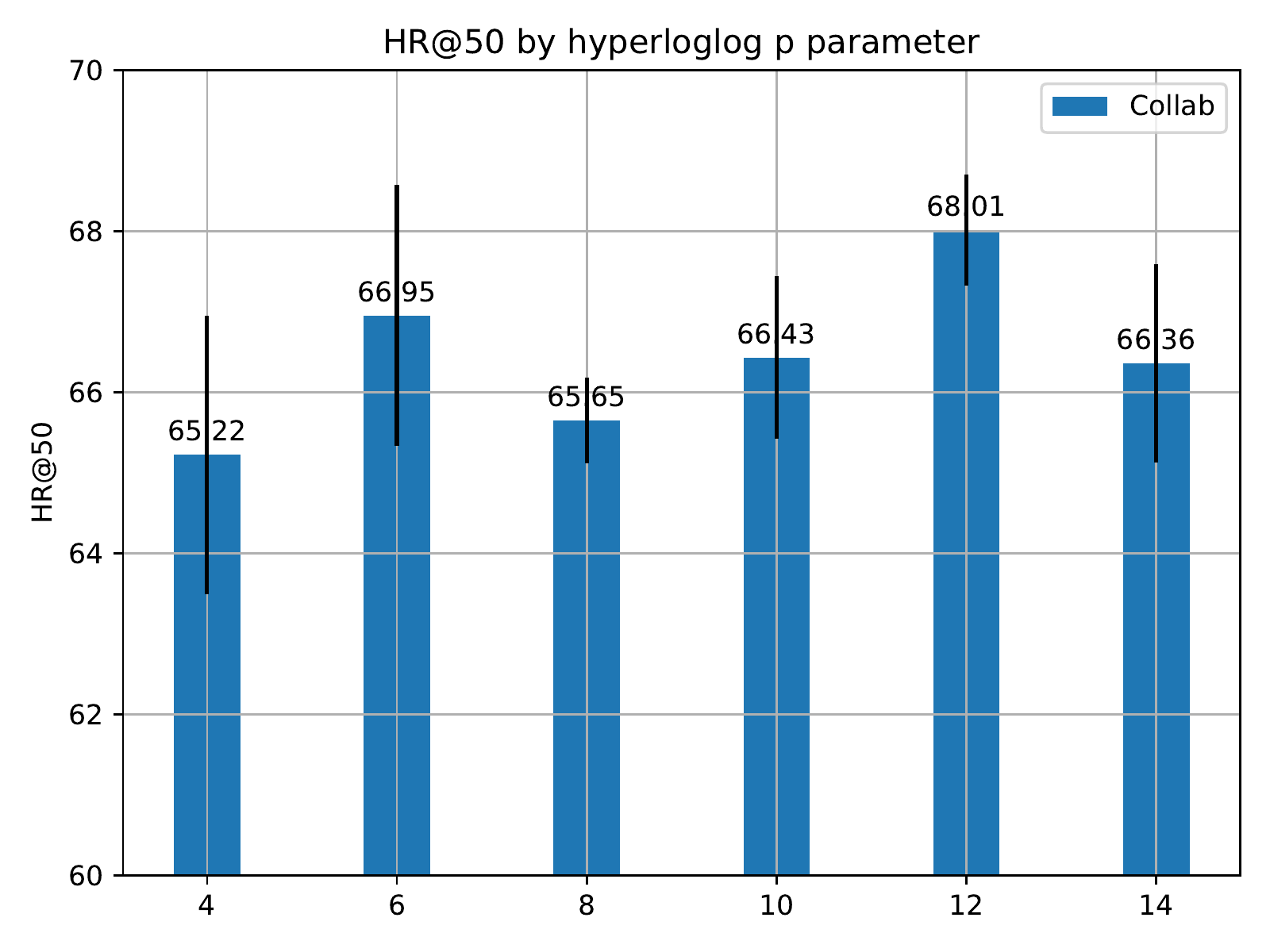}
    \caption{hyperloglog $p$ parameter}
    \label{fig:hll_abl}
\end{subfigure}
\caption{Ablation study for hashing parameters for Collab dataset}
\label{fig:collab_hashing_abl}
\end{figure}

\begin{figure}
    \centering
\begin{subfigure}[t]{0.45 \textwidth}
    \includegraphics[width=\linewidth]{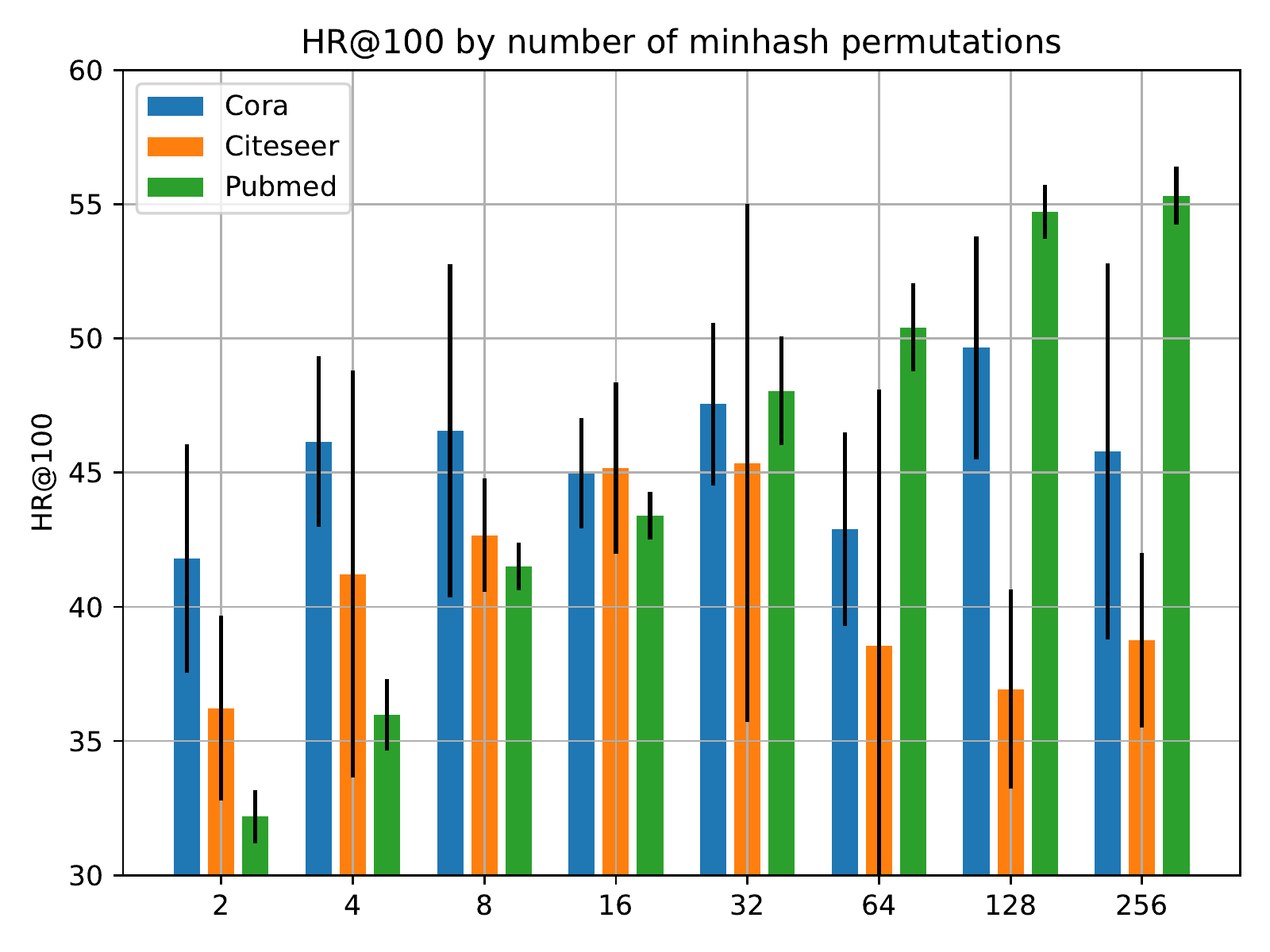}
    \caption{number of minhash permutations.}
    \label{fig:minhash_abl_planet}
\end{subfigure}%
\hspace{5mm}
\begin{subfigure}[t]{0.45 \textwidth}
    \centering
    \includegraphics[width=\linewidth]{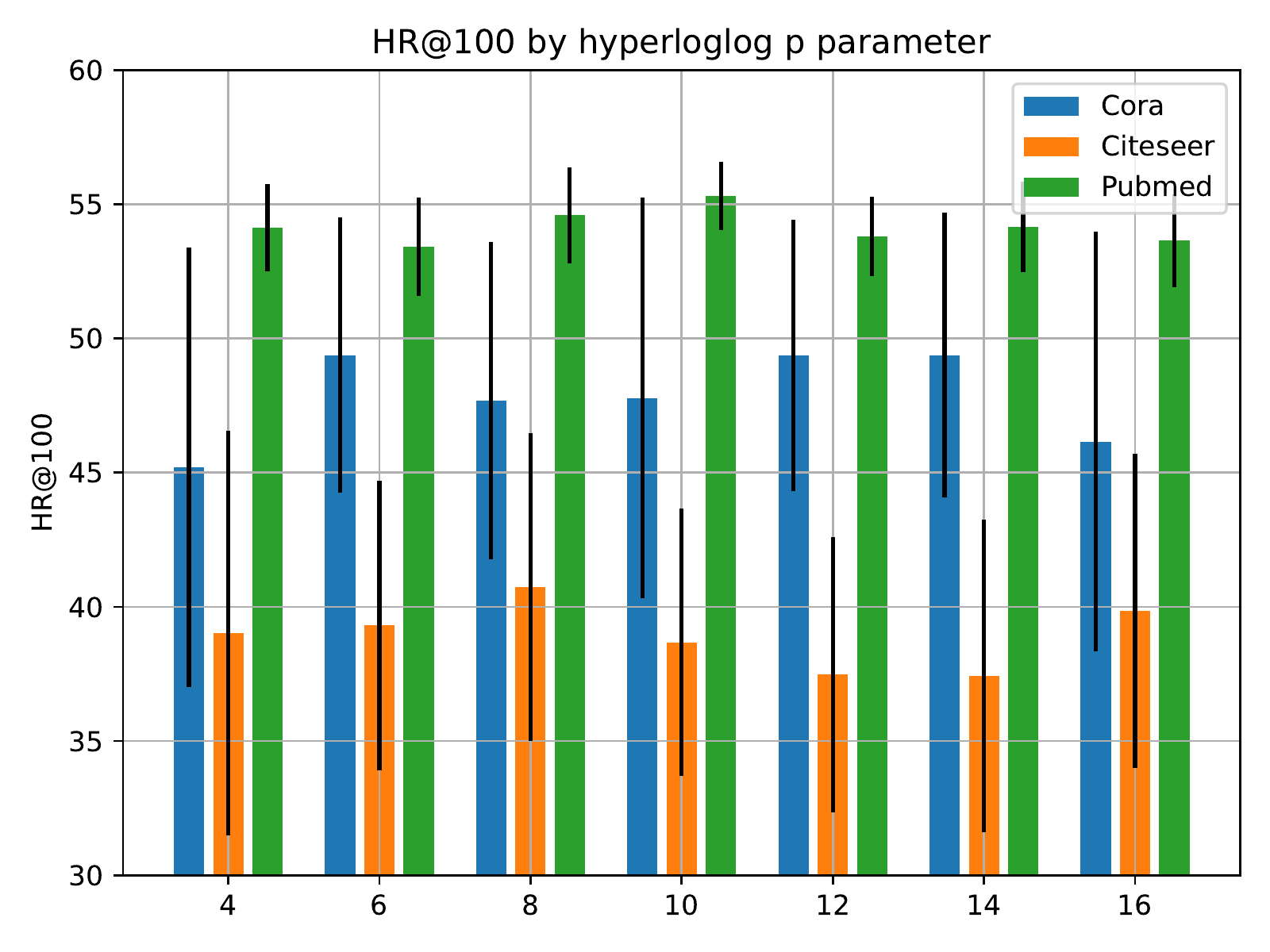}
    \caption{hyperloglog $p$ parameter}
    \label{fig:hll_abl_planet}
\end{subfigure}
\caption{Ablation study for hashing parameters for Planetoid datasets}
\label{fig:planetoid_hashing_abl}
\end{figure}

\begin{figure}
    \centering
\begin{subfigure}[t]{0.45 \textwidth}
    \includegraphics[width=\linewidth]{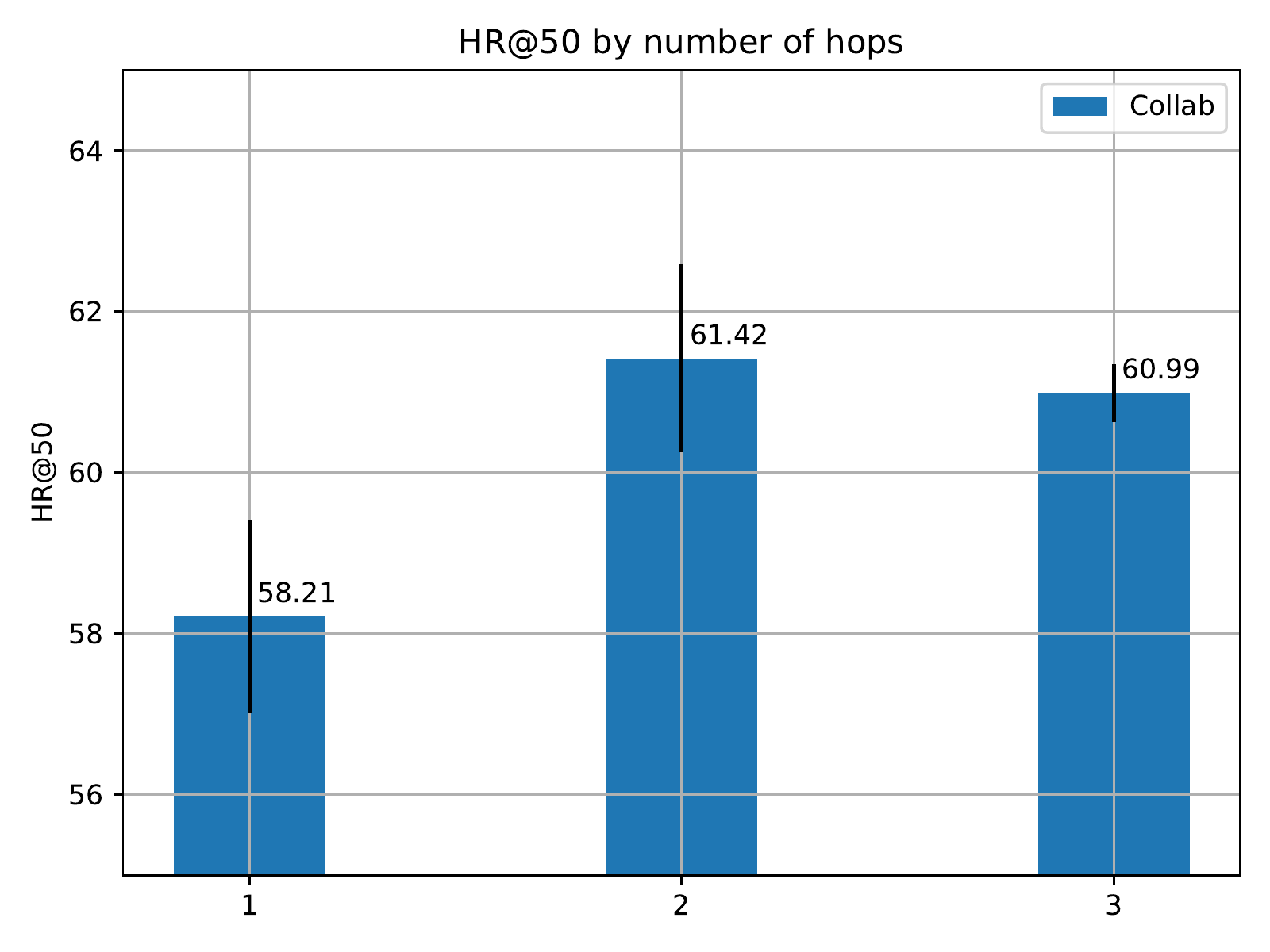}
    \caption{Collab dataset}
    \label{fig:collab_hops_abl}
\end{subfigure}%
\hspace{5mm}
\begin{subfigure}[t]{0.45 \textwidth}
    \centering
    \includegraphics[width=\linewidth]{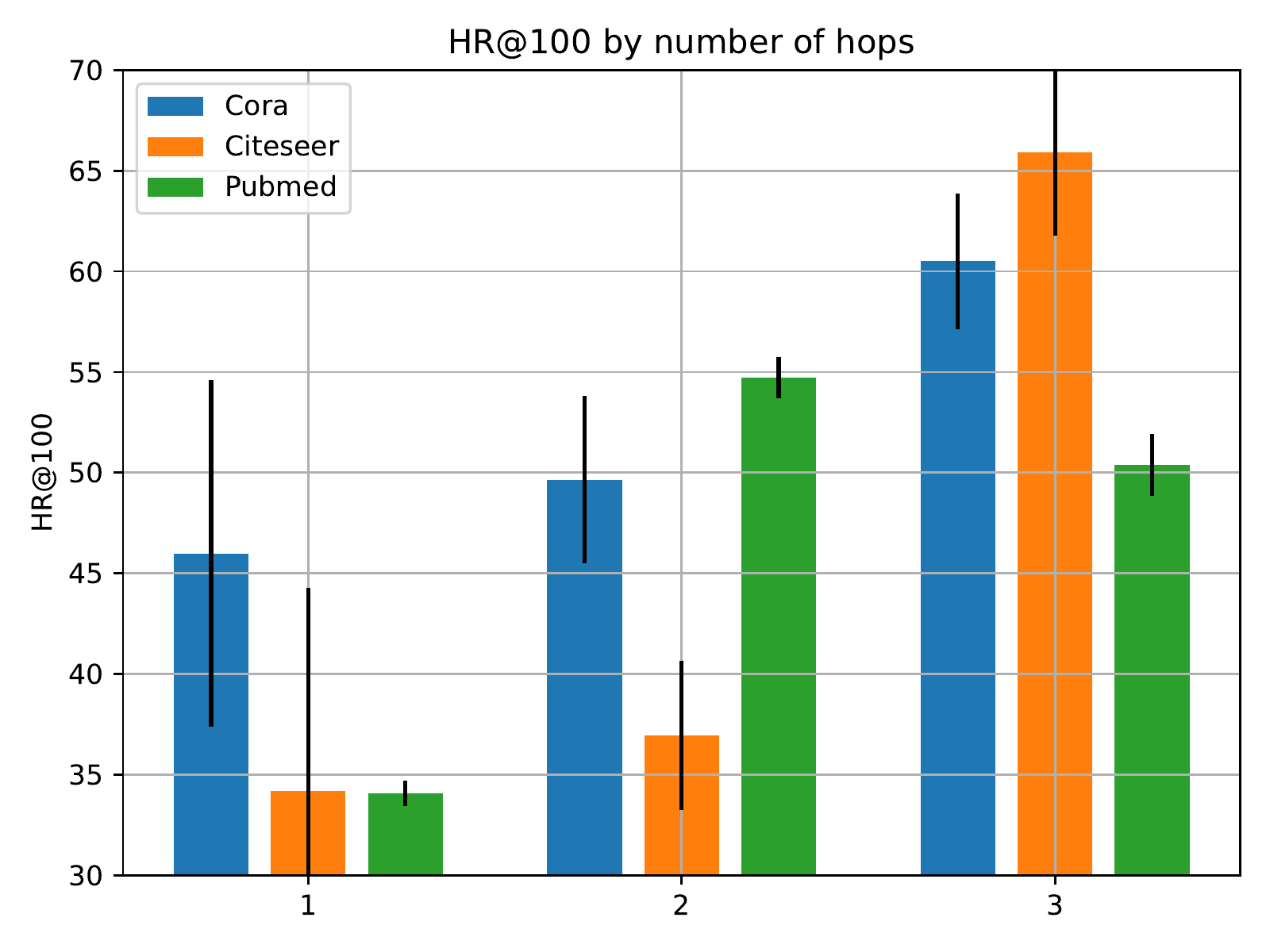}
    \caption{Planetoid datasets}
    \label{fig:planetoid_hops_abl}
\end{subfigure}
\caption{Ablation study for the number of hops}
\label{fig:hops_abl}
\end{figure}

% runtimes
Data sketching typically introduces a tradeoff between estimation accuracy and time and space complexity. However, in our model, the time complexity for generating structure features from hashes is negligible at both training and inference time compared to the cost of a forward pass of the MLP. The only place where these parameters have an appreciable impact on runtimes is in preprocessing the hashes. When preprocessing the hashes, the cost of generating hyperloglog sketches is also negligible compared to the cost of the minhash sketches. The relationship between the number of minhash permutations and the runtime to generate the hashes is shown in Figure~\ref{fig:hash_runtimes}.

\begin{figure}
    \centering
\begin{subfigure}[t]{0.45 \textwidth}
    \includegraphics[width=\linewidth]{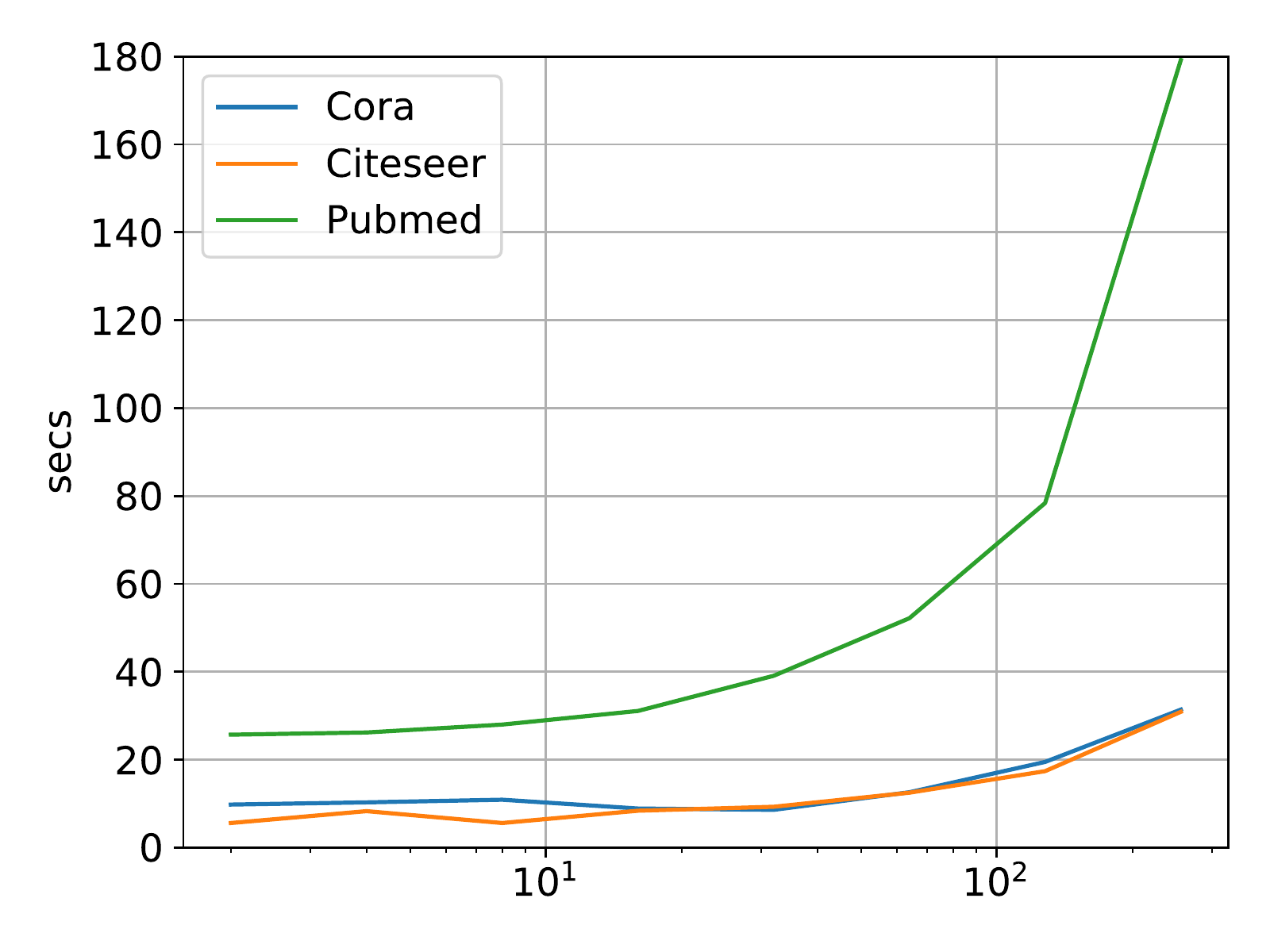}
    \caption{Planetoid datasets.}
    \label{fig:planetoid_hash_runtimes}
\end{subfigure}%
\hspace{5mm}
\begin{subfigure}[t]{0.45 \textwidth}
    \centering
    \includegraphics[width=\linewidth]{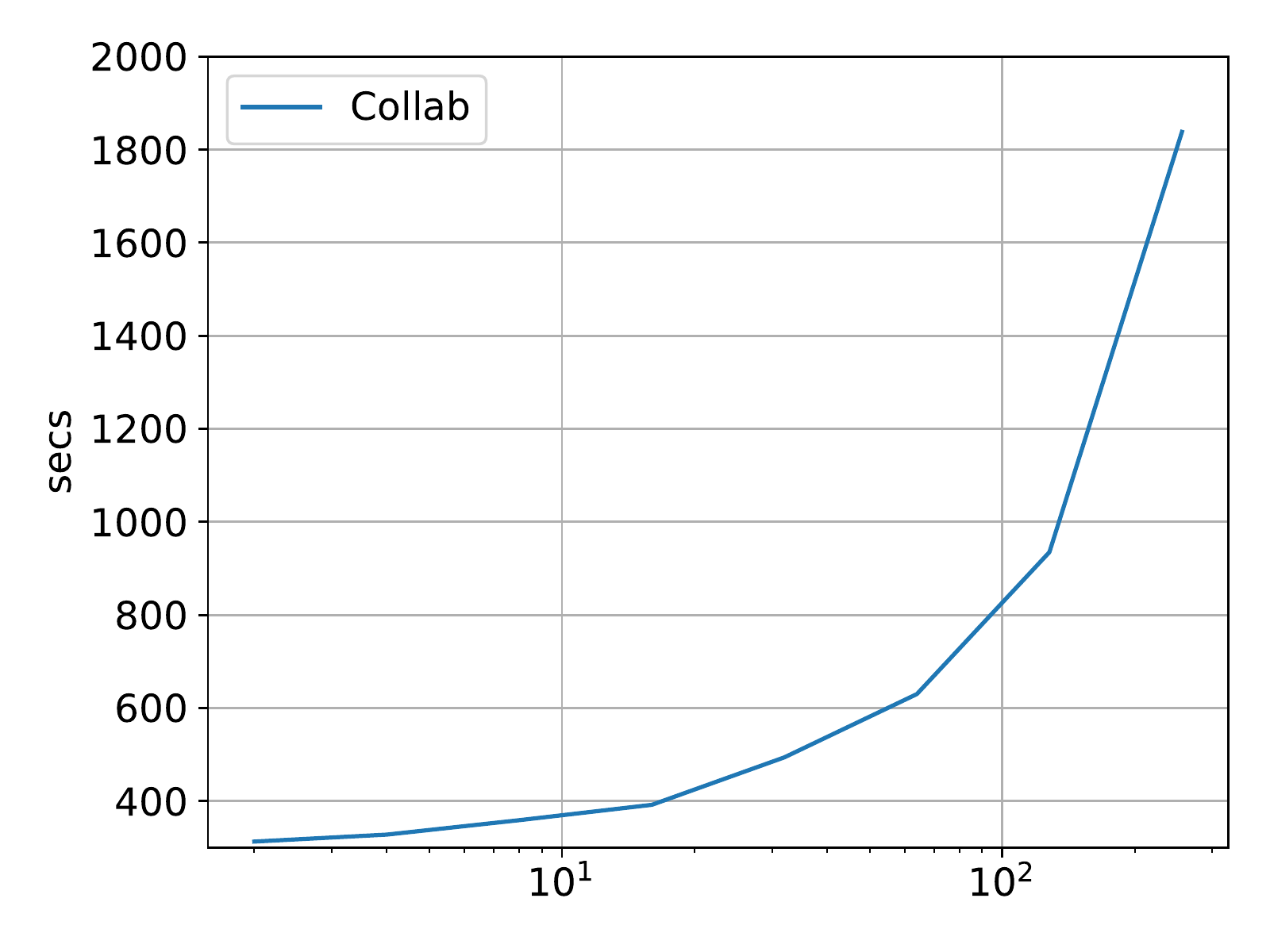}
    \caption{Collab}
    \label{fig:collab_hash_runtimes}
\end{subfigure}
\caption{runtime of hash generation against number of minhash permutations}
\label{fig:hash_runtimes}
\end{figure}

\subsubsection{Feature Ablation}

Table~\ref{tab:feature_ablation} shows the degradation in performance of BUDDY with either structure features or node features removed with all hyperparameters held fixed. DDI has no node features and BUDDY did not use the node features from PPA, which are one-hot species labels. For datasets Collab and PPA, the structure features dominate performance. For the Citation dataset, the contribution of structure features and node features is almost equal, with a relatively small incremental benefit of adding a second feature class. For the Planetoid datasets adding structure features gives a significant, but relatively small incremental benefit beyond the node features. It should also be noted that combining node and structure features dramatically reduces the variance over runs for the Planetoid and Collab datasets.

\begin{table*}[t]
    \centering
    \resizebox{\textwidth}{!}{%
    \begin{tabular}{l ccccccc}
    \toprule 
         &
         \textbf{Cora} &  
         \textbf{Citeseer} & 
         \textbf{Pubmed} &
         \textbf{Collab} &
         \textbf{PPA} &
         \textbf{Citation2} &
         \textbf{DDI} 
         \\
         
                  \#Nodes &

         2,708 & 
         3,327 &
         18,717 &
         235,868 &
         576,289 &
         2,927,963 &
         4267
          \\
         
         \#Edges &
         5,278 & 
         4,676 &
         44,327 &
         1,285,465 &
         30,326,273 &
         30,561,187 &
         1,334,889 
          \\

         avg deg &
         3.9 &
         2.74 & 
         4.5 &
         5.45 &
         52.62 &
         10.44 &
         312.84
         \\
         
        %  Label Homophily &
        %  0.85 &
        %  0.81 &
        %  0.83 & NA & NA & NA &
        %  NA \\
         
        %  Rayleigh Quotient &
        %  0.85 &
        %  0.80 &
        %  0.83 & 0.29 & 0.44 & 0.15 &
        %  NA \\
         
         metric &

         HR@100 &
         HR@100 & 
         HR@100 &
         HR@50 &
         HR@100 &
         MRR &
         HR@20

         \\ \midrule
          
         \textbf{CN} & 
         $33.92 {\scriptstyle \pm 0.46}$& 
         $29.79 {\scriptstyle \pm 0.90}$& 
         $23.13 {\scriptstyle \pm 0.15}$&
         $56.44 {\scriptstyle \pm 0.00}$&
         $27.65 {\scriptstyle \pm 0.00}$&
         $51.47 {\scriptstyle \pm 0.00}$&
         $17.73 {\scriptstyle \pm 0.00}$
         \\
          
        \textbf{AA} & 
        $39.85 {\scriptstyle \pm 1.34}$&
        $35.19 {\scriptstyle \pm 1.33}$&
        $27.38 {\scriptstyle \pm 0.11}$&
        $64.35 {\scriptstyle \pm 0.00}$&
        $32.45 {\scriptstyle \pm 0.00}$&
        $51.89 {\scriptstyle \pm 0.00}$&
        $18.61 {\scriptstyle \pm 0.00}$
        \\
        
        \textbf{RA} &
        $41.07 {\scriptstyle \pm 0.48}$ &
        $33.56 {\scriptstyle \pm 0.17}$&
        $27.03 {\scriptstyle \pm 0.35}$& 
        $64.00 {\scriptstyle \pm 0.00}$&
        $49.33 {\scriptstyle \pm 0.00}$&
        $51.98 {\scriptstyle \pm 0.00}$&
        $27.60 {\scriptstyle \pm 0.00}$
        \\ \midrule
        
        \textbf{BUDDY} & 
         \first{88.00}{0.44}&
         \first{92.93}{0.27}&
         \first{74.10}{0.78}&
         \first{65.94}{0.58}&
         \first{49.85}{0.20}& 
         \first{87.56}{0.11}&
         \first{78.51}{1.36}
        \\
        \textbf{w\textbackslash 0 Features} & 
        $48.45 {\scriptstyle \pm 4.83}$ &
        $36.33 {\scriptstyle \pm 5.59}$ &
        $53.50 {\scriptstyle \pm 2.23}$ &
        $60.46 {\scriptstyle \pm 0.33}$ &
        $49.85 {\scriptstyle \pm 0.20}$ &
        $82.27 {\scriptstyle \pm 0.10}$ &
        NA
        \\
        
        \textbf{w\textbackslash 0 SF} & 
        $83.90 {\scriptstyle \pm 2.28}$ &
        $91.24 {\scriptstyle \pm 1.44}$ &
        $65.57 {\scriptstyle \pm 2.86}$ &
        $22.83 {\scriptstyle \pm 1.26}$ &
        $1.20 {\scriptstyle \pm 0.21}$ &
        $83.59 {\scriptstyle \pm 0.13}$ &
        $74.01 {\scriptstyle \pm 13.18}$
        \\
        \bottomrule
        
\end{tabular}
}
\caption{Ablation table showing the affects of removing both structure features and node features from BUDDY with all hyperparameters held fixed. Core heuristics are shown for comparison with the w\textbackslash o features row. Confidence intervals are $\pm$ one sd. Planetoid splits are random and the OGB splits are fixed.}
\label{tab:feature_ablation}
\end{table*}         

\section{Full BUDDY Algorithm}

A sketch of the full algorithm is given in Algorithm~\ref{alg:full_algo}

\begin{algorithm}
\caption{Complete Procedure}
\begin{algorithmic}
\State Preprocess structure features and cache propagated node features with Graph $G$ and features $X$
\Procedure{Preprocessing}{G, X}
  \State $H1 = \textsc{MinHashInitialize}(G)$
  \State $H2 = \textsc{HLLInitialize}(G)$
  \State $X' = Propagate(X)$
\EndProcedure

\State Generate edge probability predictions $y$ using an MLP 
\Procedure{Predict}{H1,H2,X'}
  \For{edge $(u,v) \in$ epoch}
    \State $SF_{u,v} = GetStructureFeatures(u,v, H1, H2)$
    \State $x_u = GetNodeFeatures(u, X')$
    \State $x_v = GetNodeFeatures(v, X')$
    \State $y = \textrm{MLP}(SF_{u,v}, x_u, x_v)$
  \EndFor
\EndProcedure

\end{algorithmic}
\label{alg:full_algo}
\end{algorithm}

\section{Learning curves}
\label{app:learning_curves}

 We provide learning curves in terms of number of epochs in Figures~\ref{fig:cora_batch} - \ref{fig:ddi_batch}. 
%  For the ogbl datasets, we used the results from the leaderboard and thus do not have curves for methods we used for comparison. 
%  The learning curves versus epoch can be seen in Figures~\ref{fig:cora_batch} - \ref{fig:ddi_batch} and versus wall clock time in Figures~\ref{fig:cora_time} - \ref{fig:ddi_time}.
 
 \newcommand{\curveCpation}{Loss and train-val-test learning curves as a function of training epoch, averaged over restarts. Solid line represents mean value and shadowed region shows one standard deviation. }

 \begin{figure}
    \centering
    \includegraphics[width=1\textwidth]{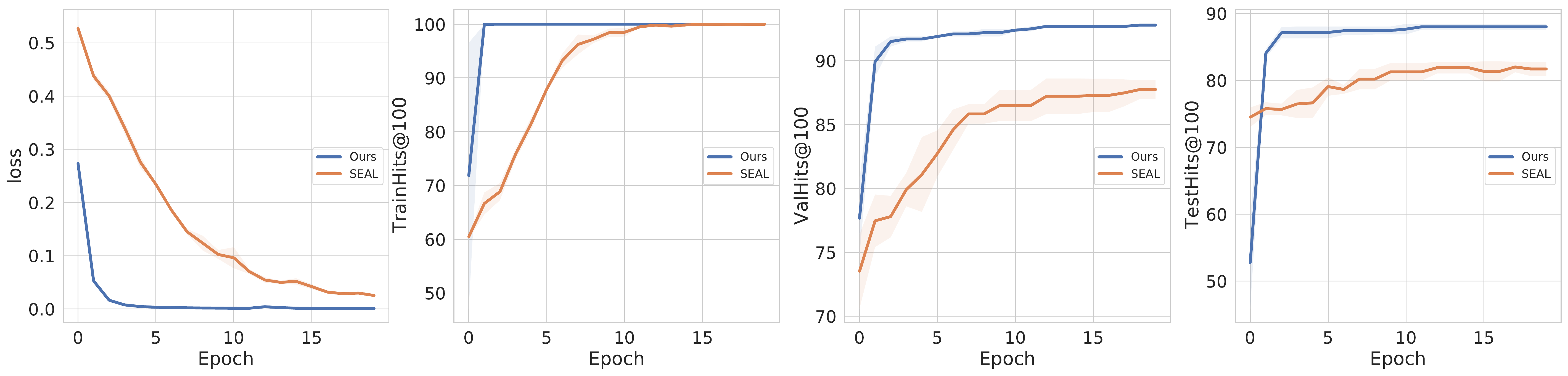}
    \caption{\curveCpation Cora dataset.}
    \label{fig:cora_batch}
\end{figure}

 \begin{figure}
    \centering
    \includegraphics[width=1\textwidth]{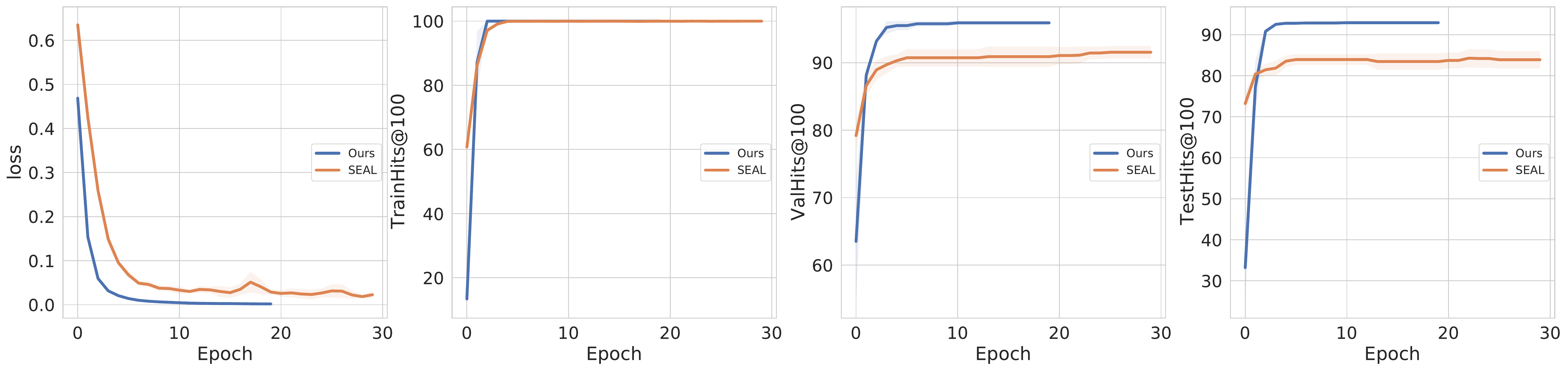}
    \caption{\curveCpation Citeseer dataset.}
    \label{fig:citeseer_batch}
\end{figure}

 \begin{figure}
    \centering
    \includegraphics[width=1\textwidth]{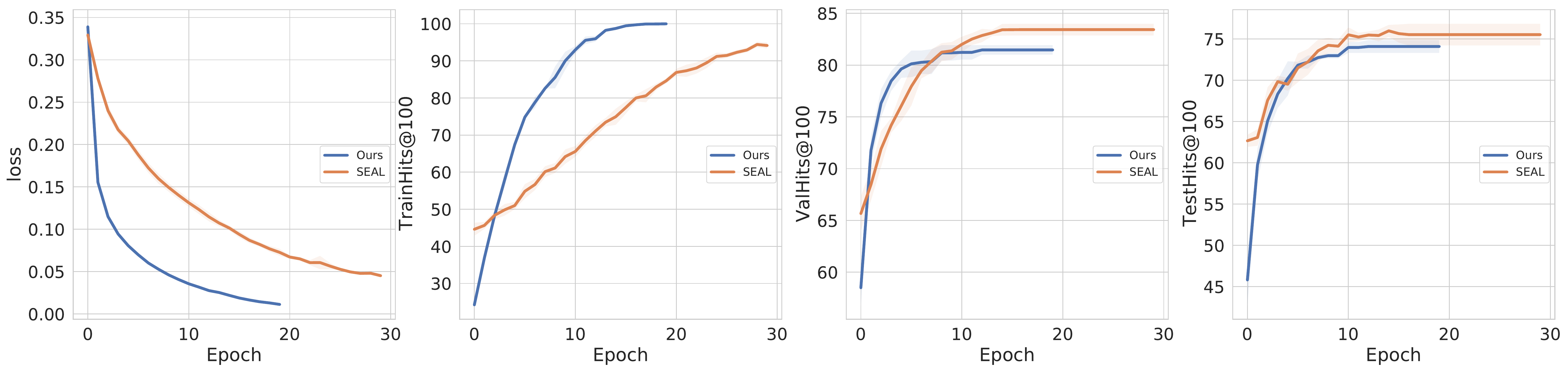}
    \caption{\curveCpation Pubmed dataset.}
    \label{fig:pubmed_batch}
\end{figure}

 \begin{figure}
    \centering
    \includegraphics[width=1\textwidth]{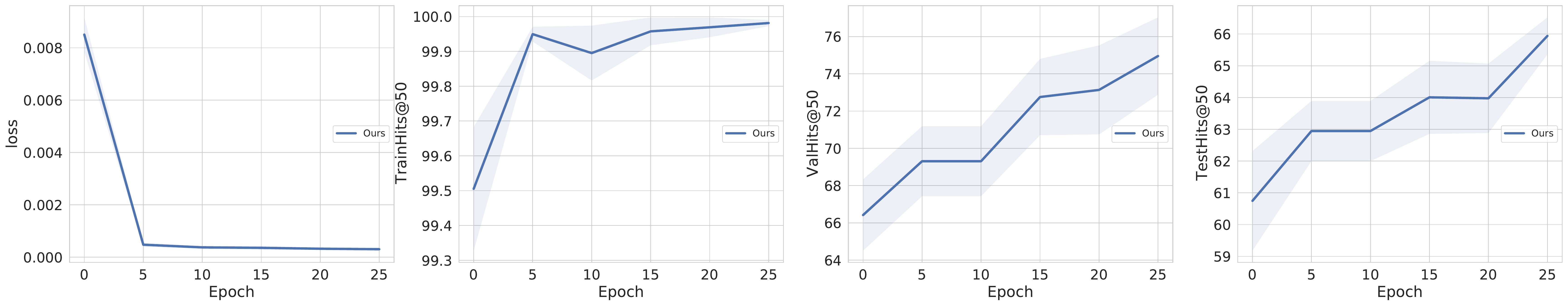}
    \caption{\curveCpation ogbl-Collab dataset.}
    \label{fig:collab_batch}
\end{figure}

 \begin{figure}
    \centering
    \includegraphics[width=1\textwidth]{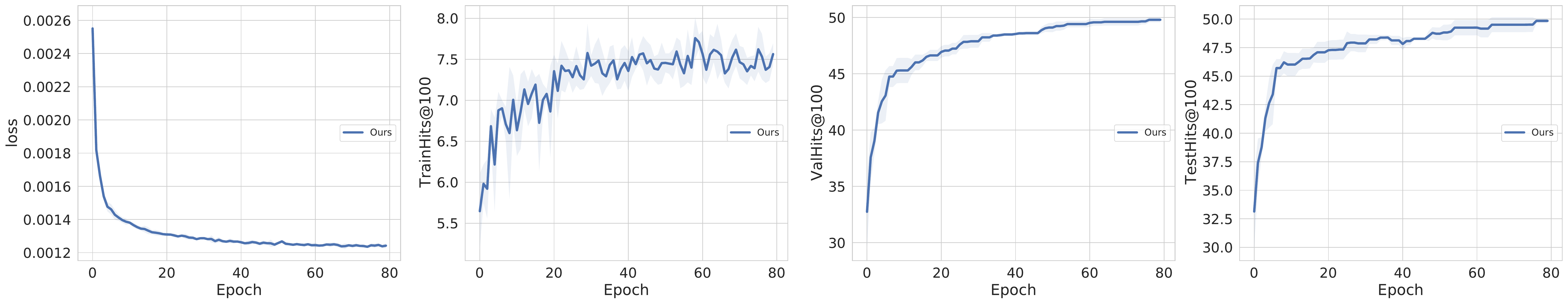}
    \caption{\curveCpation ogbl-PPA dataset.}
    \label{fig:ppa_batch}
\end{figure}

 \begin{figure}
    \centering
    \includegraphics[width=1\textwidth]{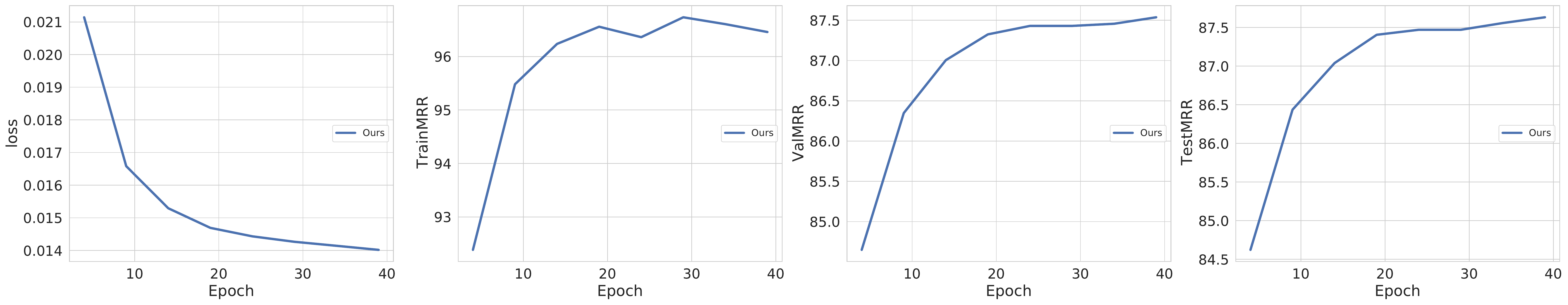}
    \caption{\curveCpation ogbl-Citation2 dataset.}
    \label{fig:citation_batch}
\end{figure}

 \begin{figure}[H]
    \centering
    \includegraphics[width=1\textwidth]{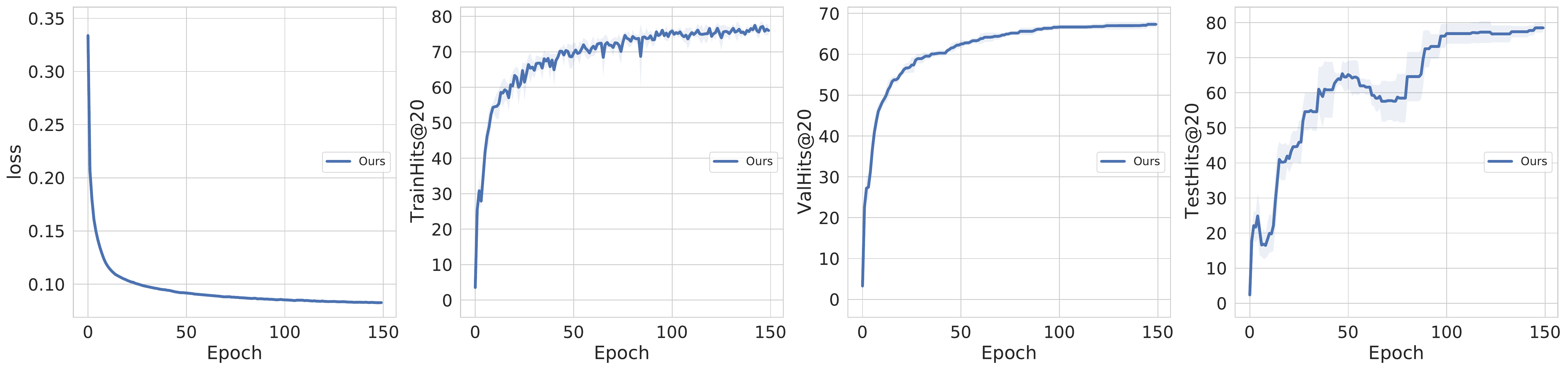}
    \caption{\curveCpation ogbl-DDI dataset.}
    \label{fig:ddi_batch}
\end{figure}

\section{Societal Impact}

We study LP in graph-structured datasets focusing primarily on methods rather than  applications. Our method, in principle, may be employed in industrial recommendation systems. We have no evidence that our method enhances biases, but were it to be deployed, checks would need to be put in place that existing biases were not amplified.

\end{document}